\documentclass[11pt]{article}
\usepackage{amsmath, amssymb, amsthm}
\usepackage{geometry}
\usepackage[hidelinks]{hyperref}
\usepackage{graphicx}

\usepackage{algorithm}
\usepackage{algorithmic}

\usepackage{tikz}
\usetikzlibrary{positioning, shapes.geometric, calc}

\usepackage{empheq}
\usepackage{tikz}

\usepackage{enumitem}

\newtheorem{theorem}{Theorem}
\newtheorem{corollary}{Corollary}

\newcommand{\dashedbox}[1]{%
  \tikz[baseline=(X.base)]%
    \node[draw,dashed,inner sep=6pt] (X) {$\displaystyle #1$};%
}

\usepackage{fancyhdr}
\title{Gaussian Joint Embeddings\\[0.2em] For Self-Supervised Representation Learning}
\author{Yongchao Huang\footnote{[Email: yongchao.huang@abdn.ac.uk] The author welcomes any follow-up work, extensions, and adaptations of these ideas. If this manuscript found useful in future research, appropriate citation would be appreciated. It was developed over many days and nights with the aim of providing a self-contained material for open knowledge sharing, although some (many) errors may still remain after careful review.}}
\date{02/03/2026}

\begin{document}

\maketitle

\begin{abstract}
Self-supervised representation learning often relies on deterministic predictive architectures to align context and target views in latent space. While effective in many settings, such methods are limited in genuinely multi-modal inverse problems, where squared-loss prediction collapses towards conditional averages, and they frequently depend on architectural asymmetries to prevent representation collapse. In this work, we propose a probabilistic alternative based on generative joint modeling. We introduce Gaussian Joint Embeddings (GJE) and its multi-modal extension, Gaussian Mixture Joint Embeddings (GMJE), which model the \textit{joint density} of context and target representations and replace black-box prediction with closed-form conditional inference under an explicit probabilistic model. This yields principled uncertainty estimates and a covariance-aware objective for controlling latent geometry. We further identify a failure mode of naive empirical batch optimization, which we term the \textit{Mahalanobis Trace Trap}, and develop several remedies spanning parametric, adaptive, and non-parametric settings, including prototype-based GMJE, conditional Mixture Density Networks (GMJE-MDN), topology-adaptive Growing Neural Gas (GMJE-GNG), and a Sequential Monte Carlo (SMC) memory bank. In addition, we show that standard contrastive learning can be interpreted as a degenerate non-parametric limiting case of the GMJE framework. Experiments on synthetic multi-modal alignment tasks and vision benchmarks show that GMJE recovers complex conditional structure, learns competitive discriminative representations, and defines latent densities that are better suited to unconditional sampling than deterministic or unimodal baselines.
\end{abstract}

\tableofcontents
\newpage

\section{Introduction} \label{sec:intro}

Recent progress in Self-Supervised Learning (SSL) for high-dimensional data has been strongly influenced by joint-embedding architectures, which have emerged as a major alternative to reconstruction-based objectives \cite{assran2023ijepa,vanassel2025jointembedding}. The central aim of these frameworks is to encode structurally complex inputs, such as augmented or masked views of an image, into a latent representation space in which semantically meaningful dependencies can be captured and predicted \cite{assran2023ijepa,lecun2022path}. Within this joint-embedding paradigm, modern methods are commonly divided into two broad families: \textit{contrastive approaches}, which learn by attracting positive pairs and repelling negative pairs, and \textit{non-contrastive} or \textit{predictive approaches}, which directly align representations across views without explicit negative samples \cite{Chen2020simCLR,grill2020byol,chen2021simsiam,bardes2021vicreg,balestriero2022contrastive}.

In the contrastive paradigm (e.g. InfoNCE \cite{oord2019InfoNCE}, SimCLR \cite{Chen2020simCLR}, MoCo \cite{He2020MoCo}), networks are trained to increase the similarity between paired augmented views while decreasing similarity to non-matching instances \cite{oord2019InfoNCE,Chen2020simCLR,He2020MoCo}. While highly effective at instance-level discrimination, these methods primarily learn augmentation-invariant representations and are commonly understood as balancing two geometric effects: alignment of positive pairs and uniformity of the representation distribution \cite{wang2020understanding}. Because they typically rely on $\ell_2$-normalized embeddings and similarity-based losses, contrastive methods are often analysed on the unit hypersphere, rather than through an explicit model of cross-view covariance structure \cite{wang2020understanding}. In practice, they can also be computationally demanding, since strong performance often depends on access to many negative samples, achieved either through very large batch sizes as in SimCLR or momentum-updated memory queues as in MoCo \cite{Chen2020simCLR,He2020MoCo,Chen2022LargeBatchCL}.

To reduce the computational and design burdens associated with negative sampling, recent SSL research has increasingly explored non-contrastive and predictive architectures, including BYOL \cite{grill2020byol} and JEPA-style methods \cite{assran2023ijepa}. Rather than relying on explicit negative pairs, these approaches learn by predicting one representation from another in latent space \cite{grill2020byol,assran2023ijepa}. However, deterministic prediction can be limiting when the conditional target distribution is genuinely multimodal. In partial-observation or masking settings, a single context representation may be compatible with multiple plausible target representations, especially when the context does not uniquely determine the target \cite{bishop1994mixture}. Under squared-error risk, the Bayes-optimal deterministic predictor is the conditional mean, and in multimodal problems this averaging effect can place predictions between modes, potentially in regions that do not correspond to typical data samples \cite{bishop1994mixture,bishop2006prml}. Moreover, unlike contrastive objectives, predictive regression losses do not explicitly enforce a global dispersion or uniformity constraint on the representation distribution, so additional mechanisms are often introduced to prevent degenerate solutions during training, including predictor asymmetry, stop-gradient operations, EMA target networks, or explicit variance/covariance regularization \cite{grill2020byol,tian2021understanding,bardes2021vicreg,halvagal2023implicit}.

In this work, we propose a shift from deterministic latent prediction towards \textit{generative joint modeling}. We introduce \textbf{Gaussian Joint Embeddings (GJE)}, a probabilistically grounded framework that models the concatenated context and target representations through a joint density, $p(z_c, z_t)$. Minimizing the full joint negative log-likelihood (NLL) yields the standard decomposition
$
-\log p(z_c, z_t) = -\log p(z_t \mid z_c) - \log p(z_c),
$
which separates a conditional matching term from a marginal density term \cite{bishop2006prml}. Under a Gaussian parameterization, the conditional component encourages agreement between matched context-target pairs, while the marginal component introduces covariance-dependent \textit{volume regularization} through the log-determinant term, promoting a non-degenerate latent geometry. In this sense, GJE is designed to mitigate both \textit{instance collapse} and \textit{dimensional collapse} by explicitly modelling variance and covariance structure, thereby reducing reliance on architectural asymmetries such as EMA target networks or stop-gradient mechanisms that are commonly used in non-contrastive SSL \cite{grill2020byol,bardes2021vicreg,tian2021understanding}.

Because a single Gaussian density is often too restrictive to represent multi-modal or highly non-convex latent structure, we further extend this GJE formulation and introduce \textbf{Gaussian Mixture Joint Embeddings (GMJE)}. By leveraging the expressive power and universal approximation properties of Gaussian mixture models \cite{li_mixture_1999,Huang2025GMA}, GMJE can represent highly irregular and multi-modal semantic structure more flexibly than a single-Gaussian joint model \cite{goodfellow2016deep,lindberg2021sparsegmm}. Importantly, we identify a failure mode that can arise when naively implementing probabilistic joint-embedding objectives, which we term the \textit{Mahalanobis Trace Trap}. Specifically, we show that when joint likelihoods are computed using empirical batch covariances, the resulting Mahalanobis coupling term can degenerate so that the intended cross-view attractive signal becomes constant, thereby weakening cross-view learning.
To address this failure mode, we introduce three GMJE architectural remedies spanning both parametric and non-parametric settings. Ultimately, we show that GMJE can learn strong discriminative representations while simultaneously defining a continuous probabilistic density over the latent space, which can be used for subsequent discriminative or generative tasks.

In summary, our core contributions are as follows:
\begin{itemize}[label=-]
    \item \textbf{A new probabilistic paradigm:} we formalize the GJE framework and show that replacing conditional MSE prediction with joint generative modeling yields closed-form conditional predictions, uncertainty estimates, and an objective with built-in variance-covariance control that can mitigate representation collapse while reducing reliance on architectural asymmetries.
    
    \item \textbf{Multi-modal extension via GMJE:} we introduce Gaussian Mixture Joint Embeddings (GMJE) to address the unimodal limitations of single-Gaussian GJE. To optimize this richer joint distribution while avoiding the \textit{Mahalanobis Trace Trap}, we develop several GMJE variants spanning parametric, adaptive, and non-parametric settings, including learnable global prototypes, conditionally dynamic \textit{Mixture Density Networks} (GMJE-MDN), topology-adaptive \textit{Growing Neural Gas} (GMJE-GNG), and the non-parametric \textit{SMC-GMJE} formulation.
    
    \item \textbf{The InfoNCE bridge and SMC memory banks:} we establish a theoretical connection showing that standard contrastive learning can be interpreted as a degenerate, non-parametric limiting case of our GMJE framework. Building on this view, we introduce \textit{Sequential Monte Carlo} (SMC) particle filtering, which replaces unweighted contrastive FIFO queues, as an alternative memory-bank mechanism, yielding dynamically weighted samples that prioritize more informative samples.
    
    \item \textbf{Generative latent densities:} we empirically show that GMJE learns a continuous latent density that is more amenable to generative sampling than standard predictive or contrastive latent spaces. In our latent-sampling experiments, this supports unconditional image synthesis through direct sampling from the learned latent distribution.
\end{itemize}

\section{Background: From Deterministic Prediction to Generative Joint Modeling} \label{sec:background}

In Joint Embedding Predictive Architectures (JEPA), the learning objective is to predict a target representation from a context representation in latent space, typically using a parameterized predictor network. In image-based JEPA, this is implemented by predicting the representations of masked or spatially separated target blocks from an informative context block within the same image \cite{assran2023ijepa}.
Despite its empirical success, classic JEPA still exhibits several conceptual and architectural limitations:
\begin{itemize}[label=-]
    \item \textit{Deterministic without Uncertainty:} classic JEPA is typically trained with a deterministic regression-style objective (e.g. MSE) in latent space. Under squared-error risk, the Bayes-optimal predictor is the conditional mean $\mathbb{E}[z_t \mid z_c]$; hence, in genuinely multi-modal settings, deterministic prediction can average across modes and produce representations that do not correspond to typical targets. Because no conditional distribution is modeled explicitly, this formulation does not natively provide predictive uncertainty estimates \cite{bishop1994mixture,bishop2006prml}.
    
    \item \textit{Risk of Representation Collapse:} standard JEPA does not explicitly optimize a joint density over $(z_c,z_t)$ and therefore does not include a separate marginal density term such as $-\log p(z_c)$. As a result, unlike objectives that explicitly control variance or covariance structure, it does not directly impose a global dispersion constraint on the representation distribution, which can make collapse avoidance more delicate in practice \cite{bardes2021vicreg,tian2021understanding,halvagal2023implicit}.
    
    \item \textit{Training Stability Mechanisms:} as in other non-contrastive joint-embedding methods, stable training often relies on architectural asymmetries or auxiliary regularization mechanisms. In related predictive SSL frameworks, these include stop-gradient operations, predictor asymmetry, and slow-moving target networks updated by exponential moving averages (EMA) \cite{grill2020byol,chen2021simsiam,tian2021understanding}.
    
    \item \textit{Directional Constraints:} standard JEPA is typically formulated as a uni-directional prediction problem ($z_c \to z_t$). Recent work has explored bi-directional extensions, such as BiJEPA \cite{huang2026BiJEPA}, to improve symmetry and generalization. However, it suggests that symmetric prediction can amplify optimization instability and may require explicit norm control, such as $L^2$-normalization, to prevent representation explosion \cite{huang2026BiJEPA}.
\end{itemize}

A comparison of the classic JEPA and our proposed GJE framework is shown in Fig.~\ref{fig:comparison_classicJEPA_GJE}. GJE replaces deterministic latent prediction with generative joint modeling: instead of learning only a predictor $g(z_c)$, we model the concatenated representations $[z_c, z_t]^T$ through a joint distribution. Under this view, prediction is derived from the learned dependency structure rather than solely from a black-box predictor. By optimizing a full joint objective, GJE explicitly retains variance-covariance structure that is absent from purely deterministic prediction losses, and is therefore designed to promote a more diverse and non-degenerate embedding geometry.

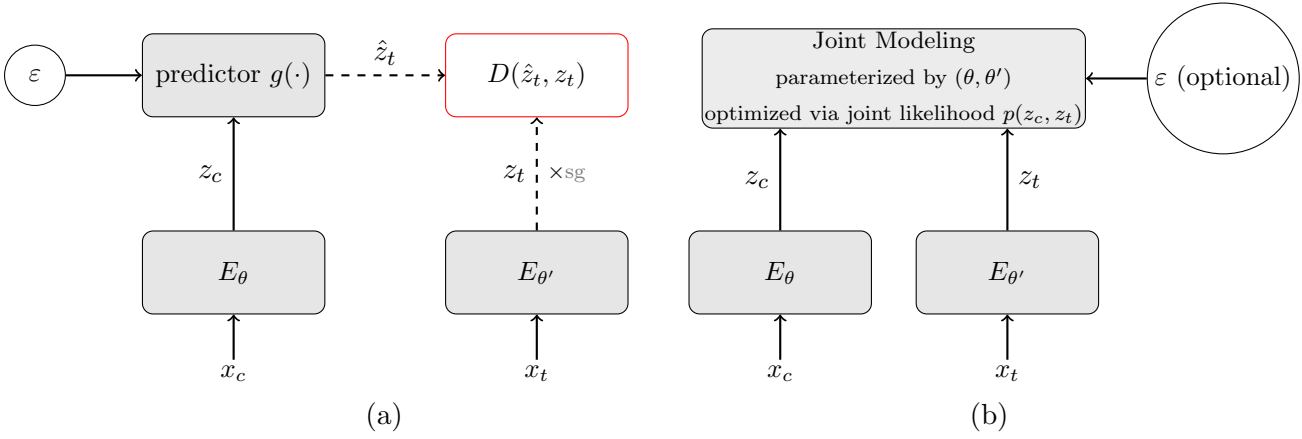
\begin{figure}[t]
\centering
\begin{minipage}[b]{0.45\textwidth}
\centering
\begin{tikzpicture}[
    node distance=1.5cm and 1cm,
    input_var/.style={inner sep=2pt},
    varnode/.style={draw, circle, inner sep=2pt, minimum size=0.8cm},
    component/.style={draw, rounded corners, fill=gray!20, minimum width=2.4cm, minimum height=1.1cm},
    distance_box/.style={draw=red, rounded corners, fill=white, minimum width=2.4cm, minimum height=1.1cm, inner sep=3pt}
]
    % Common bounding box so rows align across the two minipages
    \path[use as bounding box] (-1.6,-0.9) rectangle (6.3,4.5);

    % JEPA Nodes
    \node[input_var] (xc) {\small $x_c$};
    \node[input_var, right=3.5cm of xc] (xt) {\small $x_t$};
    \node[component, above=0.6cm of xc] (xc-enc) {\small $E_\theta$};
    \node[component, above=0.6cm of xt] (xt-enc) {\small $E_{\theta'}$};
    
    \node[component, above=of xc-enc] (pred) {\small predictor $g(\cdot)$};
    \node[varnode, left=1cm of pred] (eps) {\small $\varepsilon$};

    \node[distance_box, above=of xt-enc] (dist) {\small $D(\hat{z}_t, z_t)$};

    % Arrows 
    \draw[->, thick] (xc) -- (xc-enc);
    \draw[->, thick] (xc-enc) -- node[left] {$z_c$} (pred);
    \draw[->, thick] (eps) -- (pred);
    
    \draw[->, thick] (xt) -- (xt-enc);
    \draw[->, thick, dashed] (xt-enc) -- node[left] {$z_t$} node[right] {\scriptsize $\times$\textcolor{gray}{sg}} (dist);
    
    \draw[->, thick, dashed] (pred) -- node[above] {$\hat{z}_t$} (dist);
\end{tikzpicture}

\vspace{-0.7cm}
\centerline{(a)}
\label{fig:jepa}
\end{minipage}
\hfill
\begin{minipage}[b]{0.45\textwidth}
\centering
\begin{tikzpicture}[
    node distance=1.5cm and 1cm,
    input_var/.style={inner sep=2pt},
    varnode/.style={draw, circle, inner sep=2pt, minimum size=0.8cm},
    component/.style={draw, rounded corners, fill=gray!20, minimum width=2.4cm, minimum height=1.1cm},
    joint_modeling_box/.style={
        draw,
        rounded corners,
        fill=gray!15,
        minimum width=4.1cm,
        minimum height=1.1cm,
        inner sep=1pt,
        align=center
    }
]
    % Common bounding box so rows align across the two minipages
    \path[use as bounding box] (-0.8,-0.9) rectangle (6.3,4.5);

    % GJE Nodes
    \node[input_var] (xc) {\small $x_c$};
    \node[input_var, right=2.5cm of xc] (xt) {\small $x_t$};

    \node[component, above=0.6cm of xc] (Etheta) {\small $E_\theta$}; 
    \node[component, above=0.6cm of xt] (Etheta_prime) {\small $E_{\theta'}$}; 

    \node[joint_modeling_box, above=1.35cm of $(Etheta.north)!0.5!(Etheta_prime.north)$, anchor=south] (joint_model) {
        {\footnotesize Joint Modeling} \\
        {\scriptsize parameterized by $(\theta,\theta')$} \\
        {\scriptsize optimized via joint likelihood $p(z_c,z_t)$}
    };

    \node[varnode, right=0.8cm of joint_model] (eps_gje) {\small $\varepsilon\ \text{(optional)}$};
    \draw[->, thick] (eps_gje) -- (joint_model);

    % Arrows for flows into the joint modeling block 
    \draw[->, thick] (xc) -- (Etheta);
    \draw[->, thick] (Etheta.north) -- node[left] {$z_c$} (joint_model.south -| Etheta.north);

    \draw[->, thick] (xt) -- (Etheta_prime);
    \draw[->, thick] (Etheta_prime.north) -- node[right] {$z_t$} (joint_model.south -| Etheta_prime.north);
\end{tikzpicture}

\vspace{-0.7cm}
\centerline{(b)}
\label{fig:gje}
\end{minipage}
\hfill
\caption{A comparison of (a) the classic JEPA framework, based on separate encoding and alignment via deterministic latent prediction, and (b) our proposed GJE framework, which models the joint distribution of context and target representations and derives predictions probabilistically. The component with a red outline in (a) represents the deterministic distance loss specific to classic JEPA. `sg' denotes \textit{stop gradient}, and $\varepsilon$ represents injected side information (e.g. physics, action conditioning or noise).}
\label{fig:comparison_classicJEPA_GJE}
\end{figure}

\paragraph{Notation and Terminology.} Throughout this work, we use the following mathematical notations.
\begin{itemize}[label=-, leftmargin=*]
    \item \textit{Data and Embeddings:} lowercase letters ($x_c, x_t$) denote raw individual context and target views (e.g. augmented images). $z_c, z_t \in \mathbb{R}^d$ denote their respective $d$-dimensional latent representations\footnote{Note: in the brief literature review of Mixture Density Networks in Section.\ref{subsubsec:MDN}, we temporarily preserve Bishop's original use of $c$ to denote dimensionality to maintain historical fidelity.}, and $Z = [z_c^T, z_t^T]^T$ denotes the concatenated joint embedding. Uppercase letters ($Z_c, Z_t$) represent matrices containing finite batches of these embeddings.
    \item \textit{Probabilistic Parameters:} $\Sigma$ denotes the general, theoretical population covariance matrix of a multivariate distribution, while $\mu$ denotes the mean vector. For mixture models, $\pi_k$ represents the mixture weight for component $k$, and $\gamma_k(\cdot)$ represents the data-dependent posterior probability (soft routing responsibility).
    \item \textit{Structural Capacities ($N, M, K$):} to strictly distinguish between optimization batches, global memory, and model capacity, we define:
    \begin{itemize}[label=$\circ$]
        \item $N$: the \textit{batch size}, i.e. the number of current representations processed in a single forward pass.
        \item $M$: the \textit{memory bank size}, i.e. the massive queue of cached target representations, used in contrastive learning and SMC.
        \item $K$: the \textit{number of components/prototypes} in a Gaussian Mixture Model (GMM).
    \end{itemize}
    \item \textit{The Dual Meaning of $K$ and $\Sigma$:} in Section.\ref{sec:GJE}, when discussing dual-GJE, $K$ is exceptionally used to denote the empirical \textit{block Gram matrix} computed over a batch via a positive-definite kernel function (e.g. $K_{cc}, K_{tt}$). Thus, in Section.\ref{sec:GJE}, $\Sigma$ and $K$ represent the same structural concept, distinguishing between pure probabilistic theory ($\Sigma$) and empirical kernel-based computation ($K$). Everywhere else in the manuscript (Section.\ref{sec:GMJE} onwards), scalar $K$ strictly refers to the number of mixture components.
\end{itemize}

\section{Gaussian Joint Embeddings (GJE)} \label{sec:GJE}

To model the alignment between a context view $x_c$ and a target view $x_t$, we introduce the \textit{Gaussian Joint Embedding} (GJE) framework. As shown in Fig.\ref{fig:comparison_classicJEPA_GJE}, the input views are passed through a context encoder $E_\theta$ and a target encoder $E_{\theta'}$ to produce latent embeddings $z_c = E_\theta(x_c) \in \mathbb{R}^{d_c}$ and $z_t = E_{\theta'}(x_t) \in \mathbb{R}^{d_t}$.

Although the general GJE framework can naturally incorporate optional side information $\varepsilon$ (such as physical constraints, action variables in reinforcement learning, or noise parameters) by conditioning the joint distribution, in the following we focus on unconditional representation alignment. Let $z = [z_c^T, z_t^T]^T \in \mathbb{R}^d$, where $d = d_c + d_t$, denote the concatenated joint embedding pair. We assume these representations are drawn from an underlying joint probability distribution $p(z; \theta, \theta')$ parameterized by the neural encoders. The self-supervised objective is then to optimize the encoder parameters by maximizing the expected joint log-likelihood over matching pairs $(x_c, x_t)$ drawn from the dataset $\mathcal{D}$:
\begin{equation} \label{eq:general_gje_objective}
    \max_{\theta, \theta'} \mathbb{E}_{(x_c, x_t) \sim \mathcal{D}} \left[ \log p(E_\theta(x_c), E_{\theta'}(x_t)) \right]
\end{equation}

By the chain rule of probability, the joint objective factorizes as
\[
\log p(z_c, z_t) = \log p(z_t \mid z_c) + \log p(z_c),
\]
which may be interpreted as a predictive conditional term together with a marginal regularization term. To make this optimization computationally tractable and geometrically meaningful, we impose a Gaussian assumption on the embedding space. Depending on how this Gaussian structure is instantiated, the framework can be developed from two complementary perspectives: the \textit{dual (sample) view}, which focuses on the predictive conditional alignment $p(z_t \mid z_c)$ over a batch of $N$ points, and the \textit{primal (feature) view}, which models the full joint density $p(z_c, z_t)$ natively over the $d$ latent dimensions.

% ==========================================
% DUAL SPACE
% ==========================================
\subsection{The Dual Formulation: Sample-Space Gaussian Process Joint Embeddings (GPJE)} \label{subsec:dual_GJE}

In classic JEPA, a deterministic neural predictor maps the context embedding to the target embedding: $\hat{z}_t = g(z_c)$. To generalize this probabilistically, we first formulate the problem in the \textit{dual sample space} by replacing this parametric MLP with a non-parametric probabilistic regression model, which treats the target embedding $z_t$ as a realization of an underlying latent function $f$ evaluated at the context embedding $z_c$:
\begin{equation}
    z_t = f(z_c) + \epsilon, \quad \epsilon \sim \mathcal{N}(\textbf{0}, \sigma^2 I)
\end{equation}
Here, $f \sim \mathcal{GP}(\textbf{0}, k(z_c, z_c'))$ is a Gaussian Process defined by a positive-definite kernel $k$, which dictates that $\text{cov}(f(z_c), f(z_c')) = k(z_c, z_c')$. Rather than modeling the covariance between the $d$ feature channels, this sample-space GP models the similarities between the \textit{individual} data points.

\paragraph{1. Training (The Joint Likelihood).}
During training, we extract a batch of $N$ context and target embedding pairs, $Z_c \in \mathbb{R}^{N \times d_c}$ and $Z_t \in \mathbb{R}^{N \times d_t}$. We compute the Gram matrix $K_{cc} \in \mathbb{R}^{N \times N}$ over the context embeddings, where each element $K_{cc}[i,j] = k_\phi(z_{c,i}, z_{c,j})$, capturing the non-linear topological distances between every sample in the batch.

Treating the target embeddings $Z_t \in \mathbb{R}^{N \times d_t}$ as the observed function values, we derive the training objective by assuming the $d_t$ latent feature channels are independent Gaussian Processes sharing the same context kernel $K_{cc}$. For a single feature channel (represented as a column vector $z_{t,j} \in \mathbb{R}^{N \times 1}$), the standard GP negative marginal log-likelihood, assuming negligible observation noise and dropping constant scaling terms, is\footnote{See Eq.(2.29) and Eq.(2.30) in \cite{rasmussen2006gaussian}. We dropped the constant term.} $\frac{1}{2} z_{t,j}^T K_{cc}^{-1} z_{t,j} + \frac{1}{2} \log |K_{cc}|$. Summing this independent objective across all $d_t$ target dimensions yields the total joint negative log-likelihood. In linear algebra, the summation of these data-fit quadratic forms across all column vectors mathematically condenses into a matrix trace operator ($\sum_{j=1}^{d_t} z_{t,j}^T K_{cc}^{-1} z_{t,j} = \text{Tr}(Z_t^T K_{cc}^{-1} Z_t)$). Meanwhile, the data-independent log-determinant penalty simply accumulates $d_t$ times. Thus, the final Dual-GJE objective used to optimize the encoder weights is:
\begin{equation} \label{eq:dual_gje_loss1}
    \mathcal{L}_{\text{Dual-GJE}}(\theta, \theta', \phi) = \underbrace{\frac{1}{2} \text{Tr}\left(Z_t^T K_{cc}^{-1} Z_t\right)}_{\text{Data-Fit}} + \underbrace{\frac{d_t}{2} \log |K_{cc}|}_{\text{Complexity Penalty}}
\end{equation}
This Dual-GJE optimisation objective is a \textit{conditional} NLL\footnote{See Eq.(2.29) or Eq.(2.30) in \cite{rasmussen2006gaussian}.}, $p(Z_t \mid Z_c)$.
As this GP marginal likelihood is inherently asymmetric - it penalizes the volume of the context space ($\log |K_{cc}|$) but applies no such expansive regularizer to the target space, optimizing Eq.\ref{eq:dual_gje_loss1} symmetrically would result in catastrophic representation collapse. The target encoder would trivially map $Z_t \to \mathbf{0}$ to minimize the quadratic trace penalty, subsequently causing the log-determinant to collapse the context space\footnote{A detailed explanation of this learning collapse can be found in Appendix.\ref{app:dual_collapse_proof}.}. 

To effectively optimize this dual objective, the target encoder $E_{\theta'}$ must be maintained as an Exponential Moving Average (EMA) \cite{grill2020byol,chen2021simsiam,assran2023ijepa,huang2026vjepa} of the context encoder\footnote{In standard Joint Embedding Architectures, the target encoder weights $\theta'$ are typically updated as an Exponential Moving Average (EMA) of the context encoder weights $\theta$, such that $\theta' \leftarrow \tau \theta' + (1 - \tau) \theta$ where $\tau \in [0, 1)$ is a momentum parameter. This asymmetric weight update, combined with a \textit{stop-gradient} operation on the target branch, is a crucial architectural heuristic in models such as BYOL \cite{grill2020byol} and standard JEPA \cite{assran2023ijepa} to prevent representation collapse - a failure mode where the network maps all inputs to a trivial constant vector to perfectly minimize prediction error.}, with a strict stop-gradient applied to $Z_t$. By anchoring the targets, $Z_t$ acts as a diverse, fixed topological reference. Only under this asymmetric EMA framework does the trace term correctly function as an expansionary force, compelling the context embeddings to spread out and strictly match the pairwise diversity of the targets, while the log-determinant term acts as an Occam's razor, regularizing the context manifold to remain smooth.

If explicit homoscedastic observation noise $\sigma^2 I$ is modeled, the objective Eq.\ref{eq:dual_gje_loss1} expands to:
\begin{equation} \label{eq:dual_gje_loss2}
    \mathcal{L}_{\text{JEGP}}(\theta, \theta') = \frac{1}{2} \text{Tr}\left( (K_{cc} + \sigma^2 I)^{-1} Z_t Z_t^T \right) + \frac{d_t}{2} \log |K_{cc} + \sigma^2 I|
\end{equation}
By the cyclic property of the trace operator ($\text{Tr}(AB) = \text{Tr}(BA)$), the data-fit terms in Eq.\ref{eq:dual_gje_loss1} and Eq.\ref{eq:dual_gje_loss2} are mathematically equivalent. Both formulations drive the exact same geometric forces, optimizing the encoders symmetrically. 

The Dual-GJE algorithm is presented in Algo.\ref{alg:exact_dual_gje} in Appendix.\ref{app:training_dual_gje}; a scalable Dual-GJE via Random Fourier Features (RFF) is presented in the same appendix.

\paragraph{2. Inference (The Conditional Prediction).}
Once \textit{the encoders and kernel are trained}, we freeze the training memory bank $Z_t \in \mathbb{R}^{N \times d_t}$ and the training Gram matrix $K_{cc} \in \mathbb{R}^{N \times N}$. 

Given a brand new, unobserved test context image $x_c^*$, we map it to a single embedding $z_c^* \in \mathbb{R}^{1 \times d_c}$. We compute its cross-similarity vector against all $N$ training samples, $k_{*c} \in \mathbb{R}^{1 \times N}$, and its self-similarity scalar $k_{**} \in \mathbb{R}^{1 \times 1}$. Using the Schur complement to condition the joint GP distribution, the predictive distribution for the target $z_t^* \in \mathbb{R}^{1 \times d_t}$ is exactly Gaussian, $\mathcal{N}(\mu_{t|c}, \Sigma_{t|c})$, defined by\footnote{See Eq.(2.25) and Eq.(2.26) in \cite{rasmussen2006gaussian}. If explicit homoscedastic observation noise is modeled (as in Eq.\ref{eq:dual_gje_loss2}), the predictive distribution seamlessly adapts by replacing the inverse Gram matrix $K_{cc}^{-1}$ in both equations with the noise-regularized term $(K_{cc} + \sigma^2 I)^{-1}$.}:
\begin{equation} \label{eq:dual_GJE_mean_and_cov}
\begin{aligned} 
    \mu_{t|c} &= k_{*c} K_{cc}^{-1} Z_t \\
    \Sigma_{t|c} &= k_{**} - k_{*c} K_{cc}^{-1} k_{c*}
\end{aligned}
\end{equation}
Notice the dimensions of Eq.\ref{eq:dual_GJE_mean_and_cov}: $[1 \times N] \times [N \times N] \times [N \times d_t] = [1 \times d_t]$. The prediction is a non-parametric, weighted average over (i.e. linear combination of) the entire training memory bank $Z_t$. \textit{(Note: Because exact inversion requires $\mathcal{O}(N^3)$ compute, GJE supports a highly scalable Random Fourier Feature (RFF) approximation, detailed in Appendix.\ref{app:rff_theory}).}

% ==========================================
% PRIMAL SPACE
% ==========================================
\subsection{The Primal Formulation: Feature-Space Covariance} \label{subsec:primal_GJE}

While the dual formulation evaluates sample similarity, the primal formulation evaluates the geometric shape of the data manifold itself by measuring how the $d$ latent features co-vary. 

\paragraph{1. Training and the Mahalanobis Trace Trap.}
We concatenate the context and target batches into a single joint batch $Z = [Z_c, Z_t] \in \mathbb{R}^{N \times d}$. We assume a joint, zero-mean Multivariate Gaussian generative model $p(z) = \mathcal{N}(\textbf{0}, C_{joint})$, i.e. each concatenated embedding vector $z_i \in \mathbb{R}^d$ is drawn from this Gaussian distribution:
\begin{equation} \label{eq:primal_GJE_joint_Gaussian_pdf}
p(z_i) \sim  \mathcal{N}(z|\textbf{0}, C_{joint}) = \frac{1}{\sqrt{(2\pi)^d |C_{joint}|}} \exp\left(-\frac{1}{2} z^T C_{joint}^{-1} z \right)
\end{equation}
In the context of deep representation learning, this strict zero-mean assumption serves a critical regularizing function: it anchors the latent space at the origin and forces the encoders to delegate all structural learning entirely to the covariance matrix. This prevents the network from trivially minimizing the loss via arbitrary translational mean shifts.

Consequently, the feature covariance matrix $C_{joint}$, with dimension $d \times d$, can be empirically\footnote{For simplicity, we use the same notation $C_{joint}$ for the population and empirically estimated covariance matrix.} computed directly from the origin for the $N$ training data points\footnote{By treating the data matrix $Z \in \mathbb{R}^{N \times d}$ as a stack of $N$ row vectors $Z = [z_1^T, z_2^T, \dots, z_N^T]^T$, its transpose $Z^T \in \mathbb{R}^{d \times N}$ becomes a matrix of side-by-side column vectors $Z^T = [z_1, z_2, \dots, z_N]$. Using block matrix multiplication, multiplying the row of blocks by the column of blocks yields the sum of their outer products: $Z^T Z = [z_1, \dots, z_N] [z_1^T, \dots, z_N^T]^T = \sum_{i=1}^N z_i z_i^T$. Taking the average with $N$ or $N-1$ is trivial.}:
\begin{equation} \label{eq:primal_GJE_empirical_cov}
    C_{joint} = \frac{1}{N} Z^T Z \quad \in \mathbb{R}^{d \times d}
\end{equation}
and it can be partitioned into four blocks:
\begin{equation}
    C_{joint} = \begin{bmatrix} C_{cc} & C_{ct} \\ C_{tc} & C_{tt} \end{bmatrix}
\end{equation}
where $C_{cc} \in \mathbb{R}^{d_c \times d_c}$ and $C_{tt} \in \mathbb{R}^{d_t \times d_t}$ are the auto-covariances, and $C_{ct} \in \mathbb{R}^{d_c \times d_t}$ is the cross-covariance capturing mutual information. Inverting $C_{joint}$ can be done using block matrix inversion and the Schur complement (see Eq.\ref{eq:primal_GJE_cov_inverse} in Appendix.\ref{app:primal_GJE_joint_vs_conditional_objectives}).

Assuming the $N$ samples in the mini-batch are independent and identically distributed, the likelihood of the entire empirical batch is the product of the individual densities, $p(Z) = \prod_{i=1}^N p(z_i)$. To optimize the encoder parameters, we formulate the empirical average of the NLL, defined as $-\frac{1}{N} \log p(Z)$:
\begin{equation*}
\begin{aligned}
-\frac{1}{N} \log \prod_{i=1}^N p(z_i) &= -\frac{1}{N} \sum_{i=1}^N \log p(z_i) \
&= \frac{1}{N} \sum_{i=1}^N \left( \frac{1}{2} z_i^T C_{joint}^{-1} z_i + \frac{1}{2} \log |C_{joint}| + \frac{d}{2} \log(2\pi) \right)
\end{aligned}
\end{equation*}
Dropping the constant $\frac{d}{2} \log(2\pi)$ term, which has no effect on the gradient with respect to the network weights, the strict generative objective minimizing the exact NLL over this $d \times d$ space becomes:
\begin{equation} \label{eq:primal_gje_loss}
    \mathcal{L}_{\text{Primal-GJE}}(\theta, \theta') = \frac{1}{2N} \sum_{i=1}^N z_i^T C_{joint}^{-1} z_i + \frac{1}{2} \log |C_{joint}|
\end{equation}
From an information-theoretic perspective, minimizing this joint objective natively \textit{maximizes the Mutual Information} between the context and target spaces while regularizing their differential entropy. Further, factoring this joint likelihood shows why conditional-only architectures like classic JEPA require asymmetric heuristics to survive (for formal proofs, see Appendix.\ref{app:primal_GJE_joint_vs_conditional_objectives}).

It seems from the primal objective Eq.\ref{eq:primal_gje_loss} that, minimizing the volume penalty $\log |C_{joint}|$ natively enforces compression, while the inverted Mahalanobis distance acts as a repulsive force; however, directly optimizing this exact NLL in the empirical primal space reveals a fatal optimization trap. Because the scalar quadratic form is equal to its own trace, we can apply the cyclic property ($\text{Tr}(AB) = \text{Tr}(BA)$) to the data-fit term:
\begin{equation*}
    \frac{1}{2N} \sum_{i=1}^N \text{Tr}(z_i^T C_{joint}^{-1} z_i) = \frac{1}{2} \text{Tr}\left( C_{joint}^{-1} \left[ \frac{1}{N} \sum_{i=1}^N z_i z_i^T \right] \right) = \frac{1}{2} \text{Tr}(C_{joint}^{-1} C_{joint}) = \frac{d}{2}
\end{equation*}
The inverted Mahalanobis distance algebraically cancels against the empirical covariance, collapsing the entire data-fit term into a dead constant scalar $\frac{d}{2}$. With a constant data-fit term, the expansive gradient is exactly zero. The optimizer is left solely with the volume penalty ($+\frac{1}{2} \log |C_{joint}|$), which it trivially minimizes by shrinking all representations to the origin, causing catastrophic dimensional collapse (details about The “\textit{Mahalanobis Trace Trap}” see Appendix.\ref{app:mahalanobis_trap}).

Therefore, to maintain stable representations in the primal space, we cannot rely on the native Mahalanobis distance for repulsion. We must structurally invert the regularizer, i.e. \textit{maximizing} the differential entropy $\frac{1}{2} \log |C_{joint}|$ to force the features to span the available volume \cite{bardes2021vicreg}, or alternatively, optimize the cross-covariance directly via the Hilbert-Schmidt Independence Criterion (see Appendix.\ref{app:hsic_theory}).

The Primal-GJE algorithm is presented in Algo.\ref{alg:primal_gje} in Appendix.\ref{app:training_primal_gje}.

\paragraph{2. Inference (The Closed-Form Linear Predictor).}
After training, we freeze the four $d \times d$ covariance blocks. Given a new test context vector $z_c^* \in \mathbb{R}^{d_c \times 1}$ (formatted as a column vector), the conditional distribution $p(z_t^* \mid z_c^*) = \mathcal{N}(z_t^*|\mu_{t|c}(z_c^*),\Sigma_{t|c})$ is analytically derived via block matrix inversion (derivations in Appendix.\ref{app:block_matrix}):
\begin{equation} \label{eq:primal_GJE_mean_and_cov}
\begin{aligned} 
    \mu_{t|c} &= C_{tc} C_{cc}^{-1} z_c^* \\
    \Sigma_{t|c} &= C_{tt} - C_{tc} C_{cc}^{-1} C_{ct}
\end{aligned}
\end{equation}
In Eq.\ref{eq:primal_GJE_mean_and_cov}, the term $(C_{tc} C_{cc}^{-1})$ multiplies a $[d_t \times d_c]$ matrix by a $[d_c \times d_c]$ matrix, resulting in a fixed $[d_t \times d_c]$ weight matrix. Therefore, the primal GJE formulation mathematically derives the \textit{optimal linear projection layer} directly from the global feature covariances, requiring no training data in memory during inference\footnote{Training data info is summarised in the matrices $[C_{cc},C_{ct},C_{tc},C_{tt}]$ via Eq.\ref{eq:primal_GJE_empirical_cov}}. It is observed that, the predictive variance is a constant decided by the training set, and it's independent of the test context $z_c^*$.

% ==========================================
% LIMITATIONS -> GMJE
% ==========================================
\subsection{Sample Space vs. Feature Space}

The Dual GJE (Section.\ref{subsec:dual_GJE}) and the Primal GJE (Section.\ref{subsec:primal_GJE}) both rely on the fundamental Gaussian assumption, but they approach the $z_c \rightarrow z_t$ mapping problem from perpendicular geometric perspectives. 

\textit{Dual GJE (GPJE)} operates in the sample space, asking the geometric question: ``how similar is Image A to Image B?'' By evaluating the $N \times N$ Gram matrix computed via inner products between samples (e.g. $K_{cc} \propto Z_c Z_c^T$), this view does not model individual features, but rather collapses feature vectors into holistic similarity scores. It formulates the predictor as a non-parametric Gaussian Process, assuming the target embeddings $Z_t$ are jointly Gaussian with a covariance structure determined by these kernel similarities of their contexts. The resulting predictive distribution (Eq.\ref{eq:dual_GJE_mean_and_cov}) is dynamic; its conditional mean and variance depend on both the stored training instances and the topological distance of the specific test context $z_c^*$. However, as an instance-based memory model, evaluating the GPJE objective (Eq.\ref{eq:dual_gje_loss1}) requires inverting an $N \times N$ matrix, imposing a severe $\mathcal{O}(N^3)$ computational bottleneck. Further, as the GP marginal likelihood is strictly a \textit{conditional} objective\footnote{see Eq.(2.29) or Eq.(2.30) in \cite{rasmussen2006gaussian}.} $p(Z_t \mid Z_c)$, it lacks a native volume regularizer for the context space, thereby mandating asymmetric heuristics like EMA target networks to prevent representation collapse.

\textit{Primal GJE}, conversely, operates in the feature space, asking the perpendicular question: ``how similar are the individual latent features to each other across the dataset?'' It acts as a parametric model. It assumes a single, unimodal joint Gaussian distribution (Eq.\ref{eq:primal_GJE_joint_Gaussian_pdf}) over the concatenated feature space. By computing the inner products between the features themselves, its empirical covariance matrix $C_{joint} \propto Z^T Z$ has dimension $d \times d$, which, for typical modern embedding sizes where $d \ll N$, is massively smaller and computationally cheaper to invert ($\mathcal{O}(d^3)$) than the dual Gram matrix. In this primal view, the resulting conditional prediction (Eq.\ref{eq:primal_GJE_mean_and_cov}) yields a mean that linearly depends on the test context $z_c^*$, but a predictive covariance that is strictly constant, frozen entirely by the global distribution of the training data. Crucially, because it optimizes the full \textit{joint} likelihood\footnote{See Appendix.\ref{app:primal_GJE_joint_vs_conditional_objectives} a comparison of the joint and conditional NLL opjectives.} $p(z_c, z_t)$, it provides native, symmetric regularizers for both the context and target spaces, eliminating the strict requirement for asymmetric stop-gradients.

\textit{The Primal-Dual Equivalence.} While they approach the geometry from different dimensions, the two formulations are fundamentally linked. In fact, if the non-parametric Dual-GJE is implemented using a standard linear kernel, the exact $\mathcal{O}(N^3)$ Gaussian Process NLL algebraically collapses entirely into the $\mathcal{O}(d^3)$ Primal-GJE covariance matrix framework (proof see Appendix.\ref{app:linear_kernel_equivalence}). This strict primal-dual equivalence mathematically proves that, without the use of computationally expensive non-linear kernel approximations (such as RBF or RFF), the dual sample-space optimization locks the latent geometry to the exact same single, rigid density as the primal feature space\footnote{The surprise comes as: with a linear kernel, Dual is exactly equal to Primal. Therefore, they both suffer from the exact same Unimodal Gaussian Trap.}.

\paragraph{Unimodality and Homoscedasticity Issues}
Regardless of whether GJE is formulated in the non-parametric dual space (Eq.\ref{eq:dual_GJE_mean_and_cov}) or the parametric primal space (Eq.\ref{eq:primal_GJE_mean_and_cov}), both architectures suffer from a shared limitation: \textit{unimodality}. Because both formulations rely on a single joint Gaussian structure, they can only issue a unimodal predictive distribution. Consequently, when faced with diverging semantic branches, i.e. when there are multiple options for $z_{t}^*$ for a given $z_c^*$, the predictive mean $\mu_{t|c}$ is mathematically forced to predict the literal average of the modes (slicing through empty space, see proof in Appendix.\ref{app:mse_derivation}). Neither model can physically route predictions to distinct, isolated target branches (a failure case we will see later in a synthetic example in Section \ref{sec:experiments}).

Further, the parametric primal-GJE is constrained by \textit{homoscedasticity}. While the dual-GJE yields a dynamic, distance-aware predictive covariance\footnote{Standard Gaussian Process regression is heteroscedastic with respect to distance (the variance grows as the test point moves further from the training data, because $k_{*c}$ goes to zero and we are left with the prior variance $k_{**}$).} that changes based on the test context $z_c^*$, the primal-GJE predictive covariance ($\Sigma_{t|c}$ in Eq.\ref{eq:primal_GJE_mean_and_cov}) is a rigid, constant matrix determined entirely by the global training distribution. It contains no reference to the specific input $z_c^*$. Thus, to blanket the multi-modal spread of ambiguous inputs, the primal-GJE model must globally inflate its static variance across the entire latent space, degrading predictive certainty even for unambiguous samples. 

These structural limitations, i.e. the inability to model multiple target branches and the rigid covariance of the parametric space, motivate our transition to Gaussian Mixture Joint Embeddings (GMJE) in the primal feature space.

\section{(Primal Space) Gaussian Mixture Joint Embeddings (GMJE)} \label{sec:GMJE}
GJE is strictly unimodal regardless of the kernel used; either classic JEPA or GJE can solve multi-modal alignment\footnote{We present an ambiguous multi-modal alignment task in Section.\ref{subsec:exp_synthetic} to illustrate this.}. In this section, we present the feature (primal) space \textit{Gaussian Mixture Joint Embeddings} (GMJE), a fully generalized, probabilistically grounded framework for multi-modal representation learning. We first motivate the transition from unimodal to multi-modal joint modeling and establish the universal approximation capabilities of Gaussian mixtures. Next, we define the general GMJE formulation (Section.\ref{subsec:GMJE_formulation}), demonstrating how it uses a joint mixture model to natively capture complex dependencies and multi-modal alignments. The remainder of the section introduces a suite of principled methodologies to optimize this joint distribution, naturally divided into parametric and non-parametric branches. In the parametric branch, we explore a highly scalable approach using \textit{relaxed Expectation-Maximization} with fixed global prototypes (Section.\ref{subsec:GMJE_reduced_EM}), extend it to instance-conditioned parameter generation via \textit{Mixture Density Networks} (Section.\ref{subsec:GMJE_MDN}), and resolve the need for a predefined number of components using dynamic topology mapping via \textit{Growing Neural Gas} (Section.\ref{subsec:GMJE_GNG}). Finally, in the \textit{non-parametric} branch, we provide a theoretical proof linking GMJE to Contrastive Learning (Section.\ref{sec:GMJE_Contrastive}), which motivates our solution to memory bank optimization using a dynamic \textit{Sequential Monte Carlo} (SMC) particle filter (Section.\ref{sec:SMC_Memory_Bank}).

\subsection{From Unimodal to Multi-modal Joint Modeling}

While the single Gaussian assumption in GJE provides an elegant closed-form predictor (\ref{eq:GJE_conditional_mean_cov_final_replicated}), its operation in the dual sample space structurally restricts the conditional distribution $p(z_t | z_c)$ to a single, unimodal projection. In standard self-supervised tasks, the true distribution of valid target views given a context view is inherently \textit{multi-modal}. For example, a heavily masked image of a dog's face ($x_c$) could validly be completed into several distinct postures, breeds, or backgrounds ($x_t$). A unimodal projection over-smoothes these relationships, forcing the predicted mean to lie somewhere in the empty space between valid modes. 

To capture the complex, multi-modal alignment of representations, we extend GJE to \textit{Gaussian Mixture Joint Embeddings} (GMJE), switching from dual sample space view to the primal feature space view. However, note that GJE (sample space) and GMJE (feature space) are not directly comparable in their capacities.
Before detailing our specific joint formulation, we first establish the foundational power of Gaussian Mixture Models (GMMs). A general GMM with $K$ components is defined as \cite{reynolds_gaussian_2009}:
\begin{equation}
    q_{\boldsymbol{\theta}}(\mathbf{z}) = \sum_{i=1}^K w_i \mathcal{N}(\mathbf{z} \mid \boldsymbol{\mu}_i, \boldsymbol{\Sigma}_i)
\end{equation}
where the parameters \(\boldsymbol{\theta} = \{w_i, \boldsymbol{\mu}_i, \boldsymbol{\Sigma}_i\}_{i=1}^K\) consist of mixture weights ($w_i \geq 0$, \(\sum_{i=1}^K w_i = 1\)), means (\(\boldsymbol{\mu}_i \in \mathbb{R}^d\)), and positive definite covariance matrices (\(\boldsymbol{\Sigma}_i \in \mathbb{R}^{d \times d}\)).

The transition to a mixture-based approach is mathematically justified by the profound representational capacity of GMMs, formally stated in the following universal density approximation property \cite{li_mixture_1999}:

\begin{theorem}[Universal approximation property of GMMs \cite{li_mixture_1999, Huang2025GMA}] \label{thm:univ_approx_GMMs}
Any sufficiently smooth probability density function $p(\mathbf{z})$ on \(\mathbb{R}^d\) can be approximated arbitrarily closely in $L^1$ distance by a Gaussian Mixture Model (GMM) with a finite, sufficiently large number of components covering its whole support.
\end{theorem}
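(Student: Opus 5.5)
The plan has two stages. First, approximate $p$ by a convolution with a narrow Gaussian. Second, approximate that convolution by a finite sum, which is exactly a GMM.

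\textbf{Stage 1: smoothing.} Let $\phi_\sigma(\mathbf{z}) = \mathcal{N}(\mathbf{z}\mid \mathbf{0}, \sigma^2 I)$ and define $p_\sigma = p * \phi_\sigma$. Since $p\in L^1(\mathbb{R}^d)$ and $\{\phi_\sigma\}$ is an approximate identity, we have $\|p_\sigma - p\|_{L^1}\to 0$ as $\sigma\to 0$. The proof of this is standard: write
\[
p_\sigma(\mathbf{z})-p(\mathbf{z})=\int \phi_\sigma(\mathbf{y})\,[p(\mathbf{z}-\mathbf{y})-p(\mathbf{z})]\,d\mathbf{y}.
\]
Apply Minkowski's integral inequality, then use continuity of translation in $L^1$ and dominated convergence. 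Smoothness of $p$ is not even needed for this step; integrability suffices. Fix $\sigma$ so that $\|p_\sigma-p\|_{L^1}<\varepsilon/3$.

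\textbf{Stage 2: reduction to a compact set.} Since $\int p=1$, we can choose a large cube $Q$ with $\int_{Q^c}p<\varepsilon/3$. Set $p_Q = p\mathbf{1}_Q/\int_Q p$. A short computation gives $\|p-p_Q\|_{L^1}\le 2\int_{Q^c}p$. To avoid a factor of $2$, we can instead pick $Q$ with $\int_{Q^c}p<\varepsilon/6$. We then work with $p_Q * \phi_\sigma$, which is within $\varepsilon/3$ of $p_\sigma$ in $L^1$ by Young's inequality ($\|\phi_\sigma\|_{L^1}=1$).

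\textbf{Stage 3: discretization, the main obstacle.} Partition $Q$ into small cubes $Q_1,\dots,Q_K$ of side $h$ with centers $\boldsymbol{\mu}_i$. Set $w_i=\int_{Q_i}p_Q$, so that $\sum_i w_i=1$, and take the candidate GMM
\[
q(\mathbf{z})=\sum_{i=1}^K w_i\,\mathcal{N}(\mathbf{z}\mid\boldsymbol{\mu}_i,\sigma^2 I).
\]
Then
\[
\|p_Q*\phi_\sigma - q\|_{L^1}\le \sum_i\int_{Q_i}p_Q(\mathbf{y})\,\|\phi_\sigma(\cdot-\mathbf{y})-\phi_\sigma(\cdot-\boldsymbol{\mu}_i)\|_{L^1}\,d\mathbf{y}.
\]
The inner $L^1$ norm is bounded by $C\,|\mathbf{y}-\boldsymbol{\mu}_i|/\sigma \le C\sqrt{d}\,h/\sigma$. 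This follows from the mean value theorem together with $\|\nabla\phi_\sigma\|_{L^1}=O(1/\sigma)$, or from the explicit total-variation formula for two Gaussians with equal covariance, $2\,\mathrm{erf}(|\Delta|/(2\sqrt2\sigma))$. The key point is that this is a uniform modulus of continuity of Gaussian translates in $L^1$, valid for fixed $\sigma$. Choosing $h$ small enough, relative to the already fixed $\sigma$, makes this term less than $\varepsilon/3$.

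The order of quantifiers is where care is needed: first fix $\sigma$, then $Q$, then $h$, which determines $K$. The triangle inequality then gives $\|p-q\|_{L^1}<\varepsilon$. The component centers $\boldsymbol{\mu}_i$ lie in $Q$, which captures essentially all of the support of $p$, matching the theorem's phrase ``covering its whole support.'' Note that ``sufficiently smooth'' is stronger than what the argument uses. If one wanted convergence rates, smoothness would matter, but for the qualitative statement $L^1$-integrability of $p$ suffices.
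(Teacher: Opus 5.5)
Your argument is correct, and it takes a genuinely different route from the paper. The paper's sketch constructs nothing. It cites the $L^1$-denseness of Gaussian mixtures in the space of continuous densities (Li \& Barron, Norets), under a side condition of bounded support or sufficiently fast tail decay, and then reads off the existence of a finite $K$ from the definition of denseness. You instead prove that denseness from three elementary facts. First, Gaussian convolution is an approximate identity in $L^1$. Second, truncating to a cube $Q$ costs exactly $2\int_{Q^c}p$. Third, Gaussian translates with fixed covariance $\sigma^2 I$ are Lipschitz in $L^1$ with constant $\sqrt{2/\pi}/\sigma$ (from the erf formula), and this turns $p_Q*\phi_\sigma$ into a Riemann-type sum that is already a GMM. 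Your route buys several things the citation does not. It is self-contained. It makes the quantifier order ($\sigma$, then $Q$, then $h$) explicit and gives a concrete component count $K=(\ell/h)^d$, with $\ell$ the side of $Q$ and $h$ of order $\varepsilon\sigma/\sqrt{d}$. It shows that the smoothness and tail-decay hypotheses in the statement and sketch are superfluous for the qualitative claim, since any probability density on $\mathbb{R}^d$ works. And because every component shares the covariance $\sigma^2 I$, it proves Corollary~\ref{cor:isotropic_approx} at the same time and supports the shared-covariance design of Section~\ref{subsec:GMJE_reduced_EM}. What the citation route offers in exchange is a pointer to quantitative results, such as Li--Barron-type rates in $K$. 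As you note, those rates genuinely need regularity assumptions, and your construction, with $K$ growing like $h^{-d}$, does not deliver them.
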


\begin{proof} (sketch)
According to results in mixture density estimation theory (e.g. Li \& Barron \cite{li_mixture_1999}, Norets \cite{norets_approximation_2010}), the set of Gaussian mixtures is dense in the space of continuous densities with respect to the $L^1$ norm, provided $p(\mathbf{z})$ has bounded support or decays sufficiently fast at infinity (e.g. faster than any Gaussian tail). 

Let $\epsilon > 0$ be the desired approximation error. The $L^1$ distance\footnote{In density estimation, the Kullback-Leibler (KL) divergence between two densities is often used due to its intrinsic connection with maximum likelihood. However, the $L^1$ norm is utilized in this theoretical analysis for two primary reasons \cite{Huang2025GMA}: its practical probabilistic meaning and its convenient mathematical properties for error decomposition. 
\textbf{(1) Probabilistic interpretation:} the $L^1$ norm has a direct and intuitive connection to how distinguishable two probability distributions are - it is equal to twice the \textit{Total Variation} (TV) distance: $\|p - q\|_{L^1} = 2 \cdot D_{TV}(p, q)$. The TV distance represents the largest possible difference in probability that the two distributions can assign to any single event. For example, if $\|p - q\|_{L^1} = 0.1$, then the TV distance is $0.05$. This guarantees that for any possible event, the probabilities calculated by $p$ and $q$ will differ by at most 5\%. 
\textbf{(2) Mathematical properties for analysis:} unlike KL divergence, the $L^1$ norm is a true \textit{metric}, satisfying symmetry and the triangle inequality. This allows total error to be directly decomposed into manageable parts: $\|p - p_{\text{samples}}\|_{L^1} \le \|p - q_{w_{opt}}\|_{L^1} + \|q_{w_{opt}} - q_{\mathbf{w}_K}\|_{L^1} + \|q_{\mathbf{w}_K} - p_{\text{samples}}\|_{L^1}$. Further, standard bounds for Monte Carlo sampling error are readily available for the $L^1$ distance \cite{taylor_nonparametric_1985,nussbaum_devroye_1988}, allowing them to be plugged directly into analytical proofs.} between the true density $p(\mathbf{z})$ and the approximation $q_{\boldsymbol{\theta}}(\mathbf{z})$ is defined as\footnote{The $L^1$ norm of a function $f(\mathbf{z})$ is defined as $\|f\|_{L^1} = \int |f(\mathbf{z})| d\mathbf{z}$; for any vector $\mathbf{v}=[v_1, v_2, \ldots, v_d]$, the $L^1$ norm is the sum of the absolute values of its components: $\|\mathbf{v}\|_{L^1} = \sum_{i=1}^d |v_i|$.}:
\begin{equation*}
    \| p - q_{\boldsymbol{\theta}} \|_{L^1} = \int_{\mathcal{Z}} |p(\mathbf{z}) - q_{\boldsymbol{\theta}}(\mathbf{z})| d\mathbf{z}
\end{equation*}

By the definition of this denseness property in $L^1$ space, there must exist a finite $K$ and a parameter set \(\boldsymbol{\theta}\) such that $\| p - q_{\boldsymbol{\theta}} \|_{L^1} < \epsilon$. The required number of components $K$ depends purely on the target's geometric complexity (e.g. its curvature and number of modes). Therefore, a GMM with finite $K$ can approximate any continuous target density within an arbitrary $\epsilon$-error bound, mathematically justifying the use of Gaussian mixtures as a flexible, universal representation capable of mapping the highly irregular distributions found in self-supervised learning.
\end{proof}

\begin{corollary}[Universal approximation with isotropic Gaussian mixtures \cite{mclachlan1988mixture, bishop1994mixture}] \label{cor:isotropic_approx}
The universal approximation property holds even if the mixture components are strictly restricted to be isotropic, i.e. $\boldsymbol{\Sigma}_i = \sigma_i^2 I$. A Gaussian mixture model utilizing purely isotropic kernels can approximate any given continuous density function to arbitrary accuracy, provided the mixing coefficients and Gaussian parameters are correctly chosen.
\end{corollary}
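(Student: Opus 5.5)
The approach is an approximate-identity (mollification) argument followed by discretization of the convolution integral. Every component will use the same isotropic kernel $\phi_\sigma(\mathbf{z}) = \mathcal{N}(\mathbf{z}\mid \mathbf{0}, \sigma^2 I)$, so the corollary even holds with a common variance $\sigma_i^2 = \sigma^2$.

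Fix a continuous density $p$ and $\epsilon>0$, and split the error into three parts, as suggested by the triangle-inequality decomposition in the footnote to Theorem~\ref{thm:univ_approx_GMMs}.

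\textbf{Step 1: truncation.} Since $\int p = 1$, choose a compact cube $B$ with $\int_{B^c} p < \epsilon/4$. Let $p_B = p\mathbf{1}_B / \int_B p$. Then $\|p - p_B\|_{L^1} < \epsilon/2$.

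\textbf{Step 2: smoothing.} Since $\{\phi_\sigma\}$ is an approximate identity, $\|p_B * \phi_\sigma - p_B\|_{L^1} \to 0$ as $\sigma\to 0$. This holds for every $L^1$ function: approximate by a $C_c$ function, for which continuity of translation in $L^1$ gives the limit directly. Fix $\sigma$ so this error is below $\epsilon/4$.

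\textbf{Step 3: discretization.} Partition $B$ into $K$ small cubes $B_i$ with centres $\boldsymbol{\mu}_i$, and set $w_i = \int_{B_i} p_B$. These weights are nonnegative and sum to one. The candidate approximant is
\begin{equation*}
q(\mathbf{z}) = \sum_{i=1}^K w_i\, \mathcal{N}(\mathbf{z}\mid \boldsymbol{\mu}_i, \sigma^2 I),
\end{equation*}
which is an isotropic GMM.

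To bound the discretization error, note that $p_B*\phi_\sigma - q = \sum_i \int_{B_i} p_B(\mathbf{y})\,[\phi_\sigma(\cdot-\mathbf{y}) - \phi_\sigma(\cdot-\boldsymbol{\mu}_i)]\,d\mathbf{y}$. By Minkowski's integral inequality,
\begin{equation*}
\|p_B*\phi_\sigma - q\|_{L^1} \le \sup_{\|\mathbf{h}\|\le \delta} \|\phi_\sigma(\cdot-\mathbf{h}) - \phi_\sigma\|_{L^1},
\end{equation*}
where $\delta$ is the cube diameter. The right-hand side tends to $0$ as $\delta\to0$ for the fixed $\sigma$, by continuity of translation; an explicit bound of order $\delta/\sigma$ is also available. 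Choosing $\delta$ small, which forces $K$ large but finite, makes this error below $\epsilon/4$. Summing the three parts gives $\|p-q\|_{L^1}<\epsilon$.

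\textbf{Main obstacle: ordering of limits.} The choice of $\sigma$ must be made first, and $\delta$ (hence $K$) chosen only afterwards. The discretization error scales like $\delta/\sigma$, so shrinking $\sigma$ forces a finer grid. Once this order is respected, each step uses only standard facts about $L^1$ convolutions.

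Continuity of $p$ is not actually needed; the argument works for any $L^1$ density. If a pointwise or sup-norm statement were required, one would additionally use uniform continuity on compacts.
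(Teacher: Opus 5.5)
Your proof is correct. It rests on the same picture as the paper's sketch: cover the region carrying the mass with many small spherical Gaussians whose \emph{centres} encode the geometry, so that cross-dimensional correlation comes from the arrangement of the $\boldsymbol{\mu}_i$ rather than from off-diagonal covariance entries.

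The paper, however, stops at that picture. It illustrates it with a diagonal manifold covered by small circles, but never says how the weights or the variance are chosen and never controls an error. You turn the picture into an actual $L^1$ argument with three steps:
\begin{itemize}
\item truncation to a cube;
\item mollification by $\phi_\sigma$;
\item a Riemann-sum discretization of $p_B*\phi_\sigma$, with weights $w_i=\int_{B_i}p_B$ given by local masses.
\end{itemize}
You bound the discretization error via Minkowski's inequality by $\sup_{\|\mathbf{h}\|\le\delta}\|\phi_\sigma(\cdot-\mathbf{h})-\phi_\sigma\|_{L^1}=O(\delta/\sigma)$, and you respect the essential ordering: fix $\sigma$ first, then shrink $\delta$.

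Your route buys three things.
\begin{itemize}
\item The result no longer depends on Theorem~\ref{thm:univ_approx_GMMs}. The paper labels the statement a corollary, but its sketch does not actually derive it from that theorem. Your argument runs the implication the other way: isotropic mixtures are a subclass of general GMMs, so it proves the theorem as well.
\item A single common variance $\sigma^2$ suffices.
\item Neither continuity of $p$ nor the tail-decay condition invoked in the sketch of Theorem~\ref{thm:univ_approx_GMMs} is needed; any $L^1$ density works.
\end{itemize}
The paper's version offers only intuition, which is all it later uses when justifying isotropic kernels in the MDN discussion.
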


\begin{proof} (sketch)
While a single isotropic Gaussian assumes that all dimensions are statistically independent, a mixture of such Gaussians does not inherit this limitation globally \cite{bishop1994mixture}. Global cross-dimensional dependencies (correlations) are natively captured through the spatial arrangement of the multiple mixture centers $\boldsymbol{\mu}_i$. For example, a highly correlated, diagonal data manifold can be approximated to arbitrary precision by tiling the space with a sufficiently large sequence of small, independent spherical Gaussians arranged along the diagonal. Thus, while introducing full covariance matrices allows for fewer components, it is theoretically unnecessary for universal approximation.
\end{proof}

Having established the universal representational power of GMMs, we now formally define how this mixture-based approach is mathematically integrated into the symmetric joint embedding architecture.

\subsection{The GMM Formulation for Joint Embeddings} \label{subsec:GMJE_formulation}
When introducing mixture models, one could natively define a \textit{Conditional GMM}, e.g. a \textit{Mixture Density Network} \cite{bishop1994mixture,huang2025LLMPrior} where $p(z_t|z_c)$ is directly modeled as a Gaussian mixture with weights predicted by a neural net. However, doing so breaks the theoretical symmetry of GJE: we would lose the explicit modeling of the marginal distribution $p(z_c)$, which, as established in Section.\ref{sec:GJE} (Eq.\ref{eq:GJE_joint_pdf2} and Eq.\ref{eq:GJE_joint_pdf3}), provides the native geometric regularization preventing representation collapse (and forcing us back into the realm of heuristic stop-gradients\footnote{The architectural heuristic of applying a stop-gradient to one branch of a \textit{Siamese network} to prevent representation collapse was popularized by \textit{BYOL} \cite{grill2020byol}, which utilized an asymmetric exponential moving average (EMA) target network. The fundamental mathematical role of this stop-gradient operation in preventing trivial constant collapse, even without momentum or negative pairs, was subsequently isolated and proven in \textit{SimSiam} \cite{chen2021simsiam}. This asymmetric stop-gradient paradigm was later adapted specifically in I-JEPA \cite{assran2023ijepa}.}).

Instead, we maintain the symmetric paradigm by defining a \textit{Joint GMM} with $K$ components:
\begin{equation} \label{eq:GMJE_joint}
    p(z_c, z_t) = \sum_{k=1}^K \pi_k \mathcal{N} \left( \begin{bmatrix} z_c \\ z_t \end{bmatrix} \middle| \begin{bmatrix} \mu_{c,k} \\ \mu_{t,k} \end{bmatrix}, \begin{bmatrix} \Sigma_{cc,k} & \Sigma_{ct,k} \\ \Sigma_{tc,k} & \Sigma_{tt,k} \end{bmatrix} \right)
\end{equation}
where $\pi_k$ are the mixture weights ($\sum \pi_k = 1$). 

From this joint formulation, integrating out the context variable $z_c$ yields the marginal distribution of the target representation space $p(z_t)$, which elegantly remains a closed-form Gaussian mixture (and symmetrically so for $p(z_c)$):
\begin{equation} \label{eq:GMJE_marginal_target_replicated1} \tag{cc.Eq.\ref{eq:GMJE_marginal_target}}
    p(z_t) = \int p(z_c, z_t) dz_c = \sum_{k=1}^K \pi_k \mathcal{N}(z_t \mid \mu_{t,k}, \Sigma_{tt,k})
\end{equation}

Further, by the properties of joint Gaussians, the conditional distribution $p(z_t|z_c)$ derived from this joint mixture is exactly a GMM with closed-form parameters (see Appendix.\ref{app:GMJE_conditional_derivation} for a derivation):
\begin{equation} \label{eq:GMJE_conditional_replicated} \tag{cc.Eq.\ref{eq:GMJE_conditional}}  
    p(z_t | z_c) = \sum_{k=1}^K \gamma_k(z_c) \mathcal{N}(z_t | \mu_{t|c, k}, \Sigma_{t|c, k})
\end{equation}
where the conditional means $\mu_{t|c, k}$ and covariances $\Sigma_{t|c, k}$ are calculated for each component $k$ exactly as in GJE (\ref{eq:GJE_conditional_mean_cov_final_replicated}):

\begin{equation} \label{eq:GMJE_component_mean_cov_replicated} \tag{cc.Eq.\ref{eq:GMJE_component_mean_cov}}
\begin{aligned}
\mu_{t|c, k} &= \mu_{t,k} + \Sigma_{tc,k}\Sigma_{cc,k}^{-1}(z_c - \mu_{c,k}) \\
\Sigma_{t|c, k} &= \Sigma_{tt,k} - \Sigma_{tc,k}\Sigma_{cc,k}^{-1}\Sigma_{ct,k}
\end{aligned}
\end{equation}

And the new data-dependent mixing weights $\gamma_k(z_c)$ are given by:
\begin{equation} \label{eq:GMJE_cond_step3_replicated} \tag{cc.Eq.\ref{eq:GMJE_cond_step3}}
    \gamma_k(z_c) = \frac{\pi_k \mathcal{N}(z_c | \mu_{c,k}, \Sigma_{cc,k})}{\sum_{j=1}^K \pi_j \mathcal{N}(z_c | \mu_{c,j}, \Sigma_{cc,j})}
\end{equation}
This formulation is theoretically profound: the context embedding $z_c$ natively acts as a ``router'' that smoothly evaluates its own marginal likelihood under each mode's context distribution, dynamically selecting which mixture component (predictor) is most appropriate to generate the target. 

An schematic of the GMJE architecture is shown in Fig.\ref{fig:gmje_general}. Note the difference between GMJE and classic JEPA (Fig.\ref{fig:comparison_classicJEPA_GJE}(a)) in that, while standard JEPA relies on a rigid asymmetric design, e.g. utilizing stop-gradients and a lagging Exponential Moving Average (EMA) target network to artificially prevent representation collapse, GMJE operates within a fully symmetric architectural framework. Because the joint probabilistic objective natively penalizes dimensional collapse, gradients can flow freely and simultaneously through both the context and target encoders without requiring any such architectural heuristics.

\begin{figure}[H]
    \centering
    \begin{tikzpicture}[
        node distance=1.8cm,
        box/.style={draw, rounded corners, minimum width=2.8cm, minimum height=1.8cm, align=center, thick},
        mini box/.style={draw, rounded corners, minimum width=1.4cm, minimum height=1.8cm, align=center, thick},
        wide arrow/.style={-stealth, line width=3pt, draw=black!40},
        font=\sffamily
    ]
    
    % Neural Network Group (Invisible container for routing arrows)
    \node (nn) [minimum width=3.2cm, minimum height=1.8cm] {};
    
    % Dual Encoders
    \node (enc_c) [mini box] at ([xshift=-0.8cm]nn.center) {};
    \node (enc_t) [mini box] at ([xshift=0.8cm]nn.center) {};
    
    % Inputs
    \node (input_c) [below of=enc_c, yshift=-0.2cm, align=center] {\textbf{context} \\ $x_c$};
    \node (input_t) [below of=enc_t, yshift=-0.2cm, align=center] {\textbf{target} \\ $x_t$};
    
    % Draw connected NN nodes inside Context Encoder (enc_c)
    \begin{scope}[shift={(enc_c.center)}]
        \foreach \i/\y in {1/-0.5, 2/0, 3/0.5} {
            \node[draw, circle, inner sep=1.2pt] (c1\i) at (-0.35, \y) {};
            \node[draw, circle, inner sep=1.2pt] (c2\i) at (0.35, \y) {};
        }
        \foreach \i in {1,2,3} {
            \foreach \j in {1,2,3} {
                \draw[thin, gray] (c1\i) -- (c2\j);
            }
        }
    \end{scope}
    
    % Draw connected NN nodes inside Target Encoder (enc_t)
    \begin{scope}[shift={(enc_t.center)}]
        \foreach \i/\y in {1/-0.5, 2/0, 3/0.5} {
            \node[draw, circle, inner sep=1.2pt] (t1\i) at (-0.35, \y) {};
            \node[draw, circle, inner sep=1.2pt] (t2\i) at (0.35, \y) {};
        }
        \foreach \i in {1,2,3} {
            \foreach \j in {1,2,3} {
                \draw[thin, gray] (t1\i) -- (t2\j);
            }
        }
    \end{scope}
    
    \node [right=1.5cm of nn, align=center] {\textbf{dual} \\ \textbf{encoders}};
    
    % Joint Embedding Output
    \node (param1) [above of=nn, yshift=0.3cm, align=center] {\textbf{joint embedding} \\ $Z = [z_c^T, z_t^T]^T$};

    % Mixture Model Box (Expanded slightly to fit the labels and K components)
    \node (gmm) [box, above of=param1, yshift=1.0cm, minimum height=3.4cm, minimum width=3.2cm] {};
    
    % Draw shifted and scaled Gaussians inside the box
    \begin{scope}[shift={(gmm.center)}]
        % Top bell (narrow variance, shifted left)
        \draw[thick] (-1.4, 0.9) -- (-0.7, 0.9) .. controls (-0.5, 0.9) and (-0.5, 1.5) .. (-0.3, 1.5) .. controls (-0.1, 1.5) and (-0.1, 0.9) .. (0.1, 0.9) -- (1.4, 0.9);
        \node[font=\scriptsize] at (0.6, 1.2) {$(\mu_1, \Sigma_1)$};

        % Middle bell (wide variance, shifted right)
        \draw[thick] (-1.4, 0.1) -- (-0.4, 0.1) .. controls (-0.1, 0.1) and (-0.1, 0.5) .. (0.3, 0.5) .. controls (0.7, 0.5) and (0.7, 0.1) .. (1.0, 0.1) -- (1.4, 0.1);
        \node[font=\scriptsize] at (-0.6, 0.3) {$(\mu_2, \Sigma_2)$};

        % Bottom bell (medium variance, slightly left)
        \draw[thick] (-1.4, -0.7) -- (-0.6, -0.7) .. controls (-0.35, -0.7) and (-0.35, -0.2) .. (-0.1, -0.2) .. controls (0.15, -0.2) and (0.15, -0.7) .. (0.4, -0.7) -- (1.4, -0.7);
        \node[font=\scriptsize] at (0.7, -0.4) {$(\mu_3, \Sigma_3)$};
        
        % K components indicator
        \node[font=\normalsize] at (0, -1.1) {$\vdots$};
        \node[font=\scriptsize] at (0, -1.4) {$K$ components};
    \end{scope}
    
    % Label indicating global parameters
    \node [right=1.5cm of gmm, align=center] {\textbf{joint mixture} \\ \textbf{model} \\ (global $\mu_k, \Sigma_k, \pi_k$)};
    
    % Output
    \node (output) [above of=gmm, yshift=1.0cm, align=center] {\textbf{joint probability density} \\ $\mathbf{p(z_c, z_t)} = \sum_{k=1}^K \pi_k \mathcal{N}(Z \mid \mu_k, \Sigma_k)$};
    
    % Arrows
    \draw [wide arrow] (input_c) -- (enc_c);
    \draw [wide arrow] (input_t) -- (enc_t);
    \draw [wide arrow] (nn) -- (param1);
    \draw [wide arrow] (param1) -- (gmm);
    \draw [wide arrow] (gmm) -- (output);
    
    \end{tikzpicture}
    \caption{The general Gaussian Mixture Joint Embeddings (GMJE) framework. Dual encoders map the context and target views to a joint embedding $Z$. This embedding is evaluated against a set of $K$ learnable global mixture parameters ($\mu_k, \Sigma_k, \pi_k$) to natively model the joint probability density $p(z_c, z_t)$ within a symmetric architectural framework.}
    \label{fig:gmje_general}
\end{figure}
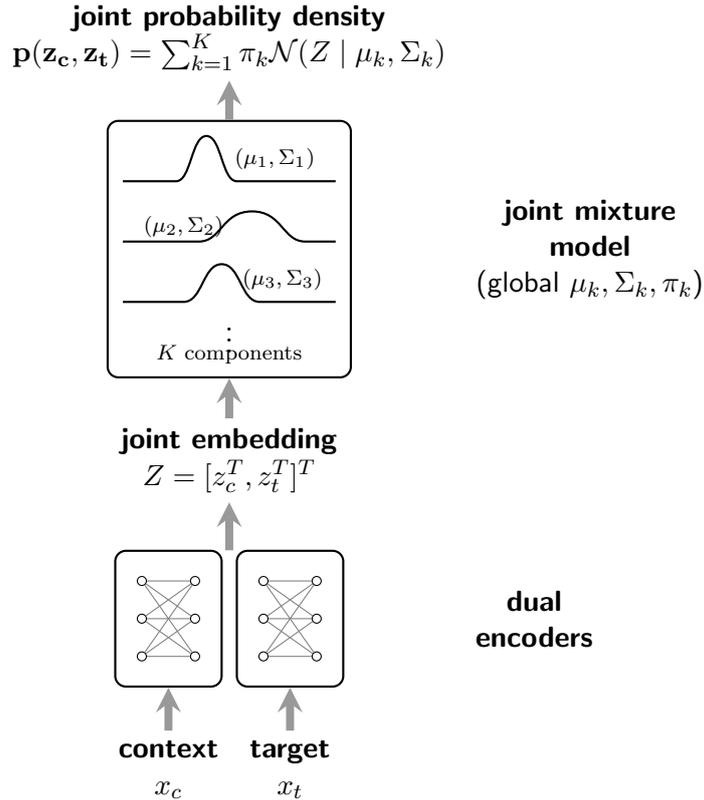

Having established the general GMJE architecture and its corresponding joint probability density, the critical subsequent challenge lies in \textit{effectively optimizing the $K$ mixture components ($\mu_k, \Sigma_k, \pi_k$) to accurately map the underlying data manifold}. Because the true representation space in self-supervised learning is highly complex and dynamically changing during training, we cannot rely on trivial clustering initializations. In the following subsections, we introduce diverse, theoretically grounded methods to learn these parameters. We first discuss the \textbf{parametric GMJE approach} with $K$ components.

\subsection{Scalable Optimization via Learnable Prototypes} \label{subsec:GMJE_reduced_EM}

To train GMJE efficiently on massive datasets, we introduce a parametric, prototypical approach. We maintain a set of $K$ learnable joint mixture means $\mu_k = [\mu_{c,k}^T, \mu_{t,k}^T]^T$, acting as global cluster prototypes, analogous to those used\footnote{Specifically, DeepCluster groups representations offline via $k$-means to generate discrete pseudo-labels and trains the network to predict them via cross-entropy \cite{caron2019deepcluster}. SwAV operates online, computing soft assignments (codes) for one augmented view against learnable prototypes using optimal transport, and predicting this code from a differently augmented view \cite{caron2021swav}.} in \textit{SwAV} \cite{caron2021swav} or \textit{DeepCluster} \cite{caron2019deepcluster}, alongside a shared, parameterized covariance structure $\Sigma$.

Rather than relying on a traditional, often unstable Expectation-Maximization (EM) loop, we perform a \textit{differentiable, relaxed EM directly through backpropagation}. We define our objective as minimizing the Negative Log-Likelihood (NLL) of the joint mixture model for an observed concatenated embedding $Z = [z_c^T, z_t^T]^T$. 

Starting from the joint distribution $p(Z)$ (Eq.\ref{eq:GMJE_joint}) and assuming a shared\footnote{This assumption can be loosened, though keeping it shared drastically improves tractability and numerical stability.} covariance $\Sigma$ across all components for tractability, the NLL expands as:
\begin{align} \label{eq:GMJE_loss_expansion}
    \mathcal{L}_{\text{GMJE-Proto}} &= - \log p(Z) \nonumber \\
    &= - \log \left( \sum_{k=1}^K \pi_k \mathcal{N}(Z \mid \mu_k, \Sigma) \right) \nonumber \\
    &= - \log \left( \sum_{k=1}^K \pi_k \frac{1}{\sqrt{(2\pi)^{2d}|\Sigma|}} \exp \left( - \frac{1}{2} (Z - \mu_k)^T \Sigma^{-1} (Z - \mu_k) \right) \right)
\end{align}

To compute this loss without numerical overflow (a common failure mode when summing small likelihoods), we absorb the mixing weights $\pi_k$ and the Gaussian normalization constants directly into the exponential using the identity $x = \exp(\log x)$. Ignoring the fixed $2\pi$ constant, which does not affect the gradients, the objective simplifies to:
\begin{equation} \label{eq:GMJE_prototypical_loss}
    \mathcal{L}_{\text{GMJE-Proto}} = - \log \left( \sum_{k=1}^K \exp \left( \log \pi_k - \frac{1}{2} (Z - \mu_k)^T \Sigma^{-1} (Z - \mu_k) - \frac{1}{2} \log |\Sigma| \right) \right)
\end{equation}

By formatting the loss purely as the logarithm of a sum of exponentials, we can optimize it using the \textit{Log-Sum-Exp (LSE) trick}\footnote{The Log-Sum-Exp (LSE) trick is a standard technique used to prevent numerical overflow when computing the logarithm of a sum of exponentials. Evaluating $\exp(x_k)$ directly for large values of $x_k$ easily exceeds machine precision, resulting in \texttt{NaN} or infinity. By defining $c = \max_k x_k$, we can leverage the algebraic identity $\log \sum_{k=1}^K \exp(x_k) = c + \log \sum_{k=1}^K \exp(x_k - c)$. This ensures the largest exponent evaluated is exactly $\exp(0) = 1$, safely bounding the sum to prevent overflow while harmlessly allowing negligible values to underflow to zero.} \cite{blanchard2019lse,Huang2025GMA} for numerical stability. Because the covariance $\Sigma$ is shared across all components, its determinant can be elegantly factored out of the LSE operator:

\begin{equation} \label{eq:optimize_GMM_components_with_uniform_covariance}
\begin{aligned}
    \pi_k, \mu_k, \Sigma &= \arg\min_{\pi_k, \mu_k, \Sigma} \mathbb{E}_{Z} \Bigg[ \frac{1}{2} \log |\Sigma| - c(Z) - \log \sum_{k=1}^K \exp \bigg( \log \pi_k - \frac{1}{2} (Z - \mu_k)^T \Sigma^{-1} (Z - \mu_k) - c(Z) \bigg) \Bigg] \\
    \text{where} \quad c(Z) &= \max_{j \in \{1 \dots K\}} \left( \log \pi_j - \frac{1}{2} (Z - \mu_j)^T \Sigma^{-1} (Z - \mu_j) \right)
\end{aligned}
\end{equation}

During the forward pass, the network computes the Mahalanobis distance of the current joint embedding $Z$ to all $K$ prototypes. The optimization of this objective function governs the latent geometry through two distinct, opposing forces:

\textbf{1. The \textit{Pulling Force} (Data-Fit via Soft-Routing):} 
the Log-Sum-Exp (LSE) function mathematically behaves as a smooth, differentiable approximation of the maximum operator ($\max$). When computing the gradients of the LSE term during backpropagation, the derivative yields the exact softmax probabilities\footnote{To see this mathematically, consider the Log-Sum-Exp function $f(\mathbf{x}) = \log \sum_{j=1}^K \exp(x_j)$. By applying the chain rule, the partial derivative with respect to a specific logit $x_k$ is evaluated as $\frac{\partial f}{\partial x_k} = \frac{1}{\sum_{j=1}^K \exp(x_j)} \frac{\partial}{\partial x_k} \sum_{j=1}^K \exp(x_j) = \frac{\exp(x_k)}{\sum_{j=1}^K \exp(x_j)}$, which is precisely the definition of the standard softmax function.}, which perfectly correspond to the posterior mixture responsibilities $\gamma_k(Z)$. Consequently, the majority of the gradient signal is routed exclusively to the dominant prototype, i.e. the one with the smallest Mahalanobis distance to $Z$. This disproportionate gradient update actively pulls the embedding $Z$ and the winning prototype $\mu_k$ closer together, effectively performing a probabilistic, soft-clustering alignment.

\textbf{2. The \textit{Pushing Force} (Geometric Regularization):} 
simultaneously, the model must prevent the trivial collapse of all prototypes to a single point \footnote{If we do not minimize $\tfrac{1}{2}\log|\Sigma|$, the network would become lazy: it would collapse all prototypes $\mu_k$ into the center of the space and inflate $\Sigma$ to be extremely wide, covering all data points at once (a single giant overlapping blob). By compressing the local variance, the discrete means $\mu_k$ are forced to carry the burden of the dataset's global variance.}. This defense is natively governed by the shared log-determinant complexity penalty, $\frac{1}{2} \log |\Sigma|$. This term acts as an \textit{information-theoretic regularizer} that strictly penalizes the internal volume of the cluster covariances. To successfully encode a highly diverse, full-rank dataset without inflating $\Sigma$ (which would incur a massive NLL penalty), the optimization process is mathematically forced to distribute the $K$ prototypes widely across the latent space to cover the data manifold. 

\textbf{The Local Compression vs. Global Spread Trade-off:} 
similar to the GJE joint objective case (Eq.\ref{eq:primal_gje_loss}), ultimately, balance of these two forces creates a profound geometric trade-off. On a local level, minimizing the complexity penalty $\frac{1}{2} \log |\Sigma|$ explicitly \textit{discourages} diversity within \textit{individual} clusters by compressing their covariance volume. However, because the inverse matrix $\Sigma^{-1}$ dictates the Mahalanobis distance in the data-fit term, this extreme local compression causes penalty for misaligned embeddings to explode. To survive this massive data-fit term without lazily inflating $\Sigma$, the network is forced to \textit{encourage} diversity on a \textit{global} level, i.e. to distribute the clusters \textit{widely} in the latent space. It cannot rely on a few massive, overlapping clusters to absorb the dataset; instead, it must spread the discrete prototypes $\mu_k$ extensively across the space to cover all training data points. By aggressively compressing the local variance, the objective natively transfers the burden of representing the dataset's total variance onto the global, discrete prototypes, dynamically learning a rich representation space without requiring heuristic uniformity losses.

\paragraph{Extension to Non-Uniform Covariances.} 
While sharing a single covariance matrix $\Sigma$ across all components drastically improves computational efficiency and numerical stability without losing versatile approximation power (an even stronger realisation of Corollary.\ref{cor:isotropic_approx}), the GMJE framework natively extends to the fully general case where each mixture component $k$ maintains its own distinct covariance matrix $\Sigma_k$. In this unconstrained formulation, the local complexity penalty $\frac{1}{2} \log |\Sigma_k|$ is dependent on the component index $k$ and therefore cannot be factored out of the LSE operator. Following the same derivation, the fully general prototypical loss expands to:
\begin{equation} \label{eq:GMJE_prototypical_loss_nonuniform}
\begin{aligned}
    \mathcal{L}_{\text{GMJE-Proto}} &= - \log \left( \sum_{k=1}^K \exp \left( \log \pi_k - \frac{1}{2} (Z - \mu_k)^T \Sigma_k^{-1} (Z - \mu_k) - \frac{1}{2} \log |\Sigma_k| \right) \right) \\
    &= - c(Z) - \log \sum_{k=1}^K \exp \bigg( \log \pi_k - \frac{1}{2} (Z - \mu_k)^T \Sigma_k^{-1} (Z - \mu_k) - \frac{1}{2} \log |\Sigma_k| - c(Z) \bigg) \\
    \text{where} \quad c(Z) &= \max_{j \in \{1 \dots K\}} \left( \log \pi_j - \frac{1}{2} (Z - \mu_j)^T \Sigma_j^{-1} (Z - \mu_j) - \frac{1}{2} \log |\Sigma_j| \right)
\end{aligned}
\end{equation}
Geometrically, this allows the latent space to capture \textit{heterogeneous} data distributions. Certain prototypes can learn to encode tight, highly specific concepts (driving their individual $|\Sigma_k|$ to be small), while others can encode broader, more variable concepts (tolerating a larger $|\Sigma_k|$ to minimize the data-fit term for diverse samples). However, this increased expressiveness comes at an increased computational cost, requiring $K$ independent matrix inversions ($\Sigma_k^{-1}$) and log-determinants ($\log |\Sigma_k|$) per optimization step, and may require careful regularization (such as diagonal jitter) to prevent any individual $\Sigma_k$ from collapsing to singularity \cite{Huang2025GMA}.

While optimizing fixed global prototypes via relaxed EM successfully captures macroscopic semantic clusters, it relies on static parameters ($\mu_k, \Sigma_k, \pi_k$) that do not adapt to the specific epistemic ambiguity of individual inputs. To achieve true instance-level predictive flexibility while maintaining a fixed number of components $K$, we explore integrating Mixture Density Networks into the GMJE framework.

\subsection{GMJE-MDN: Mixture Density Network (MDN) for GMJE Learning} \label{subsec:GMJE_MDN}

Standard neural networks trained with a sum-of-squares (MSE) or cross-entropy errors are fundamentally limited: they converge to approximate the conditional average of the target data, conditioned on the input (see a proof in Appendix.\ref{app:mse_derivation}). For applications involving multi-valued mappings or complex inverse problems, this conditional average represents a very limited statistical description. Particularly, in a multi-modal space, the average of several correct target values is not necessarily a correct value itself, frequently leading to completely erroneous results.

\subsubsection{MDN} \label{subsubsec:MDN}

To obtain a complete statistical description of the data, Bishop introduced the Mixture Density Network (MDN) \cite{bishop1994mixture}, a framework obtained by combining a conventional neural network with a mixture density model. An MDN can, in principle, represent arbitrary conditional\footnote{MDNs model conditional distributions: in an MDN, those parameters are not fixed constants; instead, they depend on, and are dynamic functions of, an input vector $x$. We feed an input $x$ into a feed-forward neural network, and the network's final layer outputs the specific means $\mu_i(x)$, variances $\sigma_i(x)^2$, and mixing weights $\alpha_i(x)$ to be used for the target variable $t$. The GMM therefore models the conditional density function $p(t|x)$, i.e. the probability of the target $t$, conditioned on the fact that we just observed input $x$ \cite{bishop1994mixture}.} probability distributions in the same way that standard networks represent arbitrary functions. Specifically, the conditional probability density of the target data $t$ is represented as a linear combination of $K$ kernel functions \cite{bishop1994mixture}:
\begin{equation}
    p(t|x) = \sum_{i=1}^K \alpha_i(x) \phi_i(t|x)
\end{equation}
where the mixing coefficients $\alpha_i(x)$ act as prior probabilities conditioned on $x$. For continuous variables, the kernel functions $\phi_i(t|x)$ are typically chosen to be Gaussians:
\begin{equation}
    \phi_i(t|x) = \frac{1}{(2\pi)^{c/2}\sigma_i(x)^c} \exp\left\{-\frac{||t-\mu_i(x)||^2}{2\sigma_i(x)^2}\right\}
\end{equation}
where $c$ denotes the dimensionality of the target vector $t \in \mathbb{R}^c$, $\mu_i(x)$ represents the center of the $i$-th kernel, and $\sigma_i(x)^2$ represents a common, scalar variance\footnote{Note on notation in THIS section: throughout this brief background review, we preserve the original mathematical symbols used by Bishop \cite{bishop1994mixture}. For conceptual clarity, Bishop's dynamic mixing weights $\alpha_i(x)$ are directly analogous to the mixture weights ($\pi_k$ or $\gamma_k$) in our GMJE framework, his Gaussian kernel functions $\phi_i(t|x)$ correspond to our component probability densities $\mathcal{N}(\cdot)$, and the generic input-target variables $(x, t)$ map exactly to our context and target embeddings $(z_c, z_t)$. He used $c$ as a notation of dimension, while we used $d$.}. 
Note that, this formulation inherently assumes the components of the target vector are statistically independent \textit{locally within} each individual kernel (i.e. an isotropic covariance structure). While this assumption can be relaxed by introducing full covariance matrices, such a complication is theoretically unnecessary \cite{bishop1994mixture}; a Gaussian mixture model utilizing purely \textit{isotropic} kernels can approximate any given density function to arbitrary accuracy, provided the mixing coefficients and Gaussian parameters are correctly chosen (see Corollary.\ref{cor:isotropic_approx}) \cite{mclachlan1988mixture}. Thus, the overall representation remains completely general. In particular, even when constructed purely from isotropic components, the full mixture $p(t|x)$ natively captures global cross-dimensional dependencies through the spatial arrangement of its multiple centers $\mu_i(x)$, avoiding the strict global independence assumed by a single isotropic Gaussian\footnote{As illustrated in the proof of Corollary.\ref{cor:isotropic_approx}, here is an intuitive example: imagine a 2D space where we want to model a diagonal line (a strong correlation between dimension $x$ and dimension $y$). A single isotropic Gaussian is just a single perfect circle. It assumes $x$ and $y$ are totally independent globally. It cannot stretch diagonally to fit the line. However, if we take 10 isotropic Gaussians (10 small perfect circles) and line them up diagonally corner-to-corner, the overall shape they create is a diagonal line.}.

In the standard MDN architecture, a feed-forward neural network takes the input $x$ and produces raw outputs (logits), denoted here as $h(x)$, that directly parameterize this mixture model. Specifically, the component centers $\mu_i(x)$ are derived directly from the linear network outputs\footnote{See Eq.(27) in \cite{bishop1994mixture}.}:
\begin{equation}
    \mu_i(x) = h_i^\mu(x)
\end{equation}
The mixing coefficients $\alpha_i(x)$ are constrained by a softmax function applied to their corresponding network outputs to ensure they lie in the range $(0,1)$ and sum to unity\footnote{See Eq.(25) in \cite{bishop1994mixture}.}:
\begin{equation}
    \alpha_i(x) = \frac{\exp(h_i^\alpha(x))}{\sum_{j=1}^K \exp(h_j^\alpha(x))}
\end{equation}
Importantly, the variances $\sigma_i(x)$ are represented using an exponential function of the network outputs\footnote{See Eq.(26) in \cite{bishop1994mixture}.}:
\begin{equation}
    \sigma_i(x) = \exp(h_i^\sigma(x))
\end{equation}
This exponential formulation serves two critical purposes. First, it prevents pathological zero states (since $\exp(\cdot)$ is strictly positive). Second, and more importantly, it allows the variance to be directly and non-linearly parameterized by the input $x$. By doing so, the MDN breaks free from the \textit{homoscedasticity trap} (constant variance) inherent to classic joint Gaussians. For example, when the neural network observes an input that maps to a noisy or ambiguous region (e.g. $x=0$), it can dynamically output a large variance ($\sigma=0.5$). Conversely, when it observes a highly certain input (e.g. $x=1$), it can tighten the distribution to a very small variance ($\sigma=0.05$). 

The network is then optimized end-to-end by minimizing the negative logarithm of the mixture's likelihood\footnote{Same as Eq.(29) in \cite{bishop1994mixture}.} for an observed pattern $(x, t)$:
\begin{equation} \label{eq:MDN_objective}
    E = - \log \left( \sum_{i=1}^K \alpha_i(x) \phi_i(t|x) \right)
\end{equation}

\subsubsection{GMJE-MDN}

The GMJE framework integrates the core probabilistic principles of MDN, but fundamentally \textit{re-architects} them for self-supervised representation learning. A standard MDN is designed specifically to model the conditional probability density $p(t|x)$. As discussed in Section.\ref{subsec:GMJE_formulation}, mapping this directly to our embeddings (i.e. modeling $p(z_t|z_c)$ via a neural network) would break the representational symmetry between the context and target views, losing the explicit modeling of the marginal distribution $p(z_c)$ that provides the native geometric regularization preventing representation collapse.

However, as illustrated in Appendix.\ref{app:identity_collapse} (which was noted later in executing one of our vision benchmarks), if we parameterize the network directly from the full joint embedding $Z = [z_c^T, z_t^T]^T$ introduces a fatal loophole: \textit{Identity Collapse or Information Leakage}. Because the network can observe $z_t$, it trivially learns the identity function ($\mu_k(Z) = Z$), shrinking the predicted variance to zero and ignoring the semantic manifold entirely.

To resolve this conflict, i.e. preventing both identity collapse and representation collapse, we design a GMJE-MDN architecture which explicitly factorizes the joint objective: $-\log p(z_c, z_t) = -\log p(z_c) - \log p(z_t|z_c)$. As shown in Fig.\ref{fig:gmje_mdn}, we enforce a strict \textit{Information Bottleneck}: a parameter network is permitted to observe \textit{only} the context embedding $z_c$, utilizing it to dynamically predict the conditional mixture parameters ($\mu_k(z_c), \Sigma_k(z_c), \pi_k(z_c)$) required to evaluate the target $z_t$. This design separates prediction and evaluation. To maintain the symmetric joint framework and prevent encoder collapse, this conditional MDN objective ($-\log p(z_t|z_c)$) is coupled with a marginal regularization loss on $z_c$ (e.g. via a global parametric covariance constraint). Consequently, GMJE-MDN learns a dynamic, multi-modal alignment generator while preserving the probabilistic guards necessary for robust self-supervised learning.

Formally, the complete end-to-end training objective for the GMJE-MDN architecture evaluates to:
\begin{equation} \label{eq:GMJE_MDN_objective}
\begin{aligned}
    \mathcal{L}_{\text{GMJE-MDN}} &= - \log p(z_c, z_t) \\
    &= \underbrace{- \log p(z_c)}_{\text{Marginal Regularization}} \underbrace{- \log p(z_t \mid z_c)}_{\text{Conditional MDN}} \\
    &= \left( \frac{1}{2} z_c^T \Sigma_c^{-1} z_c + \frac{1}{2} \log |\Sigma_c| \right) - \log \left( \sum_{k=1}^K \pi_k(z_c) \mathcal{N}\Big(z_t \;\Big|\; \mu_k(z_c), \Sigma_k(z_c)\Big) \right)
\end{aligned}
\end{equation}
where $\Sigma_c$ is a global covariance matrix\footnote{This covariance matrix can be tracked via an empirical batch covariance or a moving average during training. If we track $\Sigma_c$ via EMA, then $\Sigma_{\text{EMA}}$ trails behind the current batch. Because it is a historical average, $\nabla_{z} (z^T \Sigma_{\text{EMA}}^{-1} z) = 2 \Sigma_{\text{EMA}}^{-1} z$. The gradients do not cancel. The expansive Mahalanobis barrier works exactly as intended.} governing the marginal distribution of the context space, and the conditional parameters $\mu_k(z_c)$, $\Sigma_k(z_c)$, and $\pi_k(z_c)$ are dynamic functions of the context embedding $z_c$, predicted directly by the parameter network.
During training, \textit{the dual encoders and the parameter network are optimized simultaneously}. By minimizing $\mathcal{L}_{\text{GMJE-MDN}}$ via standard gradient descent, gradients flow backward from the conditional mixture evaluation into the target encoder, and through the parameter network into the context encoder, requiring \textit{no alternating optimization phases or heuristic stop-gradients}.

This explicit factorization in Eq.\ref{eq:GMJE_MDN_objective} is the mathematical key to the architecture's stability. The right-hand term (the MDN objective) acts as a highly expressive, multi-modal \textit{data-fit} routing mechanism. Simultaneously, the left-hand term (the full marginal negative log-likelihood) acts as a compound geometric guard against representation collapse. Similar to GJE (joint loss, Eq.\ref{eq:primal_gje_loss}) and GMJE-Proto (Eq.\ref{eq:optimize_GMM_components_with_uniform_covariance}), minimizing the log-determinant alone would trivially collapse the space to zero volume. However, because a collapsed $\Sigma_c$ forces its inverse $\Sigma_c^{-1}$ to explode, the Mahalanobis distance term ($\frac{1}{2} z_c^T \Sigma_c^{-1} z_c$) fiercely penalizes any degenerate shrinkage. This elegant tension continuously maintains a well-conditioned, full-rank context space, satisfying the symmetric constraints required by self-supervised learning without restricting the target predictions to a rigid linear covariance structure.

\begin{figure}[H]
    \centering
    \begin{tikzpicture}[
        node distance=1.8cm,
        box/.style={draw, rounded corners, minimum width=2.8cm, minimum height=1.8cm, align=center, thick},
        mini box/.style={draw, rounded corners, minimum width=1.4cm, minimum height=1.8cm, align=center, thick},
        wide arrow/.style={-stealth, line width=3pt, draw=black!40},
        font=\sffamily
    ]
    
    % Neural Network Group (Invisible container for routing arrows)
    \node (nn) [minimum width=3.2cm, minimum height=1.8cm] {};
    
    % Inputs
    \node (input_c) [align=center, xshift=-1.5cm] {\textbf{context} \\ $x_c$};
    \node (input_t) [align=center, xshift=1.5cm] {\textbf{target} \\ $x_t$};
    
    % Dual Encoders
    \node (enc_c) [mini box, above of=input_c, yshift=0.2cm] {};
    \node (enc_t) [mini box, above of=input_t, yshift=0.2cm] {};
    
    % Draw connected NN nodes inside Context Encoder (enc_c)
    \begin{scope}[shift={(enc_c.center)}]
        \foreach \i/\y in {1/-0.5, 2/0, 3/0.5} {
            \node[draw, circle, inner sep=1.2pt] (c1\i) at (-0.35, \y) {};
            \node[draw, circle, inner sep=1.2pt] (c2\i) at (0.35, \y) {};
        }
        \foreach \i in {1,2,3} {
            \foreach \j in {1,2,3} {
                \draw[thin, gray] (c1\i) -- (c2\j);
            }
        }
    \end{scope}
    
    % Draw connected NN nodes inside Target Encoder (enc_t)
    \begin{scope}[shift={(enc_t.center)}]
        \foreach \i/\y in {1/-0.5, 2/0, 3/0.5} {
            \node[draw, circle, inner sep=1.2pt] (t1\i) at (-0.35, \y) {};
            \node[draw, circle, inner sep=1.2pt] (t2\i) at (0.35, \y) {};
        }
        \foreach \i in {1,2,3} {
            \foreach \j in {1,2,3} {
                \draw[thin, gray] (t1\i) -- (t2\j);
            }
        }
    \end{scope}
    
    \node [left=1cm of enc_c, align=center] {\textbf{dual} \\ \textbf{encoders}};
    
    % Embeddings
    \node (zc) [above of=enc_c, yshift=0.2cm, align=center] {\textbf{context embed} \\ $z_c$};
    \node (zt) [above of=enc_t, yshift=3.4cm, align=center] {\textbf{target embed} \\ $z_t$};
    
    % Third Neural Network (Parameter Network)
    \node (nn3) [box, above of=zc, yshift=0.6cm, minimum width=3.2cm] {};
    \begin{scope}[shift={(nn3.center)}]
        \foreach \i/\y in {1/-0.5, 2/0, 3/0.5} {
            \node[draw, circle, inner sep=1.2pt] (p1\i) at (-0.6, \y) {};
            \node[draw, circle, inner sep=1.2pt] (p2\i) at (0.6, \y) {};
        }
        \foreach \i in {1,2,3} {
            \foreach \j in {1,2,3} {
                \draw[thin, gray] (p1\i) -- (p2\j);
            }
        }
    \end{scope}
    \node [left=1.8cm of nn3, align=center] {\textbf{parameter} \\ \textbf{network} \\ \textit{(bottleneck)}};

    % MDN Parameters Output
    \node (param2) [above of=nn3, yshift=0.3cm, align=center] {\textbf{mixture parameters} \\ $\mu_k(z_c), \Sigma_k(z_c), \pi_k(z_c)$};

    % Mixture Model Box
    \coordinate (gmm_center) at ($(param2.north)!0.5!(zt.west)$);
    \node (gmm) [box, above of=param2, xshift=1.5cm, yshift=1.2cm, minimum height=3.4cm, minimum width=3.2cm] {};
    
    % Draw shifted and scaled Gaussians inside the box
    \begin{scope}[shift={(gmm.center)}]
        % Top bell (narrow variance, shifted left)
        \draw[thick] (-1.4, 0.9) -- (-0.7, 0.9) .. controls (-0.5, 0.9) and (-0.5, 1.5) .. (-0.3, 1.5) .. controls (-0.1, 1.5) and (-0.1, 0.9) .. (0.1, 0.9) -- (1.4, 0.9);
        \node[font=\scriptsize] at (0.6, 1.2) {$(\mu_1, \Sigma_1)$};

        % Middle bell (wide variance, shifted right)
        \draw[thick] (-1.4, 0.1) -- (-0.4, 0.1) .. controls (-0.1, 0.1) and (-0.1, 0.5) .. (0.3, 0.5) .. controls (0.7, 0.5) and (0.7, 0.1) .. (1.0, 0.1) -- (1.4, 0.1);
        \node[font=\scriptsize] at (-0.6, 0.3) {$(\mu_2, \Sigma_2)$};

        % Bottom bell (medium variance, slightly left)
        \draw[thick] (-1.4, -0.7) -- (-0.6, -0.7) .. controls (-0.35, -0.7) and (-0.35, -0.2) .. (-0.1, -0.2) .. controls (0.15, -0.2) and (0.15, -0.7) .. (0.4, -0.7) -- (1.4, -0.7);
        \node[font=\scriptsize] at (0.7, -0.4) {$(\mu_3, \Sigma_3)$};
        
        % K components indicator
        \node[font=\normalsize] at (0, -1.1) {$\vdots$};
        \node[font=\scriptsize] at (0, -1.4) {$K$ components};
    \end{scope}
    
    \node [right=1.8cm of gmm, align=center] {\textbf{conditional} \\ \textbf{mixture model}};
    
    % Output
    \node (output) [above of=gmm, yshift=1.0cm, align=center] {\textbf{conditional probability density} \\ $\mathbf{p(z_t|z_c)} = \sum_{k=1}^K \pi_k(z_c) \mathcal{N}(z_t \mid \mu_k(z_c), \Sigma_k(z_c))$};
    
    % Arrows
    \draw [wide arrow] (input_c) -- (enc_c);
    \draw [wide arrow] (input_t) -- (enc_t);
    \draw [wide arrow] (enc_c) -- (zc);
    \draw [wide arrow] (enc_t) -- (zt);
    \draw [wide arrow] (zc) -- (nn3);
    \draw [wide arrow] (nn3) -- (param2);
    \draw [wide arrow] (param2) |- ([yshift=-2.5cm, xshift=-0.5cm]gmm);
    \draw [wide arrow] (zt) |- ([yshift=-2.5cm, xshift=0.5cm]gmm);
    \draw [wide arrow] (gmm) -- (output);
    
    \end{tikzpicture}
    \caption{The Information Bottleneck in the GMJE-MDN architecture. To prevent identity collapse, the parameter network must take only the context embedding $z_c$ as input to predict the conditional mixture parameters. These generated parameters are then evaluated against the target embedding $z_t$ inside the conditional mixture model. To prevent representation collapse, this conditional objective is coupled with a marginal loss on $z_c$.}
    \label{fig:gmje_mdn}
\end{figure}
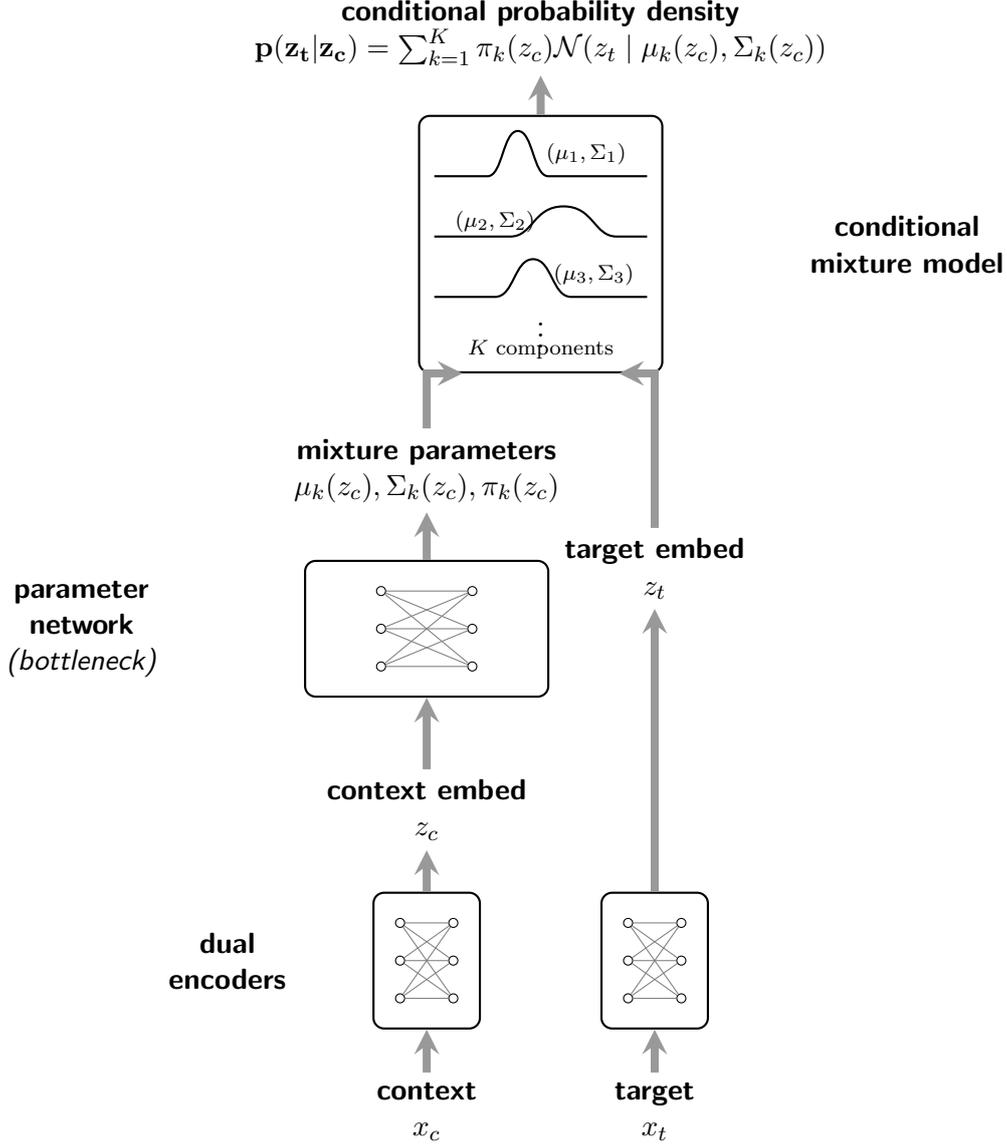

The GMJE-MDN architecture provides powerful, dynamic conditional routing. However, \textit{both the prototypical EM approach and the MDN approach inherently rely on a fixed, pre-defined $K$}. In many real-world scenarios, the true number of semantic modes is unknown, and the underlying data manifold features highly irregular, non-linear ridges. To resolve this structural rigidity and to dynamically discover both $K$ and the manifold's topology, we introduce a growing graph-based approach.

\subsection{GMJE-GNG: Dynamic Prototype Discovery via Growing Neural Gas} \label{subsec:GMJE_GNG}

While the mixture components in the parametric GMJE objective (i.e. the joint context-target embedding distribution, Eq.\ref{eq:GMJE_prototypical_loss}) can be optimized through relaxed EM (Eq.\ref{eq:optimize_GMM_components_with_uniform_covariance}), it relies on a critical structural assumption: \textit{the number of mixture modes, $K$, must be pre-defined}. Further, standard gradient-based optimization or $k$-means initializations are notoriously susceptible to local minima, particularly when the underlying joint distribution features highly overlapping or elongated geometric structures \cite{bishop2006pattern, jain2010data}.

To resolve the sensitivity to initialization and dynamically discover the optimal number of mixture components, we propose an topological initialization and routing mechanism for learning the GMM components based on \textit{Growing Neural Gas} (GNG) \cite{fritzke1995growing}. Unlike standard clustering, GNG is a \textit{topology-preserving}, \textit{density-matching} algorithm. It treats the GMJE global prototypes as nodes in a dynamic graph. Rather than forcing a predetermined $K$, GNG organically spawns new prototypes in regions of high latent error, mapping both the modes and the connecting "ridges" (manifold topology) of the joint representation space.

\subsubsection{Mathematical Formulation}
Let the global prototypes be represented as a set of vertices $\mathcal{V}$ with associated reference vectors (our joint means) $\mu_v \in \mathbb{R}^{2d}$, connected by a set of unweighted edges $\mathcal{E}$. The network is initialized with simply two nodes, $\mathcal{V} = \{v_a, v_b\}$, initialized at random data points. 

During the forward pass, we sample a joint embedding $Z = [z_c^T, z_t^T]^T$ from the data distribution. We locate the two nearest prototypes (the "winner" $s_1$ and "runner-up" $s_2$) measured by the squared Euclidean distance:
\begin{equation} \label{eq:GNG_search_neighbours}
\begin{aligned}
    s_1 &= \arg\min_{v \in \mathcal{V}} \| Z - \mu_v \|^2 \\
    s_2 &= \arg\min_{v \in \mathcal{V} \setminus \{s_1\}} \| Z - \mu_v \|^2
\end{aligned}
\end{equation}

Instead of a harsh winner-takes-all update (which causes local minima), GNG applies a soft topological update. Importantly, rather than relying on a rigid spatial radius to define this neighborhood, GNG determines proximity strictly through the active topological connections (the edge set $\mathcal{E}$). We directly move the winner $s_1$ and all its direct topological neighbors $\mathcal{N}(s_1) = \{n \in \mathcal{V} \mid (s_1, n) \in \mathcal{E}\}$ towards the observed embedding $Z$ using learning rates $\epsilon_b$ and $\epsilon_n$ (where $\epsilon_b \gg \epsilon_n$):
\begin{equation} \label{eq:GNG_prototype_update}
\begin{aligned}
    \mu_{s_1} &\leftarrow \mu_{s_1} + \epsilon_b (Z - \mu_{s_1}) \\
    \mu_n &\leftarrow \mu_n + \epsilon_n (Z - \mu_n), \quad \forall n \in \mathcal{N}(s_1)
\end{aligned}
\end{equation}

To dynamically grow the mixture model, the winner accumulates a local quantization error $E_{s_1} \leftarrow E_{s_1} + \|Z - \mu_{s_1}\|^2$. After every $\lambda$ iterations, the algorithm identifies the prototype $u$ with the maximum accumulated error, and its neighbor $v$ with the highest error. A new prototype mode $w$ is organically spawned precisely between them to alleviate the density mismatch:
\begin{equation}
    \mu_w = \frac{\mu_u + \mu_v}{2}
\end{equation}
The edges $(u,v)$ are removed, and new edges $(u,w)$ and $(v,w)$ are created.

\subsubsection{The GNG Algorithm for GMJE Component Learning}
By sequentially feeding the joint embeddings into this dynamic graph, the continuous data manifold is optimally discretized. The final set of vertices directly yields our $K$ optimal global prototypes $\mu_k$. This explicit procedure is detailed in Algo.\ref{alg:gmje_gng}

\begin{algorithm}[H]
\caption{GMJE-GNG: Dynamic GMJE Component Discovery via GNG}
\label{alg:gmje_gng}
\begin{algorithmic}[1]
\REQUIRE Stream of joint embeddings $Z$, growth interval $\lambda$, max nodes $K_{max}$, learning rates $\epsilon_b, \epsilon_n$, max edge age $a_{max}$, local error decay $\alpha$, global error decay $\beta$.
\STATE Initialize vertices $\mathcal{V} = \{v_1, v_2\}$ with random $Z$, and edges $\mathcal{E} = \{(v_1, v_2)\}$.
\STATE Initialize errors $E \leftarrow \mathbf{0}$.
\FOR{each sampled $Z$ in training step $t$}
    \STATE Find nearest prototype $s_1$ and second nearest $s_2$ to $Z$.
    \STATE Update winner error: $E_{s_1} \leftarrow E_{s_1} + \|Z - \mu_{s_1}\|^2$.
    \STATE Pull $s_1$ towards $Z$: $\mu_{s_1} \leftarrow \mu_{s_1} + \epsilon_b(Z - \mu_{s_1})$.
    \STATE Pull topological neighbors towards $Z$: $\mu_n \leftarrow \mu_n + \epsilon_n(Z - \mu_n)$ for $n \in \mathcal{N}(s_1)$.
    \STATE Create or refresh edge $(s_1, s_2)$ by setting its age to $0$.
    \STATE Increment age of all edges connected to $s_1$.
    \STATE Remove edges with age $> a_{max}$. If any vertices have no remaining edges, remove them from $\mathcal{V}$.
    \IF{$t \pmod \lambda == 0$ \AND $|\mathcal{V}| < K_{max}$}
        \STATE Find $u = \arg\max_{v \in \mathcal{V}} E_v$.
        \STATE Find neighbor $v = \arg\max_{n \in \mathcal{N}(u)} E_n$.
        \STATE Insert new prototype $w$ with $\mu_w = \frac{1}{2}(\mu_u + \mu_v)$.
        \STATE Update graph: $\mathcal{V} \leftarrow \mathcal{V} \cup \{w\}$, add edges $(u,w), (v,w)$, remove $(u,v)$.
        \STATE Decay local error: $E_u \leftarrow \alpha E_u$, $E_v \leftarrow \alpha E_v$, $E_w \leftarrow E_u$.
    \ENDIF
    \STATE Decay all global errors: $E_v \leftarrow \beta E_v$ for all $v \in \mathcal{V}$.
\ENDFOR
\STATE \textbf{Return} Extracted prototypes $\mu_k = \mu_v$ for $v \in \mathcal{V}$.
\end{algorithmic}
\end{algorithm}

Note that, we incorporate an edge aging mechanism (Lines 9-10) to accurately map dynamically changing topologies. Because the prototypes $\mu_k$ are continuously updated and move through the latent space, initial edges may eventually span empty regions between divergent clusters. By incrementing the age of inactive edges and removing those that exceed $a_{max}$, the graph organically severs obsolete topological connections. This severing mechanism is critical, as it allows the algorithm to natively break apart and isolate entirely disjoint semantic clusters without enforcing artificial connectivity across the data manifold.

\textbf{Complexity.} Computationally, the per-sample time complexity of Algo.\ref{alg:gmje_gng} is dominated by the search for the two nearest prototypes (Eq.\ref{eq:GNG_search_neighbours} and Line 4) and the topological neighbor updates (Line 7). Let $D = 2d$ denote the dimensionality of the joint embedding $(z_c,z_t)$. Both operations scale \textit{linearly} with the maximum number of components, yielding an asymptotic time complexity of $\mathcal{O}(K_{max} \cdot D)$ per step. Memory-wise, the algorithm is exceptionally lightweight; it only requires storing the $K_{max}$ prototype vectors and an adjacency matrix to track edge ages, resulting in a space complexity of $\mathcal{O}(K_{max} \cdot D + K_{max}^2)$. Because $K_{max}$ (the number of semantic concepts) is fundamentally bounded and typically orders of magnitude smaller than the massive memory banks (whose size is often denoted as $M$, discussed next in Section.\ref{sec:GMJE_Contrastive}) required by non-parametric contrastive frameworks, this mechanism adds negligible overhead compared to their heavy $\mathcal{O}(M \cdot d)$ memory footprint.

The methods discussed thus far comprise the \textbf{parametric branch} of GMJE, focusing on learning a highly compressed, finite set of structural modes (whether globally static, conditionally dynamic, or topologically grown). But what if we discard the concept of a pre-specified or estimated $K$ entirely, and instead treat every single observed data point as its own mixture component? This question bridges GMJE directly into the realm of \textit{non-parametric density estimation}, with connection to modern \textit{Contrastive Learning}.

\subsection{Data as Mode GMJE (DaM-GMJE): Contrastive Learning as Non-Parametric GMJE} \label{sec:GMJE_Contrastive}
The GMJE framework provides a unified mathematical perspective on modern representation learning; instead of learning $K$ fixed global prototypes, suppose we utilize a non-parametric formulation where we treat every previously encoded data pair in a memory bank\footnote{The memory bank is essentially a dictionary storing the target embeddings for all recent images.} (or current batch) of size $M$, denoted $\{ (z_c^{(m)}, z_t^{(m)}) \}_{m=1}^M$, as an \textit{equiprobable} mode\footnote{Note that, adding a new data point (i.e. treating a sample as a new mode) increases the number of mixture components $M$, but it does not alter the dimensionality of the underlying probability distribution. The GMM continues to operate strictly within the fixed $2d$-dimensional joint continuous space, consistent with the primal, feature space view.}.

This is equivalent to performing \textit{Kernel Density Estimation}\footnote{KDE is a \textit{non-parametric mixture model}. It places a Gaussian bump on every single data point and sums their probabilities: $p(z) \propto \Sigma \mathcal{N}(z|z_i,\sigma^2)$. Because it is a sum of Gaussians, it is multi-modal.} (KDE \cite{Hardle1991KDE}) on the joint distribution. We define the joint distribution as a \textit{uniform} mixture of $M$ joint Gaussians \textit{centered precisely on these data points}:
\begin{equation} \label{eq:GMJE_instance_based_joint}
    p(z_c, z_t) = \frac{1}{M} \sum_{m=1}^M \mathcal{N} \left( \begin{bmatrix} z_c \\ z_t \end{bmatrix} \middle| \begin{bmatrix} z_c^{(m)} \\ z_t^{(m)} \end{bmatrix}, \Sigma \right)
\end{equation}

We note the connection of this approach to standard Contrastive Learning. Contrastive methods inherently assume that, \textit{conditional on being generated by a specific instance $m$}, the augmentations (the positive pairs, e.g. two crops of the same image) forming the context and target views are statistically independent and uniformly spherical \cite{wu2018unsupervisedfeaturelearningnonparametric,hjelm2019learningdeeprepresentationsmutual,Bachman2019learning,He2020MoCo,Chen2020simCLR,wang2020understanding}.
Mathematically, this corresponds to restricting the component covariance $\Sigma$ to an isotropic, block-diagonal matrix governed by a scalar variance (temperature) hyperparameter $\tau$:
\begin{equation*}
    \Sigma = \begin{bmatrix} \tau I & \mathbf{0} \\ \mathbf{0} & \tau I \end{bmatrix}
\end{equation*}
Because the off-diagonal cross-covariance blocks are strictly zero, the joint Gaussian component factorizes exactly into the product of two independent marginals:
\begin{equation}
    p(z_c, z_t) = p(z_c) \times p(z_t) = \frac{1}{M} \sum_{m=1}^M \mathcal{N}(z_c \mid z_c^{(m)}, \tau I) \mathcal{N}(z_t \mid z_t^{(m)}, \tau I)
\end{equation}

Again, if we wish to evaluate the conditional NLL, $-\log p(z_t \mid z_c)$, of a true matching pair $(z_c, z_t)$ under this non-parametric mixture, we first apply Bayes' theorem. By integrating the joint distribution over $z_t$, the marginal distribution trivially evaluates to $p(z_c) = \frac{1}{M} \sum_{j=1}^M \mathcal{N}(z_c \mid z_c^{(j)}, \tau I)$. The conditional distribution is thus:
\begin{equation} \label{eq:GMJE_KDE_cond}
    p(z_t \mid z_c) = \frac{p(z_c, z_t)}{p(z_c)} = \frac{\sum_{m=1}^M \mathcal{N}(z_c \mid z_c^{(m)}, \tau I) \mathcal{N}(z_t \mid z_t^{(m)}, \tau I)}{\sum_{j=1}^M \mathcal{N}(z_c \mid z_c^{(j)}, \tau I)}
\end{equation}

To simplify this into the standard \textit{contrastive} format, we apply two structural assumptions. \textit{First, we assume all embeddings are $L^2$-normalized} ($\|z\|^2 = 1$). Expanding the squared Euclidean distance inside the isotropic Gaussian density yields:
\begin{align} \label{eq:GMJE_L2_expansion}
    \mathcal{N}(z_c \mid z_c^{(m)}, \tau I) &= \frac{1}{(2\pi\tau)^{d/2}} \exp \left( - \frac{1}{2\tau} \| z_c - z_c^{(m)} \|^2 \right) \nonumber \\
    &= \frac{1}{(2\pi\tau)^{d/2}} \exp \left( - \frac{1}{2\tau} (\|z_c\|^2 - 2z_c^T z_c^{(m)} + \|z_c^{(m)}\|^2) \right) \nonumber \\
    &= \underbrace{\frac{e^{-1/\tau}}{(2\pi\tau)^{d/2}}}_{C_0} \exp\left( \frac{z_c^T z_c^{(m)}}{\tau} \right)
\end{align}

\textit{Second}, in instance-level discrimination\footnote{In standard instance-level discrimination (like SimCLR \cite{Chen2020simCLR} or MoCo \cite{He2020MoCo}), $z_c^{(m)}$ and $z_t^{(m)}$ are just two different augmented views of the exact same underlying image (instance $m$). The entire contrastive objective is explicitly designed to make the network invariant to these augmentations, pulling these two vectors together. Over the course of training, they naturally converge to represent roughly the exact same point on the hypersphere.}, the context prototype for mode $m$ is effectively represented by its paired target view in the memory bank\footnote{In practical implementations like MoCo, we do not store both a context bank and a target bank. We only cache the target embeddings (the "keys" from the momentum encoder) in a single memory queue. When computing the denominator (the negative samples), the anchor $z_c$ is compared directly against these stored target keys $z_t^{(j)}$. Therefore, the stored target key $z_t^{(m)}$ is forced to act as the single proxy prototype for the entire instance.}, implying $z_c^{(m)} \approx z_t^{(m)}$. Substituting this and the expanded Gaussian density (Eq.\ref{eq:GMJE_L2_expansion}) into the denominator of Eq.\ref{eq:GMJE_KDE_cond}, the marginal distribution simplifies to $p(z_c) = \frac{C_0}{M} \sum_{j=1}^M \exp(z_c^T z_t^{(j)} / \tau)$.

For the numerator in Eq.\ref{eq:GMJE_KDE_cond}, we evaluate the joint probability $p(z_c, z_t) = \frac{1}{M} \sum_{m=1}^M \mathcal{N}(z_c \mid z_c^{(m)}, \tau I) \mathcal{N}(z_t \mid z_t^{(m)}, \tau I)$. 
Suppose the continuous observation $(z_c, z_t)$ is a true matching pair generated by mode $m^*$. By definition\footnote{In the context of instance-level discrimination, the current observation $(z_c, z_t)$ represents the embeddings of two augmented views from a specific image (instance $m^*$) in the current forward pass. Because $z_t$ is exactly the target representation generated for this instance, it is mathematically identical to the target key $z_t^{(m^*)}$ stored in the memory bank for that specific slot. Consequently, the distance $\|z_t - z_t^{(m^*)}\|^2$ is strictly zero, causing the Gaussian density to evaluate at its absolute peak, which elegantly simplifies the numerator.}, its target embedding $z_t$ perfectly coincides with its own memory bank entry $z_t^{(m^*)}$. The target Gaussian for this specific matching mode evaluates exactly at its peak density, as the distance is zero:
\begin{equation*}
    \mathcal{N}(z_t \mid z_t^{(m^*)}, \tau I) = \mathcal{N}(z_t \mid z_t, \tau I) = \frac{1}{(2\pi\tau)^{d/2}} \exp(0) = (2\pi\tau)^{-d/2} \equiv C_1
\end{equation*}

Conversely, for all non-matching modes $m \neq m^*$, the distance $\|z_t - z_t^{(m)}\|^2 \gg 0$. Under the standard contrastive assumption where the variance $\tau$ is extremely small (the "low temperature" limit), the exponential term $\exp(-\|z_t - z_t^{(m)}\|^2 / 2\tau)$ decays to practically zero. Consequently, the summation over the $M$ joint components is overwhelmingly dominated by the single matching mode $m^*$, allowing us to drop the remaining terms:
\begin{equation*}
    p(z_c, z_t) \approx \frac{1}{M} \mathcal{N}(z_c \mid z_c^{(m^*)}, \tau I) \mathcal{N}(z_t \mid z_t^{(m^*)}, \tau I) = \frac{1}{M} \mathcal{N}(z_c \mid z_c^{(m^*)}, \tau I) \cdot C_1
\end{equation*}

Applying our previous $L^2$-expansion (Eq.\ref{eq:GMJE_L2_expansion}) to the context Gaussian, and substituting our proxy assumption $z_c^{(m^*)} \approx z_t^{(m^*)} = z_t$, the density for the context view evaluates to $C_0 \exp(z_c^T z_t / \tau)$. 

Substituting this simplified numerator, alongside the denominator we derived for $p(z_c)$, back into Bayes' theorem (Eq.\ref{eq:GMJE_KDE_cond}) yields the final conditional probability:
\begin{equation*}
    p(z_t \mid z_c) = \frac{p(z_c, z_t)}{p(z_c)} \approx \frac{\frac{1}{M} C_0 \exp(z_c^T z_t / \tau) \cdot C_1}{\frac{1}{M} C_0 \sum_{j=1}^M \exp(z_c^T z_t^{(j)} / \tau)} = C_1 \frac{\exp(z_c^T z_t / \tau)}{\sum_{m=1}^M \exp(z_c^T z_t^{(m)} / \tau)}
\end{equation*}

Taking the negative natural logarithm, the scaling constants separate cleanly into an additive constant $C = -\log C_1$, and the log-likelihood factorizes into:
\begin{equation} \label{eq:InfoNCE_bridge}
    -\log p(z_t \mid z_c) = - \log \frac{\exp(z_c^T z_t / \tau)}{\sum_{m=1}^M \exp(z_c^T z_t^{(m)} / \tau)} + C
\end{equation}
Remarkably, Eq.\ref{eq:InfoNCE_bridge} is precisely the \textbf{InfoNCE loss} \cite{oord2019InfoNCE} heavily utilized in state-of-the-art Contrastive Learning frameworks (e.g. SimCLR \cite{Chen2020simCLR}, MoCo \cite{He2020MoCo}). This connection between contrastive learning and our GMJE constitutes a major theoretical result of this work: \textit{standard Contrastive Learning is mathematically proven to be a special, degenerate case of the GMJE framework}. Specifically, it is estimating a density using a \textit{non-parametric}, \textit{isotropic} GMM.

Establishing this mathematical equivalence required two structural assumptions which reveal the theoretical shortcomings of standard contrastive methods and articulates the advantages of our fully generalized GMJE approach:
\begin{itemize}[label=-]
    \item \textit{The Isotropic, Independent Covariance Assumption:} by restricting the covariance $\Sigma$ of the instance-based joint distribution (Eq.\ref{eq:GMJE_instance_based_joint}) to an isotropic block-diagonal matrix ($\tau I$), contrastive methods inherently assume all latent dimensions vary equally and independently. More critically, they force the cross-covariance block ($\Sigma_{ct}$) to strictly zero. Mathematically, setting $\Sigma_{ct} = \mathbf{0}$ breaks all relational links between the context and target spaces, partitioning them into disconnected universes. To force alignment despite this mathematical flaw, contrastive frameworks must rely on a brute-force architectural workaround: they mandate strict \textit{Siamese weight sharing} ($\theta = \theta'$, or slow copying) and define positive pairs via dot-product maximization (Eq.\ref{eq:InfoNCE_bridge}), which physically forces the representations to occupy the exact same coordinate on the unit hypersphere ($z_c \approx z_t$). Consequently, contrastive learning acts strictly as an \textit{invariance learner}: it minimizes loss by systematically erasing any dynamic features that make the two views distinct (e.g. erasing the semantic difference between a dog "sitting" vs "running"). In contrast, by evaluating a full covariance structure, GMJE preserves the cross-covariance block ($\Sigma_{ct}$) which natively models the transformation rules between the two states. This allows $z_c$ and $z_t$ to reside at distinct coordinates (preserving rich, view-specific descriptive features) while flawlessly aligning them via the learned cross-view mutual information.
    \item \textit{The Non-Parametric, Instance-Level Assumption:} by treating every individual data point as an independent mode (the non-parametric KDE formulation), contrastive learning ignores global semantic clustering. Because the objective operates strictly at the instance level without any semantic awareness, it inherently suffers from the ``class collision'' (or ``false negative'') problem \cite{khosla2020supervised, chuang2020debiased}. In standard instance-level discrimination, any two distinct images in a batch or memory queue are mathematically forced to be negative samples and are repelled in the latent space, even if they represent the exact same underlying concept (e.g. two different images of a golden retriever). This forces the network to expend representational capacity memorizing arbitrary, instance-level granularities to keep semantically identical images apart, rather than learning generalized class features. Further, accurately estimating the marginal distribution (the denominator) using an unweighted KDE requires an enormous number of modes $M$, necessitating computationally expensive, massive memory banks. Transitioning to the \textit{parametric} GMJE (Sections.\ref{subsec:GMJE_reduced_EM} to \ref{subsec:GMJE_GNG}) resolves these class collision and memory bottlenecks by replacing the $M$ instances with $K \ll M$ learnable semantic prototypes (i.e. the parametric GMJE approach).
\end{itemize}

\paragraph{Empirical Optimization and the Mahalanobis Trace Trap.}
While the theoretical advantages of a full-covariance generative objective are profound, realizing them empirically introduces a critical optimization vulnerability. As detailed in Appendix \ref{app:mahalanobis_trap}, attempting to optimize the pure joint distribution using a dynamically computed empirical batch covariance triggers the "\textit{Mahalanobis Trace Trap}". This algebraic cancellation mathematically eradicates the exact cross-view pulling force ($C_{ct}^{-1}$) we wish to preserve, causing representations to disconnect and dimensionally collapse. Consequently, the three advanced formulations proposed in this work, i.e. Parametric Decoupling via Prototypes (Section \ref{subsec:GMJE_reduced_EM}), Conditional Factorization via MDN (Section \ref{subsec:GMJE_MDN}), and Dynamic Non-Parametric Density via SMC (Section \ref{sec:SMC_Memory_Bank}), are not merely alternative design choices; they are carefully engineered structural remedies designed specifically to break this algebraic cancellation, safely restoring the generative pulling force while explicitly maximizing entropy.

While the fully parametric GMJE framework bypasses the limitations of instance discrimination entirely, one may still wish to leverage the non-parametric $M$-mode formulation typical of contrastive learning. However, our theoretical bridge exposes a fundamental operational flaw in this standard setup: it relies on a naive, uniform, un-informative prior FIFO queue to manage its massive number of modes. To resolve this uniform-prior limitation, the rigid queue must be replaced by a dynamic weighting mechanism to mitigate redundant repelling. We address this next by transforming the memory bank into a probability-weighted particle system via \textit{Sequential Monte Carlo}.

\subsection{Sequential Monte Carlo (SMC) for Dynamic Memory Bank Optimization} \label{sec:SMC_Memory_Bank}

In Section.\ref{sec:GMJE_Contrastive}, we established a profound theoretical equivalence: standard contrastive learning is mathematically equivalent to estimating a density using a \textit{Non-Parametric Gaussian Mixture Model}. Under this non-parametric GMJE framework, we can mathematically view every single one of the $M$ negative keys in a contrastive memory bank as the center ($\mu_m$) of a tiny, fixed-variance Gaussian component. The memory bank itself \textit{is} the Gaussian Mixture Model representing the marginal distribution $p(z_c)$.

Architecturally, frameworks such as MoCo \cite{He2020MoCo} implement this memory bank via a \textit{First-In-First-Out} (FIFO) queue to manage representation staleness. However, from a probabilistic standpoint, this FIFO queue implicitly assumes a \textit{rigid, uniform prior}: it assumes every single component (key) in the mixture has the exact same mixing weight ($\pi_m = \frac{1}{M}$). When a new batch of embeddings is generated, the oldest embeddings are deterministically discarded regardless of their informational value. 

This unweighted FIFO mechanism is theoretically suboptimal. It frequently discards highly informative representations (e.g. rare classes or "hard negatives") simply because they are temporally old, while simultaneously retaining redundant, uninformative samples (e.g. continuous streams of easy background images). To resolve this, we propose abandoning the uniform prior and formulating the memory bank as a dynamic, probability-weighted particle system using \textit{Sequential Monte Carlo} (SMC) \cite{doucet2001sequential,chopin2020SMC}, commonly known as \textit{Particle Filtering}.

If the memory bank is a non-parametric GMM, forcing a uniform prior ($\pi_m = \frac{1}{M}$) via a FIFO queue is mathematically restrictive and suboptimal. SMC perfectly instantiates the GMJE solution by relaxing this constraint:
\begin{itemize}[label=-]
    \item \textit{The Particles are the Mixture Components:} in SMC, the particles are not merely data points; they are explicitly the $M$ components of the non-parametric GMM.
    \item \textit{Dynamically Updating the Weights:} instead of a uniform prior, SMC mathematically computes the true posterior mixing weights ($\pi_m$) of those components using their likelihoods under the current batch.
    \item \textit{Resampling:} it dynamically reallocates the mixture components to cluster densely around the hardest, most complex parts of the data manifold, preventing redundant uniform coverage.
\end{itemize}

\subsubsection{The Memory Bank as a Particle System}
Under the SMC framework, we treat the $M$ stored embeddings as a set of discrete particles $\{ Z^{(m)} \}_{m=1}^M = \{ (z_c^{(m)}, z_t^{(m)}) \}_{m=1}^M$ used to approximate the continuous marginal distribution of the latent space. Importantly, each particle is assigned a dynamic importance weight $W^{(m)}$, constrained such that $\sum_{m=1}^M W^{(m)} = 1$. 

The instance-based joint distribution (Eq.\ref{eq:GMJE_instance_based_joint}) is thus generalized into a weighted Kernel Density Estimate:
\begin{equation} \label{eq:GMJE_SMC_joint}
    p(z_c, z_t) = \sum_{m=1}^M W^{(m)} \mathcal{N} \left( \begin{bmatrix} z_c \\ z_t \end{bmatrix} \middle| \begin{bmatrix} z_c^{(m)} \\ z_t^{(m)} \end{bmatrix}, \Sigma \right)
\end{equation}

By carrying these weights through the exact Bayes' theorem derivation previously established, the conditional contrastive objective (Eq.\ref{eq:InfoNCE_bridge}) elegantly upgrades into an \textit{SMC-weighted InfoNCE loss}:
\begin{equation} \label{eq:SMC_InfoNCE}
    -\log p(z_t \mid z_c) = - \log \frac{\exp(z_c^T z_t / \tau)}{\sum_{m=1}^M W^{(m)} \exp(z_c^T z_t^{(m)} / \tau)} + C
\end{equation}

\subsubsection{Sequential Importance Weighting and Resampling}
To dynamically maintain the optimal set of particles, the SMC memory bank updates via two principled steps during each training iteration $t$:

\textbf{1. Importance Weight Update:} as a new batch of query embeddings $Z_{query}$ arrives, we evaluate \textit{how well our existing particles explain this new data}. Particles that consistently yield high likelihoods under the current queries (i.e. those that act as "hard negatives" by lying densely close to the current manifold) should be amplified. The unnormalized weight $\tilde{W}_t^{(m)}$ of each particle is updated recursively based on its similarity to the current batch:
\begin{equation} \label{eq:SMC_GMJE_weights_update}
    \tilde{W}_t^{(m)} = W_{t-1}^{(m)} \times \mathbb{E}_{z_c \in Z_{query}} \left[ \exp(z_c^T z_t^{(m)} / \tau) \right]
\end{equation}
the weights are then normalized to sum to one: $W_t^{(m)} = \tilde{W}_t^{(m)} / \sum_{j} \tilde{W}_t^{(j)}$.

\textbf{2. Sequential Importance Resampling (Replacing FIFO):} In a standard Sequential Monte Carlo particle filter, the system will inevitably suffer from weight degeneracy over time, where a few highly relevant particles accumulate all the probability mass. To prevent this, we execute a \textit{resampling} step at every iteration, effectively replacing the naive deterministic FIFO drop mechanism:
\begin{itemize}[label=\textendash]
    \item We form a combined pool of size $M + B$ (the existing memory bank plus the incoming batch).
    \item We sample exactly $M$ particles from this combined pool with replacement, drawing with probabilities proportional to their normalized posterior weights $W_t^{(m)}$.
    \item Highly weighted particles (valuable hard negatives) are naturally retained or even duplicated (cloned), while particles with vanishing weights (redundant or useless easy negatives) are probabilistically discarded.
    \item All weights in the newly sampled memory bank of size $M$ are then reset to a uniform $1/M$ to begin the next cycle.
\end{itemize}
To monitor the health of this dynamic sampling process, we track the Effective Sample Size ($\text{ESS} = 1 / \sum (W^{(m)})^2$) over the combined pool before resampling. A dropping ESS correctly indicates that the filter is aggressively concentrating its capacity on dense, hard-negative regions of the manifold rather than maintaining uniform coverage.

\textbf{Dual Optimization Dynamics.} We have dual processes here: optimizing the neural network parameters of the two encoders and maintaining the memory bank. The SMC weight update and resampling operations are purely algebraic, gradient-free processes (Eq.\ref{eq:SMC_GMJE_weights_update}). Simultaneously, the encoder networks are optimized via standard backpropagation using the SMC-weighted InfoNCE loss (Eq.\ref{eq:SMC_InfoNCE}), wherein the particle weights $W^{(m)}$ act as constant scalars scaling the contrastive pushing force. Therefore, \textit{the InfoNCE Loss + Backprop creates better representations, while the SMC Update creates a better memory bank to contrast those representations against.}

\textbf{Theoretical Advantages.} Integrating SMC provides a mathematically principled solution to "hard negative mining" - a practice that typically relies on messy, ad-hoc heuristics in standard contrastive learning \cite{robinson2020contrastive, kalantidis2020hard}. Because hard negatives inherently yield higher likelihoods under the current batch's distribution, they organically accumulate higher importance weights $W^{(m)}$, dominating the denominator of Eq.\ref{eq:SMC_InfoNCE} and providing stronger, more informative gradient signals. Further, by selectively retaining high-value representations across many epochs, SMC dramatically increases the informational density of the memory bank, potentially allowing for much smaller queue sizes $M$ without sacrificing representation quality.

\subsubsection{Algorithm: SMC-GMJE for Dynamic Memory Bank Optimization}
The complete procedure for dynamically updating the particle-based memory bank during contrastive training is detailed in Algo.\ref{alg:smc_gmje}. By caching the dot-products already calculated during the forward pass, the weight update adds negligible overhead to standard instance-level discrimination.

\begin{algorithm}[H]
\caption{SMC-GMJE: Dynamic Memory Bank Optimization via SMC}
\label{alg:smc_gmje}
\begin{algorithmic}[1]
\REQUIRE Stream of query batches $Z_{query} = \{(z_c, z_t)\}_{b=1}^B$ generated by encoders, memory bank size $M$, temperature $\tau$.
\STATE Initialize memory bank $\mathcal{M}$ with $M$ random target embeddings and uniform weights $W \leftarrow 1/M$.
\FOR{each training step $t$ with incoming batch $Z_{query}$}
    
    \STATE \textbf{// Loop A: Encoders Optimization (Requires Gradients)}
    \STATE \textbf{Forward Pass:} Compute SMC-weighted InfoNCE loss $\mathcal{L}$ (Eq.\ref{eq:SMC_InfoNCE}) using the current memory bank $\mathcal{M}$ and its weights $W$.
    \STATE \textbf{Backward Pass:} Compute gradients $\nabla \mathcal{L}$ and update both encoder networks' parameters via gradient descent.
    
    \STATE
    \STATE \textbf{// Loop B: SMC Memory Bank Optimization (Gradient-Free)}
    \STATE \textbf{Combine:} Pool $\mathcal{M}_{pool} \leftarrow \mathcal{M} \cup \{z_t \in Z_{query}\}$, size $M+B$.
    \STATE \textbf{Prior Weights:} $W_{prior}^{(m)} \leftarrow \frac{1}{M+B}$ for new batch, scale old weights by $\frac{M}{M+B}$.
    
    \STATE \textbf{Importance Update:} For each particle $m \in \{1 \dots M+B\}$:
    \STATE \quad $\tilde{W}_t^{(m)} \leftarrow W_{prior}^{(m)} \times \frac{1}{B} \sum_{z_c \in Z_{query}} \exp(z_c^T z_t^{(m)} / \tau)$
    \STATE \textbf{Normalize:} $W_{pool}^{(m)} \leftarrow \tilde{W}_t^{(m)} / \sum_{j=1}^{M+B} \tilde{W}_t^{(j)}$
    
    \STATE \textit{// Execute Sequential Importance Resampling}
    \STATE Sample $M$ particles from $\mathcal{M}_{pool}$ with replacement, proportional to $W_{pool}^{(m)}$.
    \STATE Replace $\mathcal{M}$ with these sampled particles.
    \STATE Reset all memory bank weights to uniform: $W \leftarrow 1/M$.
\ENDFOR
\end{algorithmic}
\vspace{1ex}
{\footnotesize \textbf{Algorithm Note:} as we execute Sequential Importance Resampling (SIR) at every step, the probability mass is perfectly transferred into the physical multiplicity (cloning) of the particles, allowing the memory bank weights to be reset to $1/M$. Consequently, during the forward pass, standard unweighted InfoNCE (Cross-Entropy) over the resampled bank is mathematically exact and strictly equivalent to evaluating the explicit SMC-weighted loss in Eq.\ref{eq:SMC_InfoNCE}.\par}
\end{algorithm}

\textbf{Complexity.} Computationally, the time complexity of the SMC-GMJE memory update is strictly bounded. The importance weight update (Line 12) requires the dot products between the $B$ context queries and the $M+B$ pooled memory bank particles. However, because these exact dot products $\exp(z_c^T z_t^{(m)} / \tau)$ are largely already computed to evaluate the InfoNCE denominator in the forward pass, they can be cached and re-used. Thus, the weight update only requires an $\mathcal{O}(M)$ summation overhead. The resampling step (Line 15), implemented via systematic resampling or the alias method, operates in linear $\mathcal{O}(M)$ time. Therefore, the overall time complexity remains $\mathcal{O}(B \cdot M \cdot d)$ per step, asymptotically identical to standard contrastive methods such as MoCo \cite{He2020MoCo}. Space complexity is similarly preserved; storing the weights adds a trivial $\mathcal{O}(M)$ scalars to the $\mathcal{O}(M \cdot d)$ memory bank footprint.

It is important to emphasize that Algo.\ref{alg:smc_gmje} is designed specifically for the non-parametric, isotropic special case of GMJE, which we established in Section.\ref{sec:GMJE_Contrastive} is mathematically equivalent to standard contrastive learning. This is evident from the mathematical formulation of the algorithm: (1) \textit{The Loss Function} (Line 5) explicitly optimizes the SMC-weighted InfoNCE loss (Eq.\ref{eq:SMC_InfoNCE}, same as used in contrastive learning); (2) \textit{The Covariance Assumption} (Line 12) utilizes the dot-product similarity $\exp(z_c^T z_t^{(m)} / \tau)$, which only emerges when assuming a strictly isotropic covariance $\Sigma = \tau I$ over $L^2$-normalized embeddings; and (3) \textit{The Components} (Line 11) iterate over discrete particles in a memory bank, treating every individual data instance as a fixed mode rather than optimizing global, learnable prototypes. Therefore, SMC-GMJE is a direct, drop-in upgrade for contrastive learning frameworks such as MoCo \cite{He2020MoCo}. It takes the standard contrastive architecture but replaces the heuristically uninformative FIFO queue with a probabilistically rigorous particle filter.

\paragraph{SMC Beyond Contrastive Learning: Generalizing SMC to general GMJE.} 
While Algo.\ref{alg:smc_gmje} specifically formulates SMC-GMJE as a non-parametric drop-in replacement for standard contrastive learning (relying on the InfoNCE objective and isotropic, fixed-variance assumptions), the SMC mechanism is theoretically universal. It is not fundamentally restricted to the special contrastive case. As detailed in Appendix.\ref{app:general_smc_gmje}, SMC can be directly generalized to optimize the full, parametric GMJE objective (Algo.\ref{alg:general_smc_gmje}). In this generalized setting, the particles are no longer restricted to isotropic spherical bounds; instead, each particle $m$ can be evaluated and dynamically re-weighted using the exact general joint NLL formulated in Eq.\ref{eq:GMJE_prototypical_loss}. Under this generalized SMC-GMJE formulation, the network organically maintains a probability-weighted particle system of full-covariance components, seamlessly bridging the gap between massive non-parametric memory banks and rich, multi-modal density estimation.

\subsection{Generative GMJE: Sampling from the Learned Latent Manifold} \label{subsec:GMJE_generative}

A direct consequence of the probabilistic GMJE framework is that, by explicitly optimizing the NLL of the embedding space, the network does not merely learn a discriminative mapping function $x \to z$; it natively learns a continuous, closed-form generative probability density function over the latent manifold\footnote{From a Bayesian perspective, any valid probability distribution over the data or latent space natively constitutes a generative model. Modern generative frameworks (e.g. GANs, VAEs, or Score-Based models \cite{huang2022classificationscorebasedgenerativemodelling}) implicitly or explicitly learn these underlying distributions or their gradient fields.}. 

Recall that the architecture explicitly models the $2d$-dimensional joint distribution of the context and target embeddings, $Z = [z_c^T, z_t^T]^T$, as a GMM (Eq.\ref{eq:GMJE_joint}):
\begin{equation*}
    p(z_c, z_t) = \sum_{k=1}^K \pi_k \mathcal{N} \left( \begin{bmatrix} z_c \\ z_t \end{bmatrix} \middle| \begin{bmatrix} \mu_{c,k} \\ \mu_{t,k} \end{bmatrix}, \begin{bmatrix} \Sigma_{cc,k} & \Sigma_{ct,k} \\ \Sigma_{tc,k} & \Sigma_{tt,k} \end{bmatrix} \right)
\end{equation*}

In standard downstream generative tasks (e.g. unconditional image synthesis or latent data augmentation), we typically require a standard $d$-dimensional representation, rather than a concatenated $2d$-dimensional relational pair. To obtain this, we must marginalize the joint distribution. By the fundamental properties of multivariate Gaussians, integrating out the context variable $z_c$ trivially yields the marginal distribution of the target representation space $p(z_t)$, which elegantly remains a closed-form GMM:
\begin{equation} \label{eq:GMJE_marginal_target_replicated2} \tag{cc.Eq.\ref{eq:GMJE_marginal_target}}
    p(z_t) = \int p(z_c, z_t) dz_c = \sum_{k=1}^K \pi_k \mathcal{N}(z_t \mid \mu_{t,k}, \Sigma_{tt,k})
\end{equation}

Because we possess this explicit, marginalized generative formulation, we can \textit{unconditionally generate novel, mathematically valid embeddings} without requiring any real input data $x$. Using e.g. Gaussian Mixture Approximation (GMA) sampling \cite{Huang2025GMA}, we can draw \textit{synthetic}, meaningful representations $z_t$ via a highly efficient, gradient-free two-step sampling procedure:
\begin{enumerate}
    \item \textit{Discrete Mode Selection:} sample a component index $k$ from the categorical distribution defined by the learned mixing weights, $k \sim \text{Categorical}(\pi_1, \dots, \pi_K)$.
    \item \textit{Continuous Feature Generation:} given the selected mode $k$, draw a continuous $d$-dimensional embedding vector $z_t$ from its corresponding marginalized Gaussian component, $z_t \sim \mathcal{N}(\mu_{t,k}, \Sigma_{tt,k})$, computed practically via the Cholesky decomposition of the target covariance matrix $\Sigma_{tt,k} = A_k A_k^T$ such that $z_t = \mu_{t,k} + A_k \epsilon$ where $\epsilon \sim \mathcal{N}(\mathbf{0}, I)$.
\end{enumerate}

In contrast, standard self-supervised frameworks (e.g. JEPA, SimCLR, SwAV) only train \textit{discriminative} encoders. Because they lack a normalized probability density function $p(z)$ and do not penalize "empty space" between representations, attempting to sample random coordinates from their latent spaces invariably yields out-of-distribution (OOD) garbage vectors\footnote{An exception is the BiJEPA architecture \cite{huang2026BiJEPA}, which learns a manifold, reshaped by rule constraints, in the embedding space via a shared predictor; it can then be used to generate e.g. new rules set.}. By contrast, the GMJE latent space is a strict generative manifold. The ability to sample directly from it opens the door to powerful downstream applications, e.g. generative replay for continual learning, latent data augmentation, and unconditional image synthesis, which we evaluate later in Section.\ref{sec:experiments}.

\vspace{1em}
\noindent \textbf{Summary of the GMJE Framework.} This concludes the mathematical formulation of the GMJE paradigm. To summarize the structural organization of this generalized framework:
\begin{itemize}[label=-]
    \item \textbf{Sections \ref{subsec:GMJE_reduced_EM} (Reduced EM), \ref{subsec:GMJE_MDN} (GMJE-MDN), and \ref{subsec:GMJE_GNG} (GMJE-GNG)} constitute the \textbf{Parametric GMJE} branch. Here, the architecture learns $K$ highly compressed, full-covariance prototypes (whether globally static, conditionally dynamic, or topologically grown) to map the semantic manifold, and does not require a memory bank. The learned GMM can be used for downstream discriminative or generative tasks.
    \item \textbf{Sections \ref{sec:GMJE_Contrastive} (the InfoNCE bridge) and \ref{sec:SMC_Memory_Bank}} (SMC-GMJE) constitute the \textbf{Non-Parametric GMJE} branch. Here, the architecture treats $M$ encoded data instances as fixed, isotropic modes, which fundamentally requires a dynamic memory bank to estimate the marginal distribution (the contrastive denominator).
\end{itemize}

\section*{\textmd{\textit{The Primal-Dual Paradigm Shift}}}

To conclude our theoretical exposition, it is helpful to contrast the two mathematical perspectives used throughout this work: the Dual (sample-space) view of GJE and the Primal (feature-space) view of GMJE. Together, these complementary perspectives provide a probabilistically grounded framework for multi-modal representation learning. To make this duality concrete, consider a batch of embeddings arranged as a matrix $Z \in \mathbb{R}^{N \times d}$, where each of the $N$ rows corresponds to an individual data sample and each of the $d$ columns corresponds to a latent feature.
\begin{itemize}[label=$\circ$]
    \item \textbf{The Dual ``Sample Space'' View (Section \ref{sec:GJE}):} the pure GJE framework operates in the dual space, evaluating the $N \times N$ Gram matrix ($Z Z^T$). This perspective prioritizes \textit{sample similarity}, measuring the relationships between individual images within a batch. Its geometric regularizer ($\frac{1}{2}\log|K_{cc}|$) maximizes \textit{batch diversity}, guarding against standard \textbf{Representation Collapse} (where all $N$ samples map to the exact same point). While theoretically elegant for continuous functional mapping, it suffers from an $\mathcal{O}(N^3)$ computational bottleneck (necessitating approximations like RFF) and structurally struggles to isolate discrete multi-modal clusters.
    \item \textbf{The Primal ``Feature Space'' View (Section \ref{sec:GMJE}):} the GMJE framework executes a classical shift into the primal space, evaluating the $d \times d$ feature covariance matrix ($Z^T Z$). This perspective prioritizes \textit{feature independence}, directly modeling the geometric shape of the data manifold. Its geometric regularizer ($\frac{1}{2}\log|\Sigma_c|$) maximizes the \textit{volume of the latent space}, specifically preventing \textbf{Dimensional Collapse} (where latent features become perfectly correlated and lazily flatten the data into a lower-dimensional line or subspace). By operating in the primal space, the computation scales optimally with batch size ($\mathcal{O}(d^3)$ rather than $\mathcal{O}(N^3)$) and natively provides the coordinate framework required to deploy $K$ discrete prototypes for multi-modal semantic routing.
\end{itemize}

\paragraph{Why Multi-Modality is Difficult to Realize in the Dual Sample Space.}
To motivate the architectural necessity of GMJE, it is helpful to examine why extending pure GJE to capture multi-modal structure directly in the dual sample space ($N \times N$) is poorly suited to stable semantic modeling. In the primal feature space ($d \times d$), the network operates in a fixed global coordinate system\footnote{Geometrically, the primal feature space is analogous to a map with fixed coordinate axes, where distinct semantic clusters can be permanently pinned to specific global locations.}, allowing $K$ semantic modes ($\mu_k$) to be represented as stable spatial regions. By contrast, the dual space is defined only through the Gram matrix ($ZZ^T$), which replaces absolute coordinates with relative pairwise similarities\footnote{Conversely, the dual sample space is analogous to a dynamic similarity table recording only relative affinities between entities, without preserving their absolute global coordinates.}. 

This creates several structural difficulties for mixture modeling. First, without explicit feature-space axes, any cluster prototype must be represented indirectly, typically as a linear combination of the current data points. Second, in stochastic self-supervised learning, the reference samples in a mini-batch change at every optimization step\footnote{For example, if one Gram matrix is formed from similarities among a dog, a car, and a boat, while the next is formed from a plane, a bird, and a frog, then the underlying reference system changes completely across iterations.}, making it difficult to learn persistent global semantic modes across batches and epochs. Third, a mixture model defined directly over the Gram matrix does not naturally correspond to clustering points into distinct spatial semantic categories; instead, it tends to model multiple \textit{similarity-generating patterns} (i.e., predicting that the similarity between two images was drawn from one of $K$ different distributions). 

For this reason, the transition to GMJE, which operates through the $d \times d$ feature covariance in primal space, is not merely a computational preference but a more natural and principled formulation for representing stable multi-modal semantic structure.

\section[Experiments]{Experiments\protect\footnote{The experimental Python codes were partially enabled with the kind assistance of Gemini 3.0 \cite{google2026gemini3docs}, for which the author acknowledges.}} \label{sec:experiments}

\subsection{Ambiguous Alignment of Synthetic Embedding Representations} \label{subsec:exp_synthetic}

\paragraph{Motivation: multi-modal inverse prediction.}
A classical difficulty in regression arises when the underlying mapping is \emph{multi-valued}. Bishop \cite{bishop1994mixture} illustrated this using the inverse kinematics of a two-joint robot arm: while the forward problem is deterministic, the inverse problem is ambiguous, since multiple joint configurations can produce the same end-point coordinate. When such a problem is trained with squared loss, a deterministic neural predictor converges to the conditional mean\footnote{A predictor trained with squared loss converges to the conditional mean $\mathbb{E}[y \mid x]$ under standard risk minimization; see Appendix.~\ref{app:mse_derivation}.} rather than to the distinct valid solutions. In multi-modal settings, this average may lie between modes, in regions unsupported by the true data distribution.

The same issue appears naturally in self-supervised representation learning. Predicting a target view from a heavily degraded or partially observed context view is often an inverse problem: a single context may be compatible with several plausible target completions. For example, if $x_c$ contains only a partial view of a dog, then the corresponding target $x_t$ may plausibly depict different breeds, poses, or body configurations. A deterministic predictor trained with MSE cannot represent such ambiguity explicitly and instead tends to average over the valid modes. This motivates a generative formulation capable of routing predictions toward multiple conditional possibilities.

\paragraph{Objective.}
One of the central claims of this work is that deterministic predictive architectures (e.g. classic JEPA) and unimodal generative models (e.g. single-Gaussian GJE) are limited when the true conditional distribution $p(x_t \mid x_c)$ is strongly multi-modal. To examine this claim in a controlled setting, we construct synthetic ambiguous-alignment tasks inspired by inverse problems \cite{bishop1994mixture}. Because the ground-truth data-generating process (DGP) is known exactly, these experiments allow us to compare how different architectures recover the underlying conditional structure, both visually and probabilistically.

\paragraph{Task design.}
To isolate the predictive and generative mechanisms from representation learning itself, we bypass the dual encoders in this experiment and treat $x_c$ and $x_t$ as direct observations. This lets us focus exclusively on the conditional modeling problem.

We construct two synthetic datasets in which a one-dimensional context variable $x_c \sim \mathcal{U}(-1,1)$ maps to a target variable
\[
x_t = f(x_c) + \epsilon,
\]
where the branch function $f(x_c)$ is selected uniformly at random for each sample, and $\epsilon \sim \mathcal{N}(0, 0.05^2)$ is small Gaussian observation noise. Thus, for a fixed $x_c$, the conditional distribution of $x_t$ is an equal-weight mixture of several branches.

\begin{itemize}[label=-]
    \item \textit{Dataset A (Separated Branches):} 
    \[
    f(x_c) \in \left\{x_c^2 + 0.5,\; -x_c^2 - 0.5,\; x_c^3 \right\},
    \]
    producing three non-linear branches with clear spatial separation and visible low-density gaps.

    \item \textit{Dataset B (Intersecting Branches):}
    \[
    f(x_c) \in \left\{\sin(3x_c),\; -\sin(3x_c),\; 0 \right\},
    \]
    producing three branches that intersect exactly at the origin, thereby creating a more challenging ambiguity structure.
\end{itemize}

For each dataset, we generate $N=3000$ training pairs. At evaluation time, we probe the learned predictive distributions on a uniformly spaced grid of $300$ test context values over $[-1,1]$.

\paragraph{Baselines and model configurations.}
We compare six architectures spanning deterministic prediction, dual-space conditional modeling, fixed global Gaussian structure, and flexible multi-modal generative routing:

\begin{enumerate}
    \item \textit{Classic JEPA (MSE):} a deterministic predictor trained with Mean Squared Error to regress $x_t$ from $x_c$.

    \item \textit{Dual-space kernel baseline (RBF GJE):} a Gaussian Process regression model with an RBF kernel, used to represent the exact dual $N \times N$ sample-space formulation. To expose its behavior under severe multi-modal ambiguity and to avoid unstable hyperparameter collapse on these synthetic tasks, we fix the kernel hyperparameters (length-scale $=0.5$, noise level $=0.1$).

    \item \textit{GMJE-EM ($K=1$):} a single-component Gaussian model in primal space, fitted by Expectation-Maximization, used to illustrate the limitations of representing the full structure with only one global covariance matrix.

    \item \textit{GMJE-EM ($K=3$):} a fixed-$K$ parametric mixture baseline with three global prototypes, also optimized by EM. This provides a direct comparison against a standard finite-mixture model with the correct number of components.

    \item \textit{GMJE-GNG:} a non-parametric topology-adaptive variant using Growing Neural Gas to discover the number and placement of prototypes automatically, without assuming $K$ in advance (Section~\ref{subsec:GMJE_GNG}).

    \item \textit{GMJE-MDN:} an instance-conditional mixture model in which a parameter network maps $x_c$ to mixture parameters $\mu_{t\mid c,k}(x_c)$, $\Sigma_{t\mid c,k}(x_c)$, and $\pi_k(x_c)$, thereby defining an input-dependent conditional density (Section~\ref{subsec:GMJE_MDN}).
\end{enumerate}

\paragraph{Results and analysis.}
The learned predictive distributions for Dataset A and Dataset B are shown in Fig.~\ref{fig:synthetic_exp_sep} and Fig.~\ref{fig:synthetic_exp_int}, respectively.

\begin{itemize}[label=-]
    \item \textit{Deterministic conditional-mean limitation.}  
    As predicted by squared-loss theory (Appendix.~\ref{app:mse_derivation}), Classic JEPA does not recover the multi-branch conditional structure. Instead, the predictor collapses towards the conditional mean of the target distribution. In Dataset A (Fig.~\ref{fig:synthetic_exp_sep}a), this places predictions in low-density regions between the true branches. In Dataset B (Fig.~\ref{fig:synthetic_exp_int}a), the same mechanism drives the predictor toward the central average around zero, again failing to isolate the valid modes.

    \item \textit{Limitation of unimodal dual-space prediction.}  
    The dual-space RBF kernel baseline (Fig.~\ref{fig:synthetic_exp_sep}b and Fig.~\ref{fig:synthetic_exp_int}b) yields a flexible non-linear conditional mean, but its predictive distribution remains unimodal Gaussian. Consequently, although it can interpolate smoothly in input space, it cannot separate multiple concurrent conditional branches. The result is a broad uncertainty band that covers the overall spread of the data without resolving the distinct semantic modes.

    \item \textit{Limitation of a single global covariance.}  
    GMJE-EM with $K=1$ (Fig.~\ref{fig:synthetic_exp_sep}c and Fig.~\ref{fig:synthetic_exp_int}c) illustrates the limitation of fitting the entire structure with one Gaussian component. To cover several curved branches simultaneously, the learned density expands into a large ellipse, producing substantial over-smoothing and poor local alignment. This provides a visual demonstration of why a single global covariance is insufficient for strongly multi-modal geometry.

    \item \textit{Fixed-$K$ mixtures improve mode separation but remain rigid.}  
    Moving to $K=3$ (Fig.~\ref{fig:synthetic_exp_sep}d and Fig.~\ref{fig:synthetic_exp_int}d), the parametric GMJE-EM model distributes probability mass across multiple components and therefore avoids the collapse of the single-component baseline. However, each component still retains a fixed global covariance structure, so the resulting ellipses only coarsely approximate the curved branches. This effect is especially visible when the geometry is highly non-linear or intersecting.

    \item \textit{Adaptive topology discovery with GMJE-GNG.}  
    GMJE-GNG (Fig.~\ref{fig:synthetic_exp_sep}e and Fig.~\ref{fig:synthetic_exp_int}e) avoids fixing the number of components in advance and instead places prototypes adaptively along the data manifold. In both datasets, the learned nodes track the branch topology more faithfully than the rigid global ellipses of GMJE-EM, yielding a discrete but informative approximation of the underlying structure.

    \item \textit{Instance-conditional mixture routing with GMJE-MDN.}  
    Among the tested models, GMJE-MDN (Fig.~\ref{fig:synthetic_exp_sep}f and Fig.~\ref{fig:synthetic_exp_int}f) provides the most expressive conditional generative representation. Because its mixture parameters are conditioned directly on $x_c$, it can adapt both the local means and local variances to the geometry of the branches. As a result, it tracks the non-linear multi-modal structure closely and produces tight, localized uncertainty bands around the valid target regions.
\end{itemize}

To examine this last point more closely, we inspect the internal parameter routing of GMJE-MDN in Fig.~\ref{fig:mdn_internals_sep} and Fig.~\ref{fig:mdn_internals_int}. Several patterns are evident. First, the learned conditional means in panel (b) closely follow the ground-truth branch functions. Second, the predicted conditional standard deviations in panel (c) remain near the true observation noise level $\epsilon = 0.05$, indicating that the model captures the local dispersion without artificially inflating uncertainty. Third, the conditional mixing weights in panel (a) remain close to the ground-truth uniform value of $1/3$, which is consistent with the fact that the generating process selects each branch equiprobably. Taken together, these results suggest that GMJE-MDN recovers the synthetic conditional density substantially more faithfully than the alternative baselines.

% --- Figures for Dataset A ---
\begin{figure}[H]
    \centering
    \includegraphics[width=1.0\textwidth]{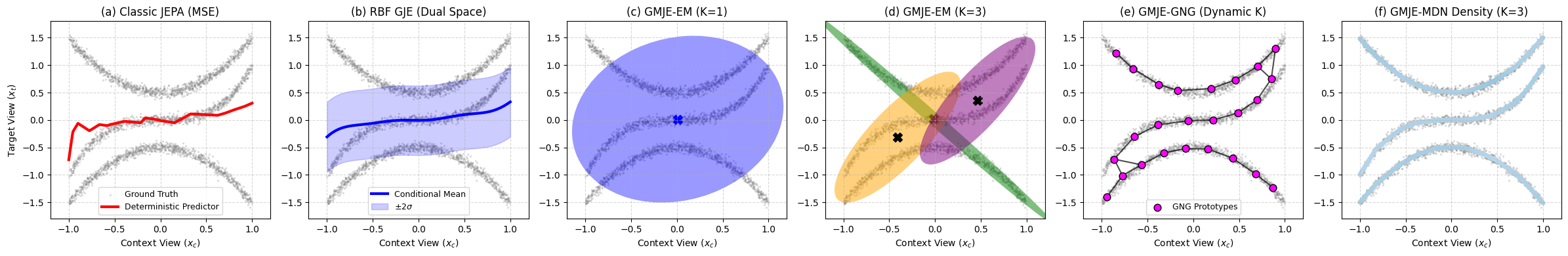}
    \caption{Predictive distributions on Dataset A (Separated Branches). (a) Classic JEPA collapses toward the conditional average and fails to recover the three valid branches. Ground-truth samples are shown in gray. (b) The dual-space RBF kernel baseline learns a flexible mean but retains a unimodal Gaussian predictive distribution, producing a broad uncertainty band rather than resolving separate branches. (c) GMJE-EM ($K=1$) fits a single global ellipse and heavily over-smooths the manifold. (d) GMJE-EM ($K=3$) separates the density across multiple components, but the fixed covariance ellipses remain geometrically rigid. (e) GMJE-GNG adaptively places prototypes along the non-linear ridges and captures the manifold topology more faithfully. (f) GMJE-MDN produces the closest qualitative match to the ground-truth conditional density by predicting instance-dependent mixture parameters.}
    \label{fig:synthetic_exp_sep}
\end{figure}

\begin{figure}[H]
    \centering
    \includegraphics[width=1.0\textwidth]{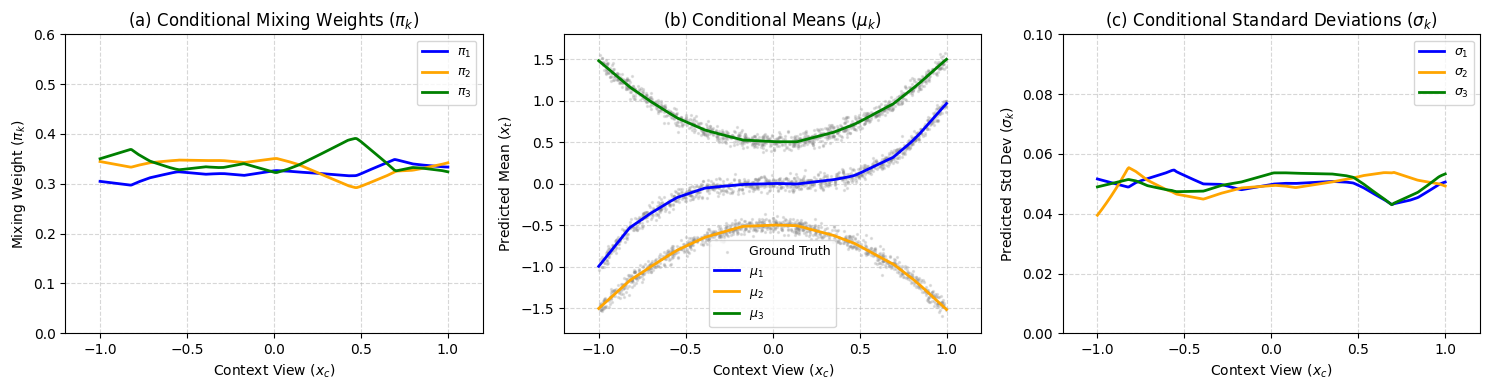}
    \caption{Internal parameter routing of GMJE-MDN on Dataset A. (a) The conditional mixing weights remain close to the ground-truth uniform value of $1/3$. (b) The learned conditional means align closely with the three underlying branch functions. (c) The predicted conditional standard deviations remain near the true Gaussian observation noise level $\epsilon = 0.05$.}
    \label{fig:mdn_internals_sep}
\end{figure}

% --- Figures for Dataset B ---
\begin{figure}[H]
    \centering
    \includegraphics[width=1.0\textwidth]{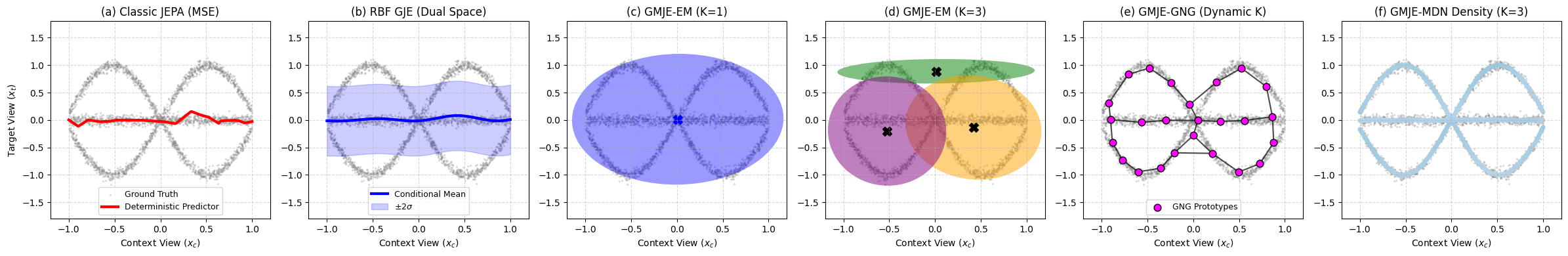}
    \caption{Predictive distributions on Dataset B (Intersecting Branches). The overall trends are consistent with Dataset A but the intersection structure makes the task more challenging. (a) Classic JEPA again collapses toward the conditional average near zero. (b) The dual-space RBF kernel baseline remains unimodal and cannot isolate the intersecting branches. (c--d) The fixed-covariance parametric GMJE baselines capture part of the multi-modal structure but remain limited by global geometric rigidity. (e) GMJE-GNG provides an adaptive topological approximation of the manifold. (f) GMJE-MDN most accurately captures the intersecting conditional density among the compared methods.}
    \label{fig:synthetic_exp_int}
\end{figure}

\begin{figure}[H]
    \centering
    \includegraphics[width=1.0\textwidth]{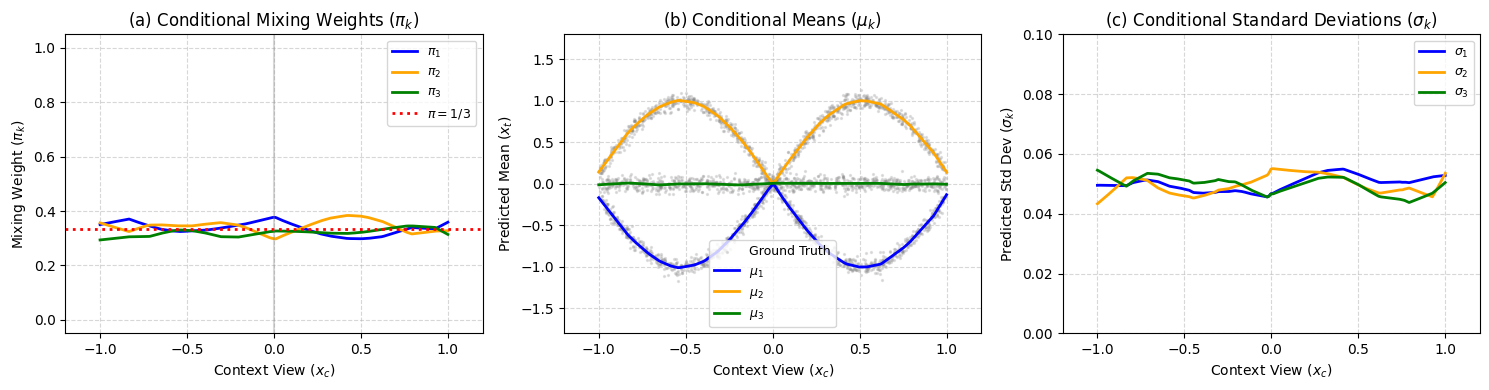}
    \caption{Internal parameter routing of GMJE-MDN on Dataset B. The learned means track the three intersecting target branches, the standard deviations remain close to the true observation noise, and the mixing weights remain near the ground-truth value of $1/3$ across the context domain.}
    \label{fig:mdn_internals_int}
\end{figure}

\paragraph{Summary of synthetic experiments.}
Across both ambiguous-alignment tasks, the experiments support the theoretical picture developed in the preceding sections. Deterministic prediction collapses towards the conditional mean and therefore fails to model multi-valued targets. Unimodal dual-space and single-Gaussian primal-space models capture only coarse averages of the conditional structure and cannot represent several branches simultaneously. Fixed-$K$ parametric mixtures improve mode separation but remain limited by rigid global covariance geometry. In contrast, the non-parametric GMJE-GNG variant preserves the manifold topology more effectively, while the instance-conditional GMJE-MDN delivers the most faithful recovery of the underlying multi-modal conditional density. These results provide a clear qualitative demonstration that generative joint modeling is substantially better suited than deterministic prediction for ambiguous semantic alignment.

\subsection{Representation Learning on Vision Benchmarks} \label{subsec:exp_vision}

\paragraph{Objective.} Having validated the generative modeling capabilities of GMJE on complex multi-modal \textit{synthetic} manifolds, we now evaluate its scalability and feature extraction (representation) quality on standard high-dimensional computer vision benchmarks. This study has two complementary goals. First, we test whether the proposed Sequential Monte Carlo (SMC) memory bank improves the utility of contrastive negatives relative to a standard FIFO queue under severe memory constraints. Second, we evaluate how the resulting non-parametric GMJE variant compares with established SSL baselines on CIFAR-10 under a common short-horizon pre-training regime.

\paragraph{Setup.}
All two sub-experiments were conducted on CIFAR-10 \cite{krizhevsky2009learning}. Following common SSL protocols for low-resolution vision benchmarks \cite{Chen2020simCLR,He2020MoCo}, we use a modified ResNet-18 backbone as the base encoder $E_\theta$, replacing the initial $7 \times 7$ convolution with a $3 \times 3$ convolution and removing the first max-pooling layer. The input views $x_c$ and $x_t$ are generated using the standard SimCLR augmentation pipeline (random resized crops, color jitter, grayscale, and horizontal flips) \cite{Chen2020simCLR}. We evaluate the quality of the learned representations using a standard \textit{Linear Probing} protocol: after unsupervised pre-training, the encoder is frozen and a single linear classifier is trained on top of the global average-pooled features.

\subsubsection{The Efficiency of SMC Memory Banks vs. FIFO}

In Section \ref{sec:SMC_Memory_Bank}, we theoretically exposed standard contrastive learning as a non-parametric GMJE utilizing a naive, uniform-prior FIFO queue which approximates the marginal latent distribution, and sample replacement depends only on temporal age rather than informativeness. To test our proposed dynamic alternative, i.e. \textbf{SMC-GMJE}, we evaluate the representation learning capabilities of \textbf{SMC-GMJE} against a standard FIFO-based instance discrimination baseline (MoCo v2). 

\paragraph{Protocol.}
To isolate the effect of the memory-bank update rule, both SMC-GMJE and MoCo v2 use the same ResNet-18 encoder, the same augmentation pipeline, and the same EMA momentum rate. Both models are trained from scratch for 1000 epochs on CIFAR-10 with a severely constrained memory capacity of $M=256$. We report three diagnostics: the pre-training InfoNCE loss, the downstream linear probing accuracy, and the Effective Sample Size (ESS) of the SMC particle system. The results are shown in Fig.~\ref{fig:smc_vs_fifo}.

\begin{figure}[H]
    \centering
    \includegraphics[width=1.0\textwidth]{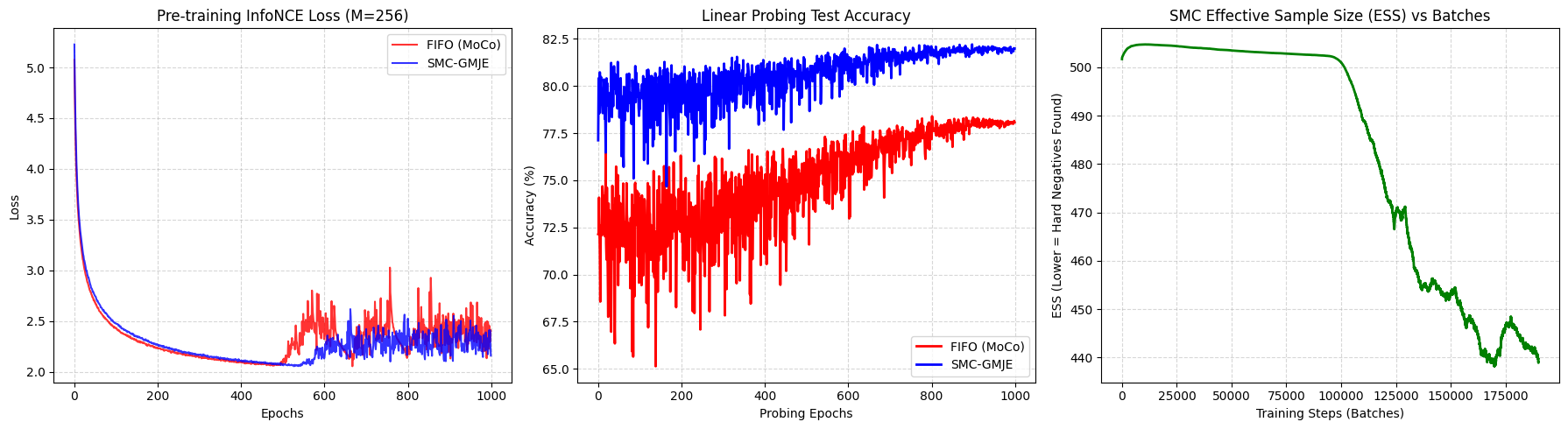} 
    \caption{Empirical training dynamics comparing a standard FIFO queue (MoCo) against our proposed SMC particle bank (SMC-GMJE) under a constrained memory capacity ($M=256$). (Left) Pre-training InfoNCE loss. Note the severe volatility of the FIFO queue starting around Epoch 500. (Center) Linear probing test accuracy, showing SMC maintaining a distinct advantage. (Right) The Effective Sample Size (ESS) of the SMC filter. The sharp ESS drop at $\sim 100,000$ batches (Epoch 500, after the warmup stage) indicates increasing concentration on a smaller subset of informative particles, coincides with the stabilization of the latent space and the onset of targeted hard-negative mining.}
    \label{fig:smc_vs_fifo}
\end{figure}

\paragraph{Analysis.}
Fig.\ref{fig:smc_vs_fifo} shows that the two memory-bank strategies behave similarly during the early training phase, but diverge substantially later. The FIFO baseline begins to exhibit loss instability after approximately 500 epochs, whereas SMC-GMJE remains more stable and achieves a higher final linear-probe accuracy. A plausible explanation is that, once the latent space begins to stabilize and becomes more structured, and distinct semantic clusters (e.g. dogs, cars, ships) are forming, a small FIFO queue ($M=256$) is increasingly unable to preserve sufficiently informative negatives across all semantic regions (insufficient to cover all 10 CIFAR-10 classes simultaneously). Because FIFO deterministically overwrites the oldest representations regardless of their informational value, it frequently purges rare classes or "hard negatives" (e.g. overwriting the only "ship" embeddings in the queue simply because they expired). This leaves the network temporarily blind to certain semantic boundaries, resulting in severe loss volatility and a suppressed final linear probing accuracy of 78.08\%.

By contrast, the SMC bank adaptively reweights particles according to their current utility. This is reflected in the ESS curve: during the early warmup stage the ESS remains high, indicating relatively uniform importance across particles; however, precisely when the semantic clusters begin to form (epoch 500), the ESS decreases as probability mass concentrates on a smaller subset of more informative negatives (hard-negative particles are becoming scarce but informative). Rather than uniformly exploring, it assigns them massive posterior weights and clones them during the sequential resampling step, keeping them alive in the 256-slot memory bank long past their temporal expiration date. Under this severe memory constraint, the resulting contrastive signal is more stable and yields stronger downstream performance (\textbf{81.99\%} final accuracy, which is +3.91\% over FIFO), which empirically proves that transitioning from a deterministic queue to a probabilistic particle system dramatically increases the informational density of the memory bank, yielding stable, rich gradient signals even under extreme hardware limitations.

\subsubsection{SMC-GMJE vs. Standard Baselines}

The previous experiment isolates the memory-bank mechanism under extreme capacity constraints. We now turn to a broader benchmark question: whether the resulting non-parametric GMJE formulation can learn competitive visual representations compared with established SSL baselines under a common, shorter training regime.

\paragraph{Protocol.}
We compare \textbf{SMC-GMJE} with 3 widely used baselines\footnote{We also tested SwAV, VicReg, Parametric-GMJE, and GMJE-MDN. As they were unstable or under-tuned from scratch in this short training setting, they are not presented here. SwAV, VICReg, Parametric-GMJE, and GMJE-MDN are much harder to optimize from scratch under the present CIFAR-10 regime, which likely require stronger initialization, longer schedules, or more carefully matched objectives.}: SimCLR \cite{Chen2020simCLR}, MoCo v2 \cite{He2020MoCo}, and BYOL \cite{grill2020byol}. All models use the same ResNet-18 backbone and are pre-trained from scratch for 200 epochs, followed by 100 epochs of linear probing.

\begin{table}[H]
    \centering
    \caption{Linear Probing Top-1 accuracy on CIFAR-10 using a ResNet-18 backbone.}
    \label{tab:vision_results}
    \renewcommand{\arraystretch}{1.2}
    \begin{tabular}{l l c}
        \hline
        \textbf{Method} & \textbf{Architecture Type} & \textbf{CIFAR-10 Accuracy} \\
        \hline
        SimCLR \cite{Chen2020simCLR} & Symmetric + Large Batch Contrastive & 81.50\% \\
        BYOL \cite{grill2020byol} & Asymmetric EMA + Predictor & 79.04\% \\
        MoCo v2 \cite{He2020MoCo} & Asymmetric EMA + FIFO Queue & 79.02\% \\
        \hline
        \textbf{SMC-GMJE} (Ours) & Asymmetric EMA + Dynamic Particle Bank & \textbf{74.52\%} \\
        \hline
    \end{tabular}
    
    \vspace{0.5em}
    \raggedright
    {\footnotesize These short-horizon results should be interpreted as relative comparisons under limited compute, rather than as fully tuned final accuracies.\par}
\end{table}

\paragraph{Analysis.}
Table~\ref{tab:vision_results} shows that SimCLR, BYOL, and MoCo v2 all reach approximately 79-82\% linear-probe accuracy after 200 epochs pre-training, whereas SMC-GMJE reaches 74.52\%. Thus, under this short training budget, the non-parametric GMJE variant is competitive but does not yet surpass the strongest heuristic baselines. This is an important complement to the long-horizon memory experiment above: the 1000-epoch $M=256$ study isolates the benefits of SMC under severe queue constraints, while the 200-epoch benchmark evaluates overall representation quality in a broader SSL comparison.

SMC-GMJE trains stably and improves steadily, suggesting that the non-parametric particle-bank formulation is viable in practice: by replacing the rigid FIFO queue with a dynamic SMC particle filter, the memory bank successfully tracks the marginal distribution\footnote{In contrastive learning, the memory bank serves as an empirical approximation of the marginal distribution $p(z_c)$ over the latent space, which constitutes the denominator of the InfoNCE objective. Geometrically, this represents the overall density and shape of the dataset's manifold. While a rigid FIFO queue indiscriminately discards older representations—thereby losing track of true density peaks, the SMC particle filter dynamically re-weights and resamples particles based on their likelihoods. This ensures the memory bank continuously and accurately maps the high-density regions and hard-negative ridges of the underlying latent landscape.}, and provides stable, informative gradients without succumbing to weight degeneracy or representation collapse. However, the gap to SimCLR, BYOL, and MoCo indicates that the current implementation remains \textit{under-optimized} relative to those mature baselines. This is not surprising: the heuristic baselines have benefited from extensive design refinement, whereas SMC-GMJE is a first principled instantiation of the non-parametric GMJE view.

\vspace{1em}
\paragraph{Summary of vision benchmarks.}
Taken together, the vision experiments support two conclusions. First, \textit{under strong memory constraints}, dynamic particle reweighting is a viable and often more effective alternative to a rigid FIFO queue, yielding a more informative contrastive memory bank and more stable long-horizon training. Second, when evaluated as a practical SSL method on CIFAR-10, the current non-parametric GMJE implementation learns competitive representations but does not yet outperform the strongest heuristic baselines under short pre-training. These findings support the usefulness of the GMJE perspective while also indicating that further optimization and larger-scale tuning are needed for it to fully match mature contrastive and predictive SSL systems.

\subsection{Generative GMJE: Unconditional Image Synthesis via Latent Sampling} \label{subsec:exp_generative}

\paragraph{Objective.} In Section \ref{subsec:GMJE_generative}, we established that GMJE natively acts as a generative model of the representation space, $p(z_t)$. The purpose of this experiment is to evaluate that claim empirically by testing whether unconditional latent samples drawn from GMJE decode into realistic and semantically coherent images, and how this compares with two alternatives: (i) a post-hoc density model fitted to a discriminative SimCLR representation space, and (ii) a unimodal GJE latent density. The key question is whether explicit multi-modal latent density modeling yields a more suitable representation space for unconditional generation.

\paragraph{Task and setup.}
To isolate the generative properties of the learned latent spaces, we construct a simple latent-sampling pipeline on MNIST \cite{lecun1998mnist}, with all images zero-padded to $32 \times 32$. The experiment proceeds in three phases:
\begin{enumerate}
    \item \textit{Manifold learning (encoders):} we train three symmetric encoder families from scratch for 50 epochs to induce their respective latent geometries: SimCLR (InfoNCE), Unimodal GJE (MSE + entropy regularization), and Parametric GMJE (log-sum-exp + entropy regularization).
    
    \item \textit{Latent inversion (decoder):} after pre-training, each encoder is frozen, and a lightweight convolutional decoder $D(z)$ is trained for 50 epochs with an MSE reconstruction loss to map the frozen latent embeddings $z_i$ back to their corresponding images $x_i$.
    
    \item \textit{Unconditional sampling:} the original image dataset is then discarded, and new latent points $\hat{z}$ are sampled directly from the learned latent distributions, projected onto the hypersphere, and passed through the decoder $D(\hat{z})$ to synthesize images.
\end{enumerate}
This setup intentionally separates \emph{representation learning}, \emph{latent density modeling}, and \emph{image decoding}. As a result, sample quality depends both on the geometry of the learned latent distribution and on whether the sampled points remain within regions where the decoder has been trained to invert the latent manifold.

\paragraph{Sampling baselines.}
Because not all SSL models define a native latent density $p(z)$, we compare three different sampling strategies:
\begin{itemize}[label=-]
    \item \textit{SimCLR (Post-Hoc GMM):} we fit a post-hoc Gaussian Mixture Model ($K=50$ components, full covariance, optimized by EM) to the frozen SimCLR embeddings, and sample $\hat{z}$ from this externally fitted distribution.
    
    \item \textit{Unimodal Primal-GJE:} we sample $\hat{z}$ from a single global multivariate Gaussian $\mathcal{N}(\mu,\Sigma)$ estimated from the full latent distribution learned by unimodal GJE.
    
    \item \textit{Parametric (Primal) GMJE:} we sample $\hat{z}$ directly from the mixture distribution natively learned during pre-training,
    $\sum_{k=1}^{K} \pi_k \, \mathcal{N}(\mu_k,\Sigma_k)$.
\end{itemize}

\begin{figure}[H]
    \centering
    \includegraphics[width=1.0\textwidth]{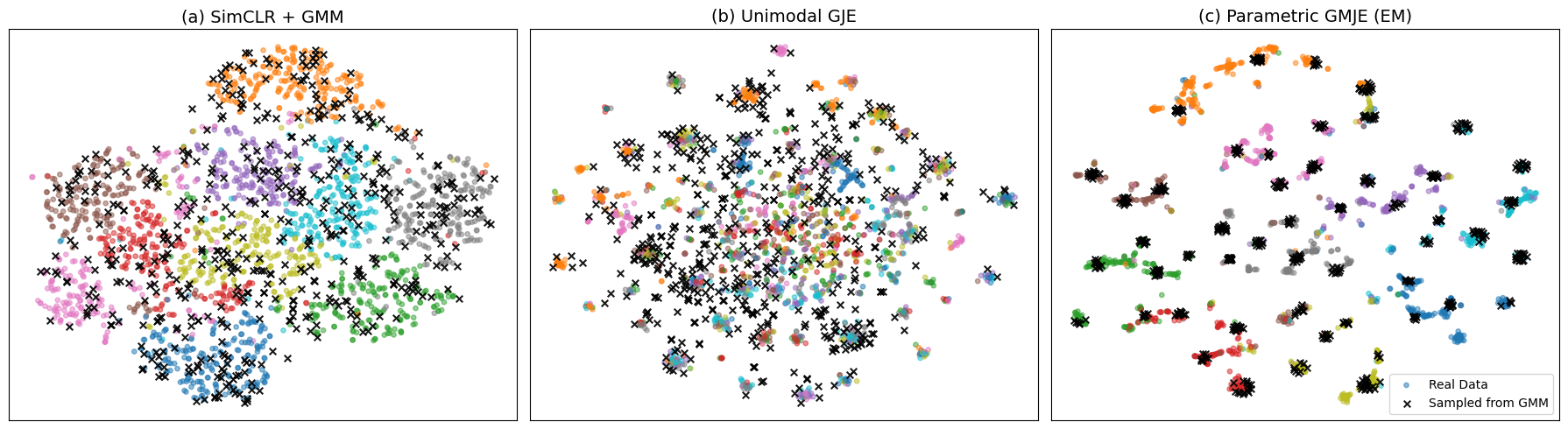}
    \caption{t-SNE visualization of real image embeddings (colored by class) and synthetic latent samples (black crosses). (a) A post-hoc GMM fitted to SimCLR places substantial mass between semantic clusters (e.g. empty regions). (b) Unimodal GJE spans the latent space broadly but lacks clear multi-cluster organization. (c) Parametric GMJE generates samples that align more closely with the clustered semantic structure of the learned manifold.}
    \label{fig:tsne_generative}
\end{figure}

\begin{figure}[H]
    \centering
    \includegraphics[width=0.8\textwidth]{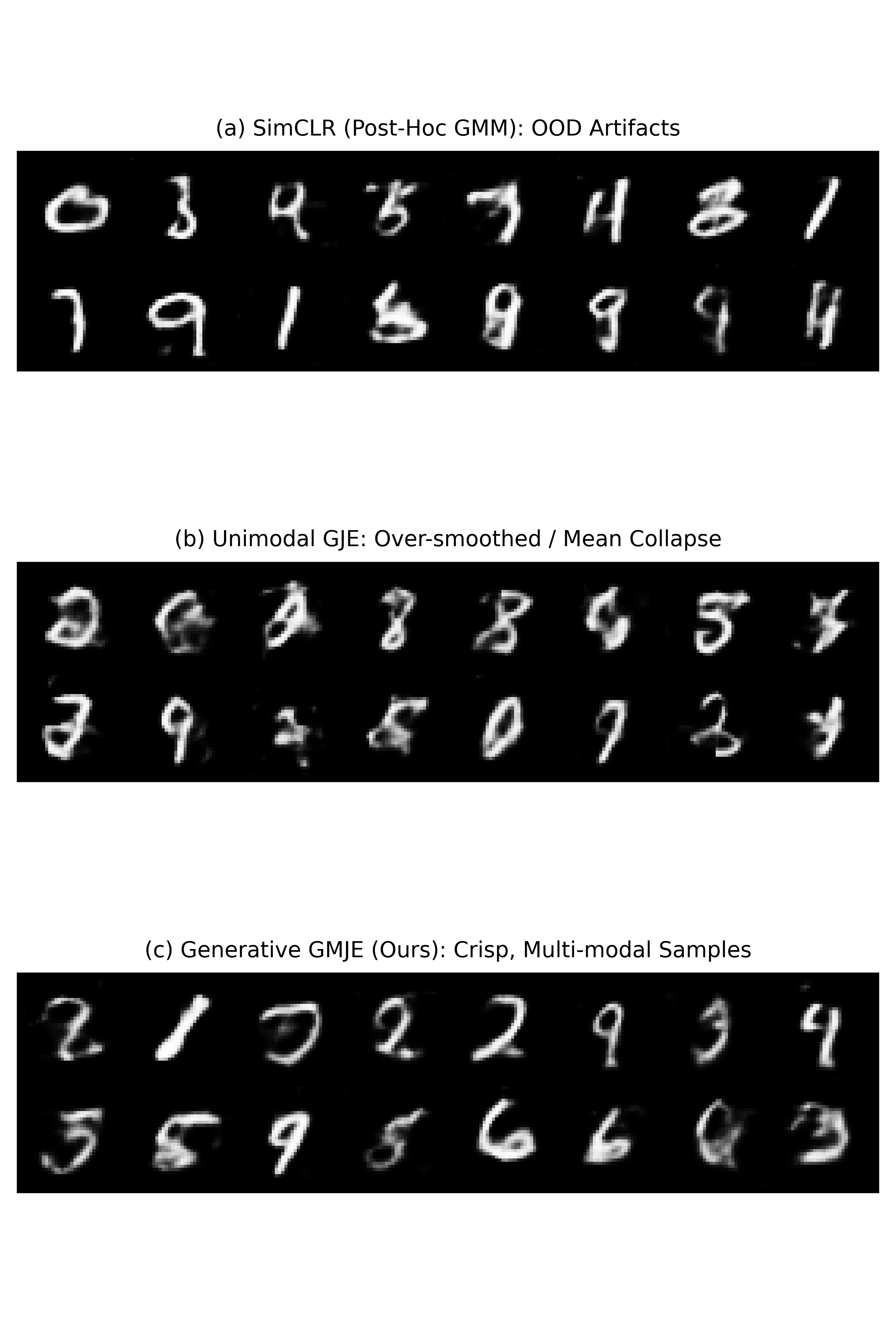} 
    \caption{Unconditional image synthesis from latent samples. (a) SimCLR produces blurry, out-of-distribution chimeras. (b) Unimodal GJE suffers from mean collapse, producing nearly identical, heavily smoothed averages with reduced diversity. (c) Parametric GMJE yields the sharpest and most diverse decoded samples among the three methods.}
    \label{fig:generative_samples}
\end{figure}

{\scriptsize
\begin{table}[H]
    \centering
    \setlength{\tabcolsep}{3pt}
    \renewcommand{\arraystretch}{1.0}
    \caption{Geometric properties of the learned latent distributions ($d=128$). These statistics provide a compact summary of component volume, total variance, and cross-component separation.}
    \label{tab:dist_geometry}
    \resizebox{\linewidth}{!}{
    \begin{tabular}{l c c c}
        \hline
        \textbf{Model Framework} & \textbf{Avg. $\log |\Sigma_k|$ (Volume)} & \textbf{Avg. $Tr(\Sigma_k)$ (Variance)} & \textbf{Prototype Spread ($L^2$)} \\
        \hline
        SimCLR (Post-Hoc GMM) & -878.99 & 0.85 & 0.55 \\
        Unimodal GJE ($K=1$) & -636.15 & 0.99 & N/A \\
        \textbf{Parametric GMJE (EM)} & \textbf{-621.58} & \textbf{1.00} & \textbf{1.43} \\
        \hline
    \end{tabular}}
\end{table}
}

\paragraph{Results and analysis.}
By jointly examining the geometric statistics in Table~\ref{tab:dist_geometry}, the latent topology in Fig.~\ref{fig:tsne_generative}, and the decoded synthetic samples in Fig.~\ref{fig:generative_samples}, we observe three distinct generative behaviors.

\begin{itemize}[label=-]
    \item \textit{Post-hoc density fitting on SimCLR latents.}  
    SimCLR learns useful discriminative representations, but its latent space is not trained as an explicit unconditional generative density. When a post-hoc GMM is fitted to the frozen embeddings, the resulting mixture places non-negligible probability mass in low-density regions between semantic clusters. This is visible in Fig.~\ref{fig:tsne_generative}(a), where sampled points frequently appear in inter-cluster regions rather than concentrating inside the class manifolds. After decoding, these off-manifold samples lead to blurrier and more class-ambiguous digits, as shown in Fig.~\ref{fig:generative_samples}(a). The relatively small prototype spread reported in Table~\ref{tab:dist_geometry} is consistent with this weak macroscopic separation.

    \item \textit{Over-smoothing in unimodal GJE.}  
    The unimodal GJE baseline maintains substantial latent variance and volume (Table~\ref{tab:dist_geometry}), but a single Gaussian component is too restrictive to model the multi-class structure of MNIST. As a result, unconditional samples cover the latent space broadly but lack explicit multi-modal separation. This is reflected in Fig.~\ref{fig:tsne_generative}(b), where sampled points are dispersed without clear cluster structure, and in Fig.~\ref{fig:generative_samples}(b), where the decoded images appear over-smoothed and less diverse. In this sense, the unimodal model captures global density but not class-conditional structure.

    \item \textit{Structured multi-modality in GMJE.}  
    Parametric GMJE combines explicit latent density modeling with multiple separated mixture components. Table~\ref{tab:dist_geometry} shows that it achieves the largest prototype spread while preserving non-degenerate component volumes and overall variance. In Fig.~\ref{fig:tsne_generative}(c), latent samples drawn from the learned mixture align much more closely with clustered semantic regions than either of the two baselines. Correspondingly, the decoded samples in Fig.~\ref{fig:generative_samples}(c) are sharper and more visually diverse. Taken together, these results suggest that GMJE provides a substantially better latent generative model than either post-hoc density fitting on a discriminative space or a single-Gaussian latent approximation.
\end{itemize}

\paragraph{Summary.}
These results highlight an important distinction between \emph{discriminative} and \emph{generative} latent geometry. A representation space that is effective for downstream discrimination does not automatically define a good unconditional sampling distribution. Post-hoc density fitting can partially recover latent structure, but it remains mismatched to an encoder that was not trained with a generative objective. Likewise, a single Gaussian latent model captures only coarse global geometry. By contrast, GMJE directly learns a multi-modal latent density during representation learning itself, which leads to both better-aligned sampled embeddings and more plausible decoded images.

Overall, the MNIST latent-sampling experiment supports the claim that GMJE is not only a representation learner but also a meaningful latent generative model. Compared with a post-hoc GMM on SimCLR and a unimodal GJE density, GMJE produces better-separated latent samples and the strongest qualitative unconditional generations among the tested models. This provides empirical evidence that explicit multi-modal latent density modeling is a useful mechanism for bridging self-supervised representation learning and generative sampling.

\section{Discussion} \label{sec:discussion}

A central contribution of this work is to show that shifting from deterministic prediction to generative joint modeling provides a principled probabilistic perspective on representation collapse in SSL. In particular, by formulating the joint embedding distribution through Gaussian and Gaussian-mixture models, the GMJE framework offers an explicit alternative to heuristic architectural interventions and provides a unified approach to multi-modal representation learning.

\paragraph{Unifying SSL Paradigms.}
Prior to this work, the SSL landscape was largely developed through separate design traditions. In this work, we show that GMJE provides a common probabilistic perspective that connects several of these fragmented directions:
\begin{itemize}[label=-]
    \item \textit{The Contrastive Paradigm (Non-Parametric GMJE):} we show that standard contrastive learning methods (e.g. SimCLR, MoCo) can be interpreted as a degenerate non-parametric special case of the GMJE framework (Section \ref{sec:GMJE_Contrastive}). This perspective highlights a limitation of standard FIFO memory queues, namely that sample replacement is based on age rather than informativeness. To address this, we introduce an \textit{SMC Particle Bank} (Section \ref{sec:SMC_Memory_Bank}), which adaptively reweights memory samples so that informative hard negatives are emphasized while stale or redundant samples are downweighted.
    
    \item \textit{The Predictive Paradigm (Parametric GMJE):} we address the multi-modal alignment difficulty that arises in inverse-style augmentation tasks by replacing deterministic predictors (classic JEPA) with dynamic conditional mixtures (\textit{GMJE-MDN}). In addition, to reduce the rigidity of fixed mixture cardinality $K$, we introduce a topology-preserving \textit{Growing Neural Gas} variant (\textit{GMJE-GNG}) which adaptively adds prototypes according to local quantization error.
\end{itemize}

\paragraph{The Elimination of Asymmetric Heuristics.}
Many empirically successful non-contrastive predictive methods, including BYOL and related Siamese architectures, rely on asymmetric architectural or optimization mechanisms, e.g. EMA target encoders, stop-gradients, or predictor asymmetry, to mitigate representation collapse \cite{grill2020byol,chen2021simsiam,zhuo2023unifiedtheoreticalunderstandingnoncontrastive}. In contrast, the GMJE formulation is designed to mitigate dimensional collapse through an explicit covariance-aware geometric regularizer, including the log-determinant term $\frac{1}{2}\log|\Sigma|$. Under our probabilistic interpretation, the resulting tension between the Mahalanobis data-fitting term and the covariance volume penalty promotes a non-degenerate, full-rank representation geometry. As a result, gradients can be propagated symmetrically through both encoders without relying on the asymmetric dampers or rank-differential mechanisms that are commonly used in prior non-contrastive SSL methods.

\paragraph{Generative vs. Discriminative Manifolds.}
This work connects discriminative representation learning and explicit generative modeling in latent space. Standard predictive and contrastive architectures do not natively define a continuous probabilistic density over the representation space. As demonstrated in our generative experiments (Section.\ref{subsec:exp_generative}), applying post-hoc density estimation to a uniformly scattered contrastive space (e.g. SimCLR) inevitably samples from low-density gaps, producing out-of-distribution chimeric artifacts. Conversely, strictly unimodal constraints collapse generative sampling into an over-smoothed, ambiguous average. In contrast, GMJE equips the latent space with an explicit Gaussian-mixture density model during representation learning itself. By balancing local variance compression with global prototype spread, it constructs a mathematically valid generative manifold capable of crisp, unconditional sampling without sacrificing discriminative utility.

\paragraph{Dynamic Component Selection and GMJE-GNG.}
A persistent challenge in parametric density estimation is selecting the number of mixture components, $K$, in advance. In our Parametric GMJE experiments, $K$ was fixed empirically (e.g. $K=50$ in Section \ref{subsec:exp_generative}) as a rough approximation to the semantic granularity of the dataset. In realistic, uncurated settings, however, the true number of latent modes is rarely known beforehand. Overestimating $K$ can lead to fragmented semantic classes and inactive or redundant prototypes, whereas underestimating $K$ can force several distinct modes to be absorbed into overly broad components, thereby blurring semantic boundaries. Traditional model-selection criteria such as AIC or BIC are mainly post-hoc and are difficult to integrate into end-to-end deep representation learning.

Fortunately, our non-parametric formulation (\textit{SMC-GMJE}, or in general \textit{DaM-GMJE}) inherently bypasses this limitation. By dynamically tracking the marginal distribution using a SMC particle filter, it maps the latent manifold without ever requiring an a priori declaration of $K$. Within a purely parametric architecture, we have \textit{GMJE-GNG} as well. Rather than relying on a fixed $K$, GMJE-GNG represents the marginal distribution through a dynamic topological graph. By monitoring localized representation error and the evolving density of the latent manifold, the model can recruit new Gaussian prototypes in regions requiring additional capacity and prune dormant components when they are no longer useful. In this way, the generative memory bank can adapt its structural complexity to the observed topology of the data, reducing the need to hand-tune $K$ in advance.

\paragraph{The Gaussian Beauty.}
This work further highlights the central role of Gaussian probabilistic structures in modern machine learning. Gaussian models have long served as a foundation for uncertainty-aware inference and function-space learning, as exemplified by Gaussian Processes \cite{rasmussen2006gaussian}, and they also remain closely connected to modern generative modeling through diffusion and score-based methods \cite{huang2022classificationscorebasedgenerativemodelling,Huang2025GMA}. By bringing Gaussian and Gaussian-mixture modeling together with MDN in a symmetric self-supervised framework, our results suggest that representation learning can be elegantly viewed through the lens of probabilistic density estimation\footnote{This work also marks an milestone in the author's broader research on Gaussian families. When I was studying at Oxford, a professor once remarked on his inference course that the Gaussian distribution is one of the most beautiful, and his favorite mathematical structures in existence. I did not understand this fully at the time, but its depth became clearer over the years. I began working on Gaussian processes and MDN in 2020 (with practical use dating back to 2017), then explored score-based generative models and SVGD in 2022, and finally the unification of these ideas under GMM in 2025.}.

\section{Conclusion} \label{sec:conclusion}

In this work, we introduced Gaussian Joint Embeddings (GJE) and its multi-modal extension, Gaussian Mixture Joint Embeddings (GMJE), as a probabilistically grounded framework for self-supervised representation learning. We analyzed the limitations of deterministic predictive architectures, particularly their difficulty in navigating multi-modal inverse problems, and their vulnerability to representation collapse without asymmetric heuristics.

By formulating self-supervised learning as the optimization of a joint generative density $p(z_c, z_t)$, GMJE replaces deterministic black-box prediction with closed-form conditional inference under an explicit probabilistic model, while also providing uncertainty estimates. We further identified the \textit{Mahalanobis Trace Trap} as a failure mode of naive empirical batch optimization and proposed several structural remedies spanning parametric, adaptive, and non-parametric settings, including decoupled global prototypes, dynamic Mixture Density Networks (MDN), topology-adaptive Growing Neural Gas (GMJE-GNG), and Sequential Monte Carlo (SMC) particle filtering.

Across synthetic tasks and high-dimensional vision benchmarks, GMJE recovered complex multi-modal conditional structure and demonstrated strong discriminative performance. By penalizing representation volume while explicitly maintaining multi-modal covariance structure, GMJE also defines a continuous latent density suitable for unconditional sampling. More broadly, the results suggest that generative joint modeling offers a promising route towards more interpretable and flexible representation learning systems that can better accommodate the multi-modal structure of real-world data.

\vspace{1em}
\noindent \textbf{Future Work.} Future extensions of this framework will investigate the integration of GMJE into sequential and autoregressive settings, such as video prediction and Large Language Models (LLMs), where predictive ambiguity is often inherently multi-modal. Another promising direction is to combine the continuous density modeling of GMJE with the discrete neuro-symbolic constraints (e.g. RiJEPA \cite{huang2026RiJEPA}), with the aim of learning more structured and causally informed generative manifolds.

\section{Related Works} \label{sec:related_work}

Self-supervised learning (SSL) has developed through several major paradigms, including pretext-based prediction, contrastive representation learning, non-contrastive joint-embedding methods, predictive architectures, and classical probabilistic modeling. Our Gaussian Mixture Joint Embeddings (GMJE) framework is most closely related to these latter three directions: contrastive learning, non-contrastive and predictive joint-embedding methods, and probabilistic latent-variable approaches.

\paragraph{Self-Supervised and Contrastive Learning.}
The modern era of SSL was strongly influenced by Contrastive Predictive Coding (CPC) \cite{oord2019InfoNCE}, which introduced the InfoNCE objective as a tractable contrastive criterion related to mutual information maximization. While CPC provided an influential information-theoretic perspective on representation learning, the tightness of the InfoNCE mutual-information bound generally improves as more negative samples are used, which motivates large effective contrastive dictionaries \cite{oord2019InfoNCE,wu2021rethinking}. To reduce the dependence on large minibatches, MoCo \cite{He2020MoCo} decoupled dictionary size from minibatch size by introducing a momentum-updated encoder together with a queue-based memory bank, in which the current minibatch is enqueued and the oldest entries are dequeued (i.e. \textit{First-In-First-Out}). This enabled substantially larger sets of negative samples with modest computational overhead, although the replacement rule is based on temporal age rather than sample informativeness. SimCLR \cite{Chen2020simCLR} simplified the contrastive pipeline further by removing the memory bank and instead relying on strong data augmentation, a non-linear projection head, and very large in-batch negative sets. Although architecturally simple and highly effective, SimCLR remains computationally demanding because its performance is closely tied to large-batch training \cite{Chen2020simCLR,yuan2022sogclr}.

\paragraph{Non-Contrastive Learning and Collapse Prevention.}
An early alternative to pairwise discrimination was DeepCluster \cite{caron2019deepcluster}, which alternated between clustering learned representations with $k$-means and using the resulting assignments as pseudo-labels for representation learning. While effective at capturing global semantic structure, this alternating offline clustering procedure was computationally cumbersome and only loosely coupled to continuous network updates \cite{caron2019deepcluster}. To reduce both the reliance on negative sampling and the cost of offline clustering, later methods explored several prominent directions, including asymmetric predictive learning and online clustering. BYOL \cite{grill2020byol} proposed predicting the representation of one augmented view from another using an online network (i.e. a predictor), and a momentum-updated target encoder. Subsequent theoretical analyses showed that architectural asymmetry, especially the predictor and stop-gradient mechanism, plays a central role in preventing representational collapse in such non-contrastive Siamese frameworks \cite{tian2021understanding,halvagal2023implicit,wang2022asymmetry}. In parallel, SwAV \cite{caron2021swav} transformed the DeepCluster-style paradigm into an online assignment framework by mapping representations to learnable prototypes and computing assignments with the Sinkhorn-Knopp algorithm. This reduced the need for explicit pairwise negatives and was motivated in part by avoiding issues such as class collision, although its balanced assignment mechanism can be restrictive when the underlying semantic frequencies are naturally imbalanced \cite{caron2021swav}. SimSiam \cite{chen2021simsiam} later demonstrated that stop-gradients alone, without EMA momentum, were theoretically sufficient to prevent collapse in Siamese networks, and subsequent analyses indicated that stop-gradient together with predictor asymmetry can be sufficient to avoid collapse in simplified Siamese settings \cite{tian2021understanding,chen2021simsiam}.

Expanding on these asymmetric designs, DINO \cite{caron2021emerging} achieved self-distillation with no labels using a student-teacher architecture with an EMA-updated teacher, together with centering and sharpening, notably without requiring a predictor module. DINO was particularly successful at learning object-centric and semantically meaningful visual features, although its collapse-avoidance mechanisms were initially introduced in a largely empirical form \cite{caron2021emerging}. More recently, the Rank Differential Mechanism (RDM) was proposed as a unified theoretical account of several non-contrastive methods \cite{zhuo2023unifiedtheoreticalunderstandingnoncontrastive}. RDM argues that asymmetric designs, including predictors, stop-gradients, and centering, induce a consistent rank difference between the two branches, thereby improving effective dimensionality and alleviating both complete and dimensional feature collapse \cite{zhuo2023unifiedtheoreticalunderstandingnoncontrastive}. Many of these methods therefore rely on asymmetric architectural or optimization mechanisms to stabilize learning. VICReg \cite{bardes2021vicreg} explored a more symmetric alternative by explicitly regularizing the variance and covariance of the learned features. However, it does so through feature-level regularization rather than by modeling a full joint generative dependency between context and target representations \cite{bardes2021vicreg}. In contrast, our GMJE framework is designed to provide an explicitly probabilistic alternative: by modeling the full joint covariance structure, it replaces heuristic collapse-prevention mechanisms with a generative objective that directly couples data fit and covariance-aware geometric regularization.

\paragraph{Deterministic and Probabilistic JEPA.}
Moving away from instance alignment, the Joint Embedding Predictive Architecture (JEPA) \cite{assran2023ijepa,lecun2022path} shifted SSL towards masked latent-space prediction. Rather than reconstructing pixels or tokens, JEPA-style models predict target representations directly in latent space, aiming to capture higher-level semantic structure while filtering out low-level stochastic detail \cite{lecun2022path,assran2023ijepa}. This paradigm has since expanded across domains, including image-based JEPA (I-JEPA) \cite{assran2023ijepa}, video-based JEPA (V-JEPA) \cite{bardes2024vjepa,bardes2025VJEPA2}, motion-content JEPA (MC-JEPA) \cite{bardes2023mcjepa}, skeletal JEPA (S-JEPA) \cite{abdelfattah2024sjepa}, Point-JEPA for point clouds \cite{saito2025pointjepa}, Graph-JEPA \cite{Skenderi2025graphJEPA}, and vision-language extensions such as VL-JEPA \cite{chen2025vlJEPA}. Related developments have also extended JEPA toward planning and world modeling, including JEPA World Models \cite{terver2025jepaworldmodels} and Value-Guided JEPA \cite{Destrade2026ValueGuidedJEPA}. 

Despite this growing family of methods, standard JEPA remains based on a deterministic neural predictor, typically optimized with a regression-style objective such as MSE \cite{assran2023ijepa,bardes2024vjepa}. This design is highly scalable and avoids spending modeling capacity on pixel-level reconstruction \cite{assran2023ijepa}, but deterministic squared-loss prediction is fundamentally limited in genuinely multi-modal settings, since it converges toward the conditional mean and may therefore place predictions in low-density regions (e.g. ``empty regimes'') between valid modes. To address these limitations, several recent probabilistic and structured JEPA variants have emerged. Variational JEPA (VJEPA) \cite{huang2026vjepa} introduces a probabilistic formulation that learns a predictive distribution over future latent states through a variational objective, linking JEPA-style representation learning with Bayesian filtering and Predictive State Representations (PSRs). Bi-directional JEPA (BiJEPA) \cite{huang2026BiJEPA} extends the predictive objective to both directions, encouraging cycle-consistent predictability between data segments and capturing semantic information inherent in inverse relationships, although symmetric prediction can amplify optimization instability and may require explicit norm control, such as $L^2$-normalization, to prevent representation explosion. Most recently, Rule-informed JEPA (RiJEPA) \cite{huang2026RiJEPA} incorporates neuro-symbolic inductive biases through Energy-Based Constraints (EBC), reshaping the latent geometry using structured logical priors to discourage shortcut learning. However, this relies on human-defined logical structure into the latent space. Taken together, these developments address deterministic JEPA from probabilistic, symmetric, and neuro-symbolic directions, yet a fully probabilistic joint-density view of JEPA-style representation learning remains underdeveloped, which motivates our GJE/GMJE framework.

\paragraph{GMM and Probabilistic Inference Mechanisms.}
The mathematical foundation of our GMJE framework builds on classical probabilistic inference and Gaussian density modeling. Mixture Density Networks (MDNs) \cite{bishop1994mixture} combine neural networks with mixture models to represent conditional probability densities and were originally motivated by multi-valued inverse problems. This makes them a natural starting point for conditional multi-modal prediction, although in our setting a naive symmetric use of MDNs can lead to information leakage across views; GMJE-MDN addresses this through an explicit conditional information bottleneck imposed by the joint-modeling formulation. Growing Neural Gas (GNG) \cite{fritzke1995growing} provides a topology-preserving mechanism for adaptively inserting prototypes based on local quantization error. Unlike fixed-$K$ mixture fitting, GNG can discover structure incrementally without requiring the number of components ($K$) in advance, and avoids the local minima of standard Expectation-Maximization (EM), although its traditional implementations are non-differentiable. Our GMJE-GNG adapts this prototype-growth logic for dynamic prototype discovery in continuous representation spaces.

To reduce the computational burden of exact kernel methods, Random Fourier Features (RFF) \cite{rahimi2007random} approximate shift-invariant kernels by projecting inputs into a randomized finite-dimensional Fourier feature space, reducing the computational bottleneck of Gram matrix inversion from $\mathcal{O}(N^3)$ to $\mathcal{O}(ND^2)$. However, as an approximation to dual-space kernel modeling, RFF does not by itself resolve the limitations of unimodal conditional prediction under strong multi-modality, which motivates our shift towards explicit primal-space mixture modeling. Finally, probabilistic inference of GMMs can also be approached through sampling-based methods. For example, Sequential Monte Carlo (SMC) \cite{Moral2006SMC} provides a particle-based framework for sequential weighting, resampling, and posterior approximation, while Gaussian Mixture Approximation (GMA) \cite{Huang2025GMA} introduces a gradient-free optimization-resampling pipeline for approximating unnormalized target densities. Inspired by these sampling and resampling dynamics, we introduce an \textit{SMC-GMJE} variant, which incorporates SMC-style weighting into SSL memory banks in order to adaptively prioritize informative negatives and down-weight stale ones.

\bibliographystyle{plain}
\bibliography{reference}

\appendix

\section{Properties of Gaussian Distributions} \label{app:Gaussian_dist}

In this section, we review the fundamental properties of multivariate Gaussian distributions. These properties form the mathematical foundation for the generative modeling, closed-form inference, and information-theoretic regularization utilized in the Gaussian Joint Embeddings (GJE) framework.

\subsection{Formal Definition and Affine Transformations}
Formally, a $d$-dimensional random vector $\mathbf{x} \in \mathbb{R}^d$ follows a multivariate normal distribution, denoted $\mathbf{x} \sim \mathcal{N}_d(\boldsymbol{\mu}, \Sigma)$, if there exists a mean vector $\boldsymbol{\mu} \in \mathbb{R}^d$ and a matrix $\mathbf{A} \in \mathbb{R}^{d \times \ell}$ such that $\mathbf{x} = \mathbf{A}\mathbf{z} + \boldsymbol{\mu}$, where $\mathbf{z} \sim \mathcal{N}(\textbf{0}, I_\ell)$ is a vector of independent standard normal variables. The covariance matrix is given by $\Sigma = \mathbf{A}\mathbf{A}^\top$.

A critical property of the Gaussian distribution is its closure under affine transformations. If $\mathbf{y} = \mathbf{c} + \mathbf{B}\mathbf{x}$ is an affine transformation of $\mathbf{x} \sim \mathcal{N}(\boldsymbol{\mu}, \Sigma)$ (where $\mathbf{c}$ is a constant vector and $\mathbf{B}$ is a constant matrix), then $\mathbf{y}$ is also multivariate normal with distribution $\mathcal{N}(\mathbf{c} + \mathbf{B}\boldsymbol{\mu}, \mathbf{B}\Sigma\mathbf{B}^\top)$.

\subsection{Density Function and Geometric Interpretation}
When the symmetric covariance matrix $\Sigma$ is positive definite (the non-degenerate case), the probability density function (\textit{pdf}) exists and is defined as:
\begin{equation}
    p(\mathbf{x}) = \frac{1}{\sqrt{(2\pi)^d |\Sigma|}} \exp\left( -\frac{1}{2} (\mathbf{x} - \boldsymbol{\mu})^\top \Sigma^{-1} (\mathbf{x} - \boldsymbol{\mu}) \right)
\end{equation}
where $|\Sigma|$ is the determinant of $\Sigma$, also known as the generalized variance. 

The quantity $\sqrt{(\mathbf{x} - \boldsymbol{\mu})^\top \Sigma^{-1} (\mathbf{x} - \boldsymbol{\mu})}$ is the \textit{Mahalanobis distance}, representing the distance of the point $\mathbf{x}$ from the mean $\boldsymbol{\mu}$ scaled by the covariance. Geometrically, the equidensity contours of a non-singular multivariate normal distribution form \textit{ellipsoids}. The principal axes of these ellipsoids are dictated by the eigenvectors of $\Sigma$, and their squared relative lengths correspond to the respective eigenvalues.

\subsection{Joint Normality vs. Marginal Normality}
It is important to note that if two random variables are normally distributed and independent, they are jointly normally distributed. However, the converse is not generally true: the fact that two variables $\mathbf{x}_1$ and $\mathbf{x}_2$ are each marginally normally distributed does \textit{not} guarantee that their concatenated pair $(\mathbf{x}_1, \mathbf{x}_2)$ follows a joint multivariate normal distribution. In the GJE framework, we explicitly parameterize and optimize the joint Gaussian likelihood to enforce this joint normality. Further, for a jointly normal vector, components that are uncorrelated (cross-covariance is zero) are strictly independent.

\subsection{Marginal and Conditional Distributions}
Suppose we partition a $d$-dimensional Gaussian vector $\mathbf{x}$ into two blocks $\mathbf{x}_1$ and $\mathbf{x}_2$, with corresponding partitions for the mean $\boldsymbol{\mu}$ and covariance $\Sigma$:
\begin{equation}
    \mathbf{x} = \begin{bmatrix} \mathbf{x}_1 \\ \mathbf{x}_2 \end{bmatrix}, \quad
    \boldsymbol{\mu} = \begin{bmatrix} \boldsymbol{\mu}_1 \\ \boldsymbol{\mu}_2 \end{bmatrix}, \quad
    \Sigma = \begin{bmatrix} \Sigma_{11} & \Sigma_{12} \\ \Sigma_{21} & \Sigma_{22} \end{bmatrix}
\end{equation}

\textbf{Marginalization:} to obtain the marginal distribution of a subset of variables, one simply drops the irrelevant variables from the mean vector and covariance matrix. Thus, the marginal distribution of $\mathbf{x}_1$ is strictly $\mathcal{N}(\boldsymbol{\mu}_1, \Sigma_{11})$.

\textbf{Conditioning:} the distribution of $\mathbf{x}_1$ conditional on $\mathbf{x}_2 = \mathbf{a}$ is also multivariate normal, denoted $\mathbf{x}_1 | \mathbf{x}_2 = \mathbf{a} \sim \mathcal{N}(\bar{\boldsymbol{\mu}}, \bar{\Sigma})$, where:
\begin{equation}\label{eq:Gaussian_conditional_dist}
\begin{aligned}
    \bar{\boldsymbol{\mu}} &= \boldsymbol{\mu}_1 + \Sigma_{12} \Sigma_{22}^{-1} (\mathbf{a} - \boldsymbol{\mu}_2) \\
    \bar{\Sigma} &= \Sigma_{11} - \Sigma_{12} \Sigma_{22}^{-1} \Sigma_{21}
\end{aligned}
\end{equation}
A complete step-by-step derivation of this conditional distribution can be found later in Appendix Section.\ref{app:derivation_of_conditional_distribution}.

\textit{Geometric Intuition (The Homoscedasticity Trap):} it is interesting to examine the role of the observed variable $\mathbf{a}$ in Eq.\ref{eq:Gaussian_conditional_dist}. In the formula for the conditional mean $\bar{\boldsymbol{\mu}}$, the variable $\mathbf{a}$ is explicitly multiplied by the matrix block $\Sigma_{12} \Sigma_{22}^{-1}$. Consequently, the mean strictly depends on the observed condition; as $\mathbf{a}$ changes, the mean shifts, forming a linear regression path. 

Conversely, look at the equation for the conditional variance $\bar{\Sigma}$ (the Schur complement). The variable $\mathbf{a}$ is entirely absent. The matrices $\Sigma_{11}$, $\Sigma_{12}$, $\Sigma_{22}$, and $\Sigma_{21}$ are just static blocks of the global, fixed joint covariance matrix $\Sigma$. Because there is no $\mathbf{a}$ anywhere in that variance formula, the conditional variance always evaluates to a single, constant matrix. 

In statistics, this property of the Gaussian distribution is called \textit{homoscedasticity} (constant variance). Geometrically, this means if we have a global 2D Gaussian density (a giant oval) and we slice it vertically at different observation points (e.g. $x=-0.5$, $x=0$, or $x=0.9$), the center of the slice will slide up and down the regression line (due to $\bar{\boldsymbol{\mu}}$), but the thickness of the slice (the variance $\bar{\Sigma}$) will be exactly the same every single time. This rigid mathematical property highlights exactly why a single global Gaussian covariance fails to model complex, multi-modal manifolds where the underlying variance changes dynamically across the input space.

\subsection{Information-Theoretic Properties}
The multivariate normal distribution has profound connections to information theory, which theoretically justifies its use in self-supervised objective functions. 

\paragraph{Differential Entropy}
The differential entropy of a $d$-dimensional multivariate normal distribution with probability density function $p(\mathbf{x})$ is defined as the negative integral of the log-density. Evaluating this integral yields:
\begin{align} \label{eq:Gaussian_differential_entropy}
    H(\mathbf{x}) &= -\int_{-\infty}^\infty \dots \int_{-\infty}^\infty p(\mathbf{x}) \ln p(\mathbf{x}) d\mathbf{x} \nonumber \\
    &= \frac{1}{2}\ln|2\pi e\Sigma| = \frac{d}{2}(1 + \ln(2\pi)) + \frac{1}{2}\ln|\Sigma|
\end{align}
measured in \textit{nats}. This reveals that the log-determinant penalty in the GJE loss function (Eq.\ref{eq:primal_gje_loss} and Eq.\ref{eq:GJE_cond_loss}) directly regularizes the entropy of the learned representations.

\paragraph{Kullback-Leibler (KL) Divergence}
The KL divergence measures the distance from a reference probability distribution $\mathcal{N}_1(\boldsymbol{\mu}_1, \Sigma_1)$ to a target distribution $\mathcal{N}_0(\boldsymbol{\mu}_0, \Sigma_0)$. For non-singular matrices $\Sigma_1$ and $\Sigma_0$, it is given by:
\begin{equation}
    D_{\text{KL}}(\mathcal{N}_0 \parallel \mathcal{N}_1) = \frac{1}{2} \left\{ \text{tr}(\Sigma_1^{-1} \Sigma_0) + (\boldsymbol{\mu}_1 - \boldsymbol{\mu}_0)^T \Sigma_1^{-1} (\boldsymbol{\mu}_1 - \boldsymbol{\mu}_0) - d + \ln \frac{|\Sigma_1|}{|\Sigma_0|} \right\}
\end{equation}
where $\text{tr}(\cdot)$ is the trace operator, $\ln(\cdot)$ is the natural logarithm, and $d$ is the dimension of the vector space. Dividing this expression by $\ln(2)$ yields the divergence in \textit{bits}. 

Because our GJE formulation anchors the latent representations to zero-mean distributions ($\boldsymbol{\mu}_1 = \boldsymbol{\mu}_0 = \mathbf{0}$), the quadratic mean-difference term mathematically vanishes. In this specialized case, the divergence simplifies beautifully to:
\begin{equation}
    D_{\text{KL}}(\mathcal{N}_0 \parallel \mathcal{N}_1) = \frac{1}{2} \left\{ \text{tr}(\Sigma_1^{-1} \Sigma_0) - d + \ln \frac{|\Sigma_1|}{|\Sigma_0|} \right\}
\end{equation}

\paragraph{Mutual Information}
The mutual information of a multivariate normal distribution is a special case of the Kullback-Leibler divergence where the joint distribution $P$ is compared against $Q$, the product of its marginal distributions. It can be derived directly step-by-step from its fundamental relationship with differential entropy. For a jointly normal partitioned vector consisting of context embeddings $\mathbf{x}_1 \in \mathbb{R}^{d_1}$ and target embeddings $\mathbf{x}_2 \in \mathbb{R}^{d_2}$, the mutual information is defined as the sum of their marginal entropies minus their joint entropy:
\begin{equation} \label{eq:Gaussian_dist_mutual_infomation}
\begin{aligned}
    I(\mathbf{x}_1, \mathbf{x}_2) &= H(\mathbf{x}_1) + H(\mathbf{x}_2) - H(\mathbf{x}_1, \mathbf{x}_2) \\
    &= \left( \frac{d_1}{2}\ln(2\pi e) + \frac{1}{2}\ln\det(\Sigma_{11}) \right) + \left( \frac{d_2}{2}\ln(2\pi e) + \frac{1}{2}\ln\det(\Sigma_{22}) \right) \\
    &\quad - \left( \frac{d_1 + d_2}{2}\ln(2\pi e) + \frac{1}{2}\ln\det(\Sigma) \right) \\
    &= \frac{1}{2}\ln\det(\Sigma_{11}) + \frac{1}{2}\ln\det(\Sigma_{22}) - \frac{1}{2}\ln\det(\Sigma) \\
    &= \frac{1}{2} \ln \left( \frac{\det(\Sigma_{11}) \det(\Sigma_{22})}{\det(\Sigma)} \right)
\end{aligned}
\end{equation}
where $\Sigma$ is the full joint covariance block, and $\Sigma_{11}, \Sigma_{22}$ are the marginal auto-covariances corresponding to the respective partitions. 

This formula provides powerful theoretical validation for our GJE framework. By minimizing the joint complexity penalty $\frac{1}{2}\log|\Sigma|$ from our NLL objective (Eq.\ref{eq:primal_gje_loss}), the network is mathematically driven to maximize the mutual information between the context and target embeddings, provided the marginal volumes ($|\Sigma_{11}|$ and $|\Sigma_{22}|$) do not collapse.

\subsection{Log-Likelihood and Sampling}
If the mean and covariance matrix are known, the log-likelihood of an observed vector $\mathbf{x}$ is the log of the PDF:
\begin{equation}
    \ln L(\mathbf{x}) = -\frac{1}{2} \left[ \ln(|\Sigma|) + (\mathbf{x} - \boldsymbol{\mu})^\top \Sigma^{-1} (\mathbf{x} - \boldsymbol{\mu}) + d\ln(2\pi) \right]
\end{equation}
which is distributed as a generalized chi-squared variable.

Finally, to synthetically draw (sample) values from this generative distribution, one can compute the Cholesky decomposition $\mathbf{A}\mathbf{A}^\top = \Sigma$. By drawing a vector of independent standard normal variables $\mathbf{z} \sim \mathcal{N}(\textbf{0}, I)$, a valid sample from the target joint distribution is obtained via the affine transformation $\mathbf{x} = \boldsymbol{\mu} + \mathbf{A}\mathbf{z}$.

\section{Block Matrix Inversion and Determinant}
\label{app:block_matrix}
For a block matrix $P$ partitioned into four conformable blocks:
\begin{equation}
P = \begin{bmatrix} A & B \\ C & D \end{bmatrix}
\end{equation}
where $A$ and $D$ are square blocks of arbitrary size.

\subsection{Block Matrix Inversion}
If the matrix $A$ is invertible, we define the Schur complement of $A$ in $P$ as $P/A = D - CA^{-1}B$. Provided this Schur complement is also invertible, the matrix can be inverted blockwise as:
\begin{equation}
\begin{bmatrix} A & B \\ C & D \end{bmatrix}^{-1} = 
\begin{bmatrix} 
A^{-1} + A^{-1}B(D - CA^{-1}B)^{-1}CA^{-1} & -A^{-1}B(D - CA^{-1}B)^{-1} \\ 
-(D - CA^{-1}B)^{-1}CA^{-1} & (D - CA^{-1}B)^{-1} 
\end{bmatrix}
\end{equation}

Equivalently, by permuting the blocks and focusing on $D$, we define the Schur complement of $D$ in $P$ as $P/D = A - BD^{-1}C$. If $D$ and $P/D$ are invertible, the inverse is:
\begin{equation}
\begin{bmatrix} A & B \\ C & D \end{bmatrix}^{-1} = 
\begin{bmatrix} 
(A - BD^{-1}C)^{-1} & -(A - BD^{-1}C)^{-1}BD^{-1} \\ 
-D^{-1}C(A - BD^{-1}C)^{-1} & D^{-1} + D^{-1}C(A - BD^{-1}C)^{-1}BD^{-1} 
\end{bmatrix}
\end{equation}

By the symmetry of the block inversion formula, if the inverse matrix $P^{-1}$ is partitioned conformally into blocks $E, F, G,$ and $H$, the inverse of any principal submatrix can be computed from the corresponding blocks. For example, $A^{-1} = E - FH^{-1}G$.

\paragraph{Inverse of the joint covariance matrix (Eq.\ref{eq:primal_GJE_cov_inverse})} By substituting $A=K_{cc}$, $B=K_{ct}$, $C=K_{tc}$, and $D=K_{tt}$ into the standard block matrix inversion identities, we can express the precision matrix $\Lambda$ in two equivalent forms depending on which Schur complement we factorize.

\textbf{Way 1: using the Schur complement of the context covariance} \\
Let $S_{cc} = K_{tt} - K_{tc} K_{cc}^{-1} K_{ct}$ be the Schur complement of $K_{cc}$ (which corresponds exactly to the conditional covariance $\Sigma_{t|c}$, see Eq.\ref{eq:GJE_conditional_mean_cov_final} in Appendix.\ref{app:derivation_of_conditional_distribution}). The inverse matrix expands as:
\begin{equation} \label{eq:GJE_cov_inv1}
    \begin{bmatrix} K_{cc} & K_{ct} \\ K_{tc} & K_{tt} \end{bmatrix}^{-1} 
    = \begin{bmatrix} \Lambda_{cc} & \Lambda_{ct} \\ \Lambda_{tc} & \Lambda_{tt} \end{bmatrix}
    = \begin{bmatrix} 
    K_{cc}^{-1} + K_{cc}^{-1} K_{ct} S_{cc}^{-1} K_{tc} K_{cc}^{-1} & -K_{cc}^{-1} K_{ct} S_{cc}^{-1} \\ 
    -S_{cc}^{-1} K_{tc} K_{cc}^{-1} & S_{cc}^{-1} 
    \end{bmatrix}
\end{equation}

\textbf{Way 2: using the Schur complement of the target covariance} \\
Alternatively, let $S_{tt} = K_{cc} - K_{ct} K_{tt}^{-1} K_{tc}$ be the Schur complement of $K_{tt}$ (which corresponds to the conditional covariance for the reverse prediction, $\Sigma_{c|t}$). The inverse matrix expands as:
\begin{equation} \label{eq:GJE_cov_inv2}
    \begin{bmatrix} K_{cc} & K_{ct} \\ K_{tc} & K_{tt} \end{bmatrix}^{-1} 
    = \begin{bmatrix} \Lambda_{cc} & \Lambda_{ct} \\ \Lambda_{tc} & \Lambda_{tt} \end{bmatrix}
    = \begin{bmatrix} 
    S_{tt}^{-1} & -S_{tt}^{-1} K_{ct} K_{tt}^{-1} \\ 
    -K_{tt}^{-1} K_{tc} S_{tt}^{-1} & K_{tt}^{-1} + K_{tt}^{-1} K_{tc} S_{tt}^{-1} K_{ct} K_{tt}^{-1} 
    \end{bmatrix}
\end{equation}

The "Way 2" formulation is exactly what allows us to isolate the $z_t$ terms to complete the square when deriving the conditional distribution $p(z_t | z_c)$ (Gaussian conditioning see Appendix.\ref{app:derivation_of_conditional_distribution}). Specifically, looking at the bottom right block ($\Lambda_{tt} = S_{cc}^{-1}$), we can immediately see where the conditional covariance $\Sigma_{t|c}$ comes from.

\subsection{Block Matrix Determinant}
The determinant of the block matrix $P$ can also be factorized using the Schur complement. If $A$ is invertible, the determinant is computed as:
\begin{equation}
\det \begin{bmatrix} A & B \\ C & D \end{bmatrix} = \det(A) \det(D - CA^{-1}B)
\end{equation}

Conversely, if $D$ is invertible, the determinant can be computed as:
\begin{equation}
\det \begin{bmatrix} A & B \\ C & D \end{bmatrix} = \det(D) \det(A - BD^{-1}C)
\end{equation}

Further, if the blocks are square matrices of the \textit{same} size, specific commutativity conditions allow for simpler formulas. For instance, if $C$ and $D$ commute ($CD = DC$), then:
\begin{equation}
\det \begin{bmatrix} A & B \\ C & D \end{bmatrix} = \det(AD - BC)
\end{equation}
Similarly, if $A=D$ and $B=C$ (even if $A$ and $B$ do not commute), the determinant simplifies to the product of characteristic polynomials:
\begin{equation}
\det \begin{bmatrix} A & B \\ B & A \end{bmatrix} = \det(A - B) \det(A + B)
\end{equation}

\section{Derivation of the Conditional Distribution via Affine Transform, and Joint Distribution and Completing the Square} \label{app:derivation_of_conditional_distribution} 

To provide a rigorous mathematical foundation, we present two distinct but consistent methods to derive the conditional distribution $p(\mathbf{x}_1 | \mathbf{x}_2)$ of a general multivariate Gaussian. 

Let the concatenated vector $\mathbf{x} = [\mathbf{x}_1^T, \mathbf{x}_2^T]^T$ follow a joint Gaussian distribution $\mathcal{N}(\boldsymbol{\mu}, \Sigma)$, partitioned as:
\begin{equation*}
    \boldsymbol{\mu} = \begin{bmatrix} \boldsymbol{\mu}_1 \\ \boldsymbol{\mu}_2 \end{bmatrix}, \quad
    \Sigma = \begin{bmatrix} \Sigma_{11} & \Sigma_{12} \\ \Sigma_{21} & \Sigma_{22} \end{bmatrix}, \quad 
    \Lambda = \Sigma^{-1} = \begin{bmatrix} \Lambda_{11} & \Lambda_{12} \\ \Lambda_{21} & \Lambda_{22} \end{bmatrix}
\end{equation*}

\subsection{Method 1: Derivation via Affine Transformation} 

We can derive the conditional distribution by constructing a new random vector $\mathbf{z}$ that is strictly independent of $\mathbf{x}_2$ using an affine transformation.

Let us define a new random vector $\mathbf{z} = \mathbf{x}_1 - \Sigma_{12}\Sigma_{22}^{-1}\mathbf{x}_2$. To formalize this, we express the concatenated vector $[\mathbf{z}^T, \mathbf{x}_2^T]^T$ as a direct linear matrix transformation of the original partitioned joint vector:
\begin{equation*}
    \begin{bmatrix} \mathbf{z} \\ \mathbf{x}_2 \end{bmatrix} = \begin{bmatrix} \mathbf{I} & -\Sigma_{12}\Sigma_{22}^{-1} \\ \mathbf{0} & \mathbf{I} \end{bmatrix} \begin{bmatrix} \mathbf{x}_1 \\ \mathbf{x}_2 \end{bmatrix}
\end{equation*}

Because the original vector $[\mathbf{x}_1^T, \mathbf{x}_2^T]^T$ is jointly Gaussian by definition, and the multivariate Gaussian distribution is closed under affine transformations, this matrix multiplication strictly guarantees that the new concatenated vector $[\mathbf{z}^T, \mathbf{x}_2^T]^T$ remains jointly Gaussian. 

Since they are jointly Gaussian, we determine their dependency by computing their cross-covariance:
\begin{align*}
    \text{Cov}(\mathbf{z}, \mathbf{x}_2) &= \text{Cov}(\mathbf{x}_1 - \Sigma_{12}\Sigma_{22}^{-1}\mathbf{x}_2, \mathbf{x}_2) \\
    &= \text{Cov}(\mathbf{x}_1, \mathbf{x}_2) - \Sigma_{12}\Sigma_{22}^{-1} \text{Cov}(\mathbf{x}_2, \mathbf{x}_2) \\
    &= \Sigma_{12} - \Sigma_{12}\Sigma_{22}^{-1}\Sigma_{22} = \mathbf{0}
\end{align*}
For jointly Gaussian vectors, a cross-covariance of zero implies strict statistical independence. Thus, $\mathbf{z}$ and $\mathbf{x}_2$ are entirely independent. 

Next, we evaluate the marginal mean and covariance of this new variable $\mathbf{z}$:
\begin{align*}
    \mathbb{E}[\mathbf{z}] &= \mathbb{E}[\mathbf{x}_1] - \Sigma_{12}\Sigma_{22}^{-1}\mathbb{E}[\mathbf{x}_2] = \boldsymbol{\mu}_1 - \Sigma_{12}\Sigma_{22}^{-1}\boldsymbol{\mu}_2 \\
    \text{Cov}(\mathbf{z}, \mathbf{z}) &= \text{Cov}(\mathbf{x}_1 - \Sigma_{12}\Sigma_{22}^{-1}\mathbf{x}_2, \mathbf{x}_1 - \Sigma_{12}\Sigma_{22}^{-1}\mathbf{x}_2) \\
    &= \Sigma_{11} - \Sigma_{12}\Sigma_{22}^{-1}\Sigma_{21} - \Sigma_{12}\Sigma_{22}^{-1}\Sigma_{21} + \Sigma_{12}\Sigma_{22}^{-1}\Sigma_{22}\Sigma_{22}^{-1}\Sigma_{21} \\
    &= \Sigma_{11} - \Sigma_{12}\Sigma_{22}^{-1}\Sigma_{21}
\end{align*}

Now, we can express our target variable $\mathbf{x}_1$ in terms of our independent variable: $\mathbf{x}_1 = \mathbf{z} + \Sigma_{12}\Sigma_{22}^{-1}\mathbf{x}_2$. Conditioning this equation on the observation $\mathbf{x}_2 = \mathbf{a}$ gives:
\begin{equation*}
    \mathbf{x}_1 | (\mathbf{x}_2 = \mathbf{a}) = \mathbf{z} | (\mathbf{x}_2 = \mathbf{a}) + \Sigma_{12}\Sigma_{22}^{-1}\mathbf{a}
\end{equation*}
Because $\mathbf{z}$ is independent of $\mathbf{x}_2$, observing $\mathbf{x}_2 = \mathbf{a}$ provides no information about $\mathbf{z}$. The conditional distribution of $\mathbf{z}$ is simply its marginal distribution. Since adding a constant vector to a Gaussian vector results in another Gaussian vector with a shifted mean, we conclude $\mathbf{x}_1 | \mathbf{x}_2 = \mathbf{a}$ is exactly Gaussian with parameters:
\begin{empheq}[box=\fbox]{equation}
\begin{aligned}
    \bar{\boldsymbol{\mu}} &= \boldsymbol{\mu}_1 + \Sigma_{12}\Sigma_{22}^{-1}(\mathbf{a} - \boldsymbol{\mu}_2) \\
    \bar{\Sigma} &= \Sigma_{11} - \Sigma_{12}\Sigma_{22}^{-1}\Sigma_{21}
\end{aligned}
\label{eq:gen_cond_mean_cov}
\end{empheq}

\subsection{Method 2: Derivation via Joint Distribution and Completing the Square}
Alternatively, we utilize Bayes Theorem: $p(\mathbf{x}_1 | \mathbf{x}_2) \propto p(\mathbf{x}_1, \mathbf{x}_2)$. Because the marginal distribution $p(\mathbf{x}_2)$ acts as a constant with respect to $\mathbf{x}_1$, the terms in the joint Gaussian exponent that depend on $\mathbf{x}_1$ must perfectly match the terms in the conditional Gaussian exponent. 

The joint PDF is proportional to $\exp\left( -\frac{1}{2} (\mathbf{x} - \boldsymbol{\mu})^T \Lambda (\mathbf{x} - \boldsymbol{\mu}) \right)$. Expanding the quadratic form and isolating terms containing $\mathbf{x}_1$:
\begin{equation} \label{eq:gen_expanded_quad}
    (\mathbf{x}_1 - \boldsymbol{\mu}_1)^T \Lambda_{11} (\mathbf{x}_1 - \boldsymbol{\mu}_1) + 2 (\mathbf{x}_1 - \boldsymbol{\mu}_1)^T \Lambda_{12} (\mathbf{x}_2 - \boldsymbol{\mu}_2) + \dots
\end{equation}

We must manipulate these $\mathbf{x}_1$-dependent terms into the standard form of a conditional Gaussian exponent:
\begin{equation} \label{eq:gen_target_quad}
    (\mathbf{x}_1 - \bar{\boldsymbol{\mu}})^T \bar{\Sigma}^{-1} (\mathbf{x}_1 - \bar{\boldsymbol{\mu}}) = \mathbf{x}_1^T \bar{\Sigma}^{-1} \mathbf{x}_1 - 2 \mathbf{x}_1^T \bar{\Sigma}^{-1} \bar{\boldsymbol{\mu}} + \bar{\boldsymbol{\mu}}^T \bar{\Sigma}^{-1} \bar{\boldsymbol{\mu}}
\end{equation}

By matching the coefficients between our expanded joint equation (Eq.\ref{eq:gen_expanded_quad}) and our target conditional equation (Eq.\ref{eq:gen_target_quad}), we solve for the conditional parameters:

\textbf{1. Matching the quadratic term ($\mathbf{x}_1^T [\cdot] \mathbf{x}_1$):}
\begin{empheq}[box=\fbox]{equation}
\bar{\Sigma}^{-1} = \Lambda_{11} \implies \bar{\Sigma} = \Lambda_{11}^{-1}
\label{eq:gen_cond_var_prec}
\end{empheq}

\textbf{2. Matching the linear term ($\mathbf{x}_1^T [\cdot]$):}
From Eq.\ref{eq:gen_expanded_quad}, the linear terms in $\mathbf{x}_1$ expand to $-2\mathbf{x}_1^T \Lambda_{11} \boldsymbol{\mu}_1 + 2\mathbf{x}_1^T \Lambda_{12} (\mathbf{x}_2 - \boldsymbol{\mu}_2)$. Equating this to the target linear term $-2\mathbf{x}_1^T \bar{\Sigma}^{-1} \bar{\boldsymbol{\mu}}$ and substituting $\bar{\Sigma}^{-1} = \Lambda_{11}$ yields:
\begin{align*}
    -2\mathbf{x}_1^T \Lambda_{11} \bar{\boldsymbol{\mu}} &= -2\mathbf{x}_1^T \Lambda_{11} \boldsymbol{\mu}_1 + 2\mathbf{x}_1^T \Lambda_{12} (\mathbf{x}_2 - \boldsymbol{\mu}_2) \\
    -\Lambda_{11} \bar{\boldsymbol{\mu}} &= -\Lambda_{11} \boldsymbol{\mu}_1 + \Lambda_{12} (\mathbf{x}_2 - \boldsymbol{\mu}_2)
\end{align*}
Therefore:
\begin{empheq}[box=\fbox]{equation}
\bar{\boldsymbol{\mu}} = \boldsymbol{\mu}_1 - \Lambda_{11}^{-1} \Lambda_{12} (\mathbf{x}_2 - \boldsymbol{\mu}_2)
\label{eq:gen_cond_mean_prec}
\end{empheq}

\textbf{Consistency of both derivation methods:} by the standard block matrix inversion identities (see Appendix.\ref{app:block_matrix}), $\Lambda_{11}^{-1} = \Sigma_{11} - \Sigma_{12} \Sigma_{22}^{-1} \Sigma_{21}$ and $\Lambda_{12} = -\Lambda_{11} \Sigma_{12} \Sigma_{22}^{-1}$. 

First, substituting the identity for $\Lambda_{11}^{-1}$ directly into Eq.\ref{eq:gen_cond_var_prec} immediately verifies the conditional covariance matches Method 1 (Eq.\ref{eq:gen_cond_mean_cov}):
\begin{equation*}
    \bar{\Sigma} = \Lambda_{11}^{-1} = \Sigma_{11} - \Sigma_{12} \Sigma_{22}^{-1} \Sigma_{21}
\end{equation*}

Second, substituting both identities into the conditional mean from Eq.\ref{eq:gen_cond_mean_prec} reveals:
\begin{equation*}
    \bar{\boldsymbol{\mu}} = \boldsymbol{\mu}_1 - \Lambda_{11}^{-1} (-\Lambda_{11} \Sigma_{12} \Sigma_{22}^{-1}) (\mathbf{x}_2 - \boldsymbol{\mu}_2) = \boldsymbol{\mu}_1 + \Sigma_{12} \Sigma_{22}^{-1} (\mathbf{x}_2 - \boldsymbol{\mu}_2)
\end{equation*}
which proves that, both derivation methods mathematically arrive at the exact same conditional mean and covariance.

\subsection{Application to Gaussian Joint Embeddings (GJE)}
In the specific context of the GJE framework presented in Section.\ref{sec:GJE}, we define our variables as $Z = [z_c^T, z_t^T]^T$. We map the target variable $\mathbf{x}_1 \rightarrow z_t$, the condition variable $\mathbf{x}_2 \rightarrow z_c$, and the observation $\mathbf{a} \rightarrow z_c$. 

Specifically, GJE anchors the latent representations to a \textbf{zero-mean prior} (Eq.\ref{eq:primal_GJE_joint_Gaussian_pdf}), setting $\boldsymbol{\mu}_1 = \mathbf{0}$ and $\boldsymbol{\mu}_2 = \mathbf{0}$. The covariance blocks map as $\Sigma_{11} \rightarrow \Sigma_{tt}$, $\Sigma_{22} \rightarrow \Sigma_{cc}$, $\Sigma_{12} \rightarrow \Sigma_{tc}$, and $\Sigma_{21} \rightarrow \Sigma_{ct}$.
Applying these zero-mean constraints to the general formulas derived above instantly yields the closed-form GJE predictive conditional mean ($\mu_{t|c}$) and epistemic uncertainty ($\Sigma_{t|c}$):

\textbf{Via Covariance Blocks} (from Eq.\ref{eq:gen_cond_mean_cov}):
\begin{empheq}[box=\fbox]{equation}
\begin{aligned}
    \mu_{t|c} &= \Sigma_{tc} \Sigma_{cc}^{-1} z_c \\
    \Sigma_{t|c} &= \Sigma_{tt} - \Sigma_{tc} \Sigma_{cc}^{-1} \Sigma_{ct}
\end{aligned}
\label{eq:GJE_conditional_mean_cov_final}
\end{empheq}

\textbf{Via Precision Blocks} (from Eq.\ref{eq:gen_cond_mean_prec} and Eq.\ref{eq:gen_cond_var_prec}):
\begin{empheq}[box=\fbox]{equation}
\begin{aligned}
    \mu_{t|c} &= -\Lambda_{tt}^{-1} \Lambda_{tc} z_c \\
    \Sigma_{t|c} &= \Lambda_{tt}^{-1}
\end{aligned}
\end{empheq}

\section{Primal-GJE Learning Objective: Joint vs. Conditional Likelihood} \label{app:primal_GJE_joint_vs_conditional_objectives}

In Primal-GJE (Section.\ref{subsec:primal_GJE}), by modeling the context/target representations as jointly Gaussian, our optimization objective can be formulated from two distinct mathematical perspectives: the \textit{generative} (joint) perspective and the \textit{predictive} (conditional) perspective. 

\subsection{The Generative Objective (Joint NLL)}
To learn the optimal encoder weights $\theta$ and $\theta'$ directly, we minimize the Negative Log Likelihood (NLL) of the joint distribution. Let $z_i = [z_{c,i}^T, z_{t,i}^T]^T \in \mathbb{R}^d$ represent a single concatenated context-target embedding pair from the batch. The probability density function (pdf) of our zero-mean joint Gaussian is:
\begin{equation} \label{eq:GJE_joint_pdf1}
    p(z_{c,i}, z_{t,i}) = \frac{1}{\sqrt{(2\pi)^{d} |C_{joint}|}} \exp\left(-\frac{1}{2} \begin{bmatrix} z_{c,i} \\ z_{t,i} \end{bmatrix}^T C_{joint}^{-1} \begin{bmatrix} z_{c,i} \\ z_{t,i} \end{bmatrix} \right)
\end{equation}
Taking the negative natural logarithm and dropping the constant term, the joint loss function evaluated at this single data point becomes:
\begin{equation} \label{eq:primal_GJE_joint_loss_single_data_point}
\begin{aligned}
\mathcal{L}_{\text{joint}}^{(i)}(\theta, \theta') 
&= \underbrace{\frac{1}{2} z_i^T C_{joint}^{-1} z_i}_{\text{data-fit term}} + \underbrace{\frac{1}{2} \log |C_{joint}|}_{\text{regularizer}} \\
&= \frac{1}{2}
\begin{bmatrix} z_{c,i} \\ z_{t,i} \end{bmatrix}^T
\begin{bmatrix} C_{cc} & C_{ct} \\ C_{tc} & C_{tt} \end{bmatrix}^{-1}
\begin{bmatrix} z_{c,i} \\ z_{t,i} \end{bmatrix}
+ \frac{1}{2} \log \left|
\begin{bmatrix} C_{cc} & C_{ct} \\ C_{tc} & C_{tt} \end{bmatrix}
\right|
\end{aligned}
\end{equation}
The second term, i.e. the log-determinant complexity penalty $\log |C_{joint}|$, serves as \textit{the geometric regularizer} which prevents the volume of the embeddings from expanding infinitely to trivially minimize the data-fitting term.
Note that this single-sample loss is exactly the summand of the full empirical batch objective $\mathcal{L}_{\text{Primal-GJE}}$ (Eq.\ref{eq:primal_gje_loss}). It is this strict \textit{generative joint objective} $p(z_c, z_t)$ that mathematically underpins the Primal-GJE formulation. This stands in contrast to the Dual-GJE (GPJE) formulation derived earlier in Section \ref{subsec:dual_GJE}, which optimizes the Gaussian Process marginal likelihood - a strictly \textit{conditional} distribution\footnote{Se  Eq.(2.29) and Eq.(2.30) in the GP for ML book \cite{rasmussen2006gaussian}.} $p(Z_t \mid Z_c)$ that models the target features given the context inputs, thereby lacking a native marginal regularizer for the context space.

\subsection{The Predictive Objective (Conditional NLL)}
Alternatively, as $z_c$ and $z_t$ are jointly Gaussian, the conditional distribution of the target given the context, $p(z_t | z_c)$, is guaranteed to be Gaussian. Using the Schur complement (derivations see Appendix.\ref{app:derivation_of_conditional_distribution}), the conditional distribution is exactly $\mathcal{N}(\mu_{t|c}, \Sigma_{t|c})$ with\footnote{Note that a joint Gaussian distribution is strictly \textit{unimodal} at any conditional slice.}:
\begin{equation} \label{eq:GJE_conditional_mean_cov_final_replicated} \tag{cc.Eq.\ref{eq:primal_GJE_mean_and_cov}}
\begin{aligned}
    \mu_{t|c} &= C_{tc} C_{cc}^{-1} z_c \\ 
    \Sigma_{t|c} &= C_{tt} - C_{tc} C_{cc}^{-1} C_{ct}  
\end{aligned}
\end{equation}
This allows us to formulate a conditional objective, $\mathcal{L}_{\text{cond}} \propto - \log p(z_t | z_c)$, i.e. taking the negative log likelihood of $p(z_t | z_c)$:
\begin{equation} \label{eq:GJE_cond_loss}
\mathcal{L}_{\text{cond}}(\theta, \theta') = \frac{1}{2} (z_t - \mu_{t|c})^T \Sigma_{t|c}^{-1} (z_t - \mu_{t|c}) + \frac{1}{2} \log |\Sigma_{t|c}|
\end{equation}
This formulation is interesting: it reveals that the optimal "predictor" in our architecture is not a black-box neural network, but the closed-form conditional mean, accompanied by a dynamic covariance matrix that quantifies epistemic uncertainty. 

Further, it mathematically bridges our framework with classic JEPA. If one assumes the conditional covariance is a fixed identity matrix ($\Sigma_{t|c} = I$) and drops the resulting constant regularization term, Eq.\ref{eq:GJE_cond_loss} trivially degenerates into a standard Mean Squared Error (MSE) objective: $\mathcal{L}_{\text{MSE}} = ||z_t - g(z_c)||^2$, which is exactly the heuristic objective of standard JEPA. Thus, dual-GJE is a strict, probabilistically grounded generalization of JEPA.

\subsection{Joint NLL: Closed-Form Inference via Block Inversion} 
Once the empirical covariance matrix is computed from the training representations, the optimal "predictor" falls out for free. The matrix blocks $[C_{cc}, C_{ct}, C_{tc}, C_{tt}]$ are fixed. Given a new test context $z_c^*$, we wish to find $p(z_t^* \mid z_c^*)$. By the definition of conditional probability for joint Gaussians, we can derive this using block matrix inversion and the Schur complement.

We partition the inverse covariance matrix (precision matrix) $\Lambda = C_{joint}^{-1}$ as\footnote{This equation isn't actually calculating the inverse; it simply defines the block variables. Letting the Schur complement be $S = C_{tt} - C_{tc}C_{cc}^{-1}C_{ct}$, the blocks evaluate to: $\Lambda_{tt} = S^{-1}$, $\Lambda_{ct} = -C_{cc}^{-1}C_{ct}S^{-1}$, $\Lambda_{tc} = -S^{-1}C_{tc}C_{cc}^{-1}$, and $\Lambda_{cc} = C_{cc}^{-1} + C_{cc}^{-1}C_{ct}S^{-1}C_{tc}C_{cc}^{-1}$. For detailed mathematical derivations of block matrix inversion, see Appendix.\ref{app:block_matrix}.}:
\begin{equation} \label{eq:primal_GJE_cov_inverse}
\begin{aligned}
     C_{joint}^{-1} =
    \begin{bmatrix} C_{cc} & C_{ct} \\ C_{tc} & C_{tt} \end{bmatrix}^{-1} 
    &= \begin{bmatrix} \Lambda_{cc} & \Lambda_{ct} \\ \Lambda_{tc} & \Lambda_{tt} \end{bmatrix} \\
    &= \begin{bmatrix} 
    C_{cc}^{-1} + C_{cc}^{-1} C_{ct} (C_{tt} - C_{tc}C_{cc}^{-1}C_{ct})^{-1} C_{tc} C_{cc}^{-1} & -C_{cc}^{-1} C_{ct} (C_{tt} - C_{tc}C_{cc}^{-1}C_{ct})^{-1} \\ 
    -(C_{tt} - C_{tc}C_{cc}^{-1}C_{ct})^{-1} C_{tc} C_{cc}^{-1} & (C_{tt} - C_{tc}C_{cc}^{-1}C_{ct})^{-1} 
    \end{bmatrix} \\
    &= \begin{bmatrix} 
    (C_{cc} - C_{ct} C_{tt}^{-1} C_{tc})^{-1} & -(C_{cc} - C_{ct} C_{tt}^{-1} C_{tc})^{-1} C_{ct} C_{tt}^{-1} \\ 
    -C_{tt}^{-1} C_{tc} (C_{cc} - C_{ct} C_{tt}^{-1} C_{tc})^{-1} & C_{tt}^{-1} + C_{tt}^{-1} C_{tc} (C_{cc} - C_{ct} C_{tt}^{-1} C_{tc})^{-1} C_{ct} C_{tt}^{-1} 
    \end{bmatrix}
\end{aligned}
\end{equation}

The quadratic form $z^T C_{joint}^{-1} z$ in the joint generative exponential (Eq.\ref{eq:GJE_joint_pdf1}) can be expanded algebraically into $z_c^{*T} \Lambda_{cc} z_c^* + 2 z_t^{*T} \Lambda_{tc} z_c^* + z_t^{*T} \Lambda_{tt} z_t^*$. By isolating the terms involving the target $z_t^*$ and completing the square (the full step-by-step derivation is provided in Appendix \ref{app:derivation_of_conditional_distribution}), we find the conditional distribution is exactly Gaussian, $p(z_t^* \mid z_c^*) = \mathcal{N}(\mu_{t|c}, \Sigma_{t|c})$, with mean and covariance (Eq.\ref{eq:primal_GJE_mean_and_cov}):
\begin{align*}
    \mu_{t|c} &= C_{tc} C_{cc}^{-1} z_c^* \\
    \Sigma_{t|c} &= C_{tt} - C_{tc} C_{cc}^{-1} C_{ct}
\end{align*}
This elegantly establishes an exact, mathematically optimal linear projection for the prediction ($\mu_{t|c}$), permanently coupled with calibrated, dataset-driven epistemic uncertainty ($\Sigma_{t|c}$).

\subsection{Combating Collapse: Why Joint Optimization is Required}

While the conditional objective ($\mathcal{L}_{\text{cond}}$) elegantly maps to predictive tasks, optimizing a deterministic variant of it in isolation is vulnerable to representation collapse. By the laws of probability, the joint likelihood factors exactly into the conditional and marginal likelihoods:
\begin{equation} \label{eq:GJE_joint_pdf2}
    -\log p(z_c, z_t) = \underbrace{-\log p(z_t | z_c)}_{\mathcal{L}_{\text{cond}}} \underbrace{- \log p(z_c)}_{\text{marginal regularization}}
\end{equation}
This factorization provides a profound theoretical symmetry: the marginal term ($-\log p(z_c)$) prevents \textit{context} representation collapse, while the conditional term ($\mathcal{L}_{\text{cond}}$) prevents \textit{target} representation collapse. 
If we bring in $\mathcal{L}_{\text{cond}}$ from Eq.\ref{eq:GJE_cond_loss} and expand $\log p(z_c)$ in Eq.\ref{eq:GJE_joint_pdf2}:
\begin{equation} \label{eq:GJE_joint_pdf3}
\begin{aligned}
    \mathcal{L}_{\text{joint}} &= \mathcal{L}_{\text{cond}} 
    + \left( \frac{1}{2} z_c^T C_{cc}^{-1} z_c + \frac{1}{2} \log |C_{cc}| \right) \\
    &= \left( \frac{1}{2} (z_t - \mu_{t|c})^T \Sigma_{t|c}^{-1} (z_t - \mu_{t|c}) + \frac{1}{2} \log |\Sigma_{t|c}| \right)
    + \left( \frac{1}{2} z_c^T C_{cc}^{-1} z_c + \frac{1}{2} \log |C_{cc}| \right)
\end{aligned}
\end{equation}

This reveals two competing forces driving the loss: the \textit{data-fit terms} and the \textit{geometric regularizers}. First, the data-fit terms prevent representation collapse across both branches. If the context or target encoders collapse their respective embeddings, the corresponding auto-covariances ($C_{cc}$ or $C_{tt}$) approach singularity. The polynomial explosion of their inverses ($C_{cc}^{-1}$ in the marginal distribution, and the resulting $\Sigma_{t|c}^{-1}$ in the conditional distribution) drives the loss to infinity, strictly guarding against trivial collapsed solutions. 

Conversely, the complexity penalty terms, i.e. $\frac{1}{2} \log |\Sigma_{t|c}|$ and $\frac{1}{2} \log |C_{cc}|$, act as a \textit{geometric regularizer} bounded by deep information-theoretic principles (detailed in Appendix.\ref{app:Gaussian_dist}). They directly regularize the \textit{differential entropy} of the learned representations\footnote{For Gaussian distributions, the differential entropy $H(\mathbf{x}) \propto \log|\Sigma|$; see Eq.\ref{eq:Gaussian_differential_entropy} in Appendix.\ref{app:Gaussian_dist}.}, preventing the embeddings from infinitely expanding their variance to artificially minimize the data-fit term. Further, minimizing this joint log-determinant natively forces the network to maximize the \textit{Mutual Information} between the context and target spaces, provided the marginal volumes do not collapse (see Eq.\ref{eq:Gaussian_dist_mutual_infomation} in Appendix.\ref{app:Gaussian_dist}).

This also exposes the fundamental flaw in standard predictive architectures. Classic JEPA heuristically optimizes only a \textit{degenerate}, \textit{deterministic} version of the conditional loss (MSE), dropping the conditional covariance entirely, and it completely discards the marginal distribution of the context space ($-\log p(z_c)$). By throwing away the covariance-dependent terms, classic JEPA loses the native defense against entropy collapse and severs the link to mutual information maximization. Consequently, it is forced to rely on asymmetric stop-gradients and EMA updates to survive. By optimizing the full joint objective ($\mathcal{L}_{\text{joint}}$), Primal-GJE natively preserves these probabilistic and information-theoretic guards, ensuring a diverse, full-rank embedding space across both branches purely through the laws of probability.

\section{Representation Collapse in Symmetric Dual GJE} \label{app:dual_collapse_proof}

As discussed in Section.\ref{subsec:dual_GJE}, unlike our primal feature-space objectives, the dual objective of GPJE cannot be optimized symmetrically. In this appendix, we provide the formal mathematical proof demonstrating why optimizing the Dual-GJE objective via symmetric gradient descent instantly leads to catastrophic representation collapse, formally justifying the necessity of EMA target networks \cite{grill2020byol,assran2023ijepa, assran2023ijepa}.

Consider the exact Dual-GJE negative marginal log-likelihood objective (Eq.\ref{eq:dual_gje_loss1}):
\begin{equation} \label{eq:dual_gje_loss1_replicated} \tag{cc.Eq.\ref{eq:dual_gje_loss1}}
    \mathcal{L} = \frac{1}{2} \text{Tr}\left(Z_t^T K_{cc}^{-1} Z_t\right) + \frac{d_t}{2} \log |K_{cc}|
\end{equation}

If the weights of both the context encoder $E_\theta$ and the target encoder $E_{\theta'}$ are updated symmetrically \textit{without stop-gradients}, the optimization dynamics follow a deterministic two-step collapse trajectory.

\paragraph{Step 1: The Target Space Collapses to Zero.}
First, consider the gradients flowing into the target encoder $E_{\theta'}$. The only term in \ref{eq:dual_gje_loss1_replicated} containing the target embeddings $Z_t$ is the data-fit term (the trace term). Because the inverse Gram matrix $K_{cc}^{-1}$ is strictly positive-definite by definition, this trace operation evaluates to a pure, lower-bounded quadratic penalty. To minimize this term to its absolute theoretical minimum ($0$), the target encoder trivially learns to map all inputs to the origin.
Consequently, $Z_t \to \mathbf{0}$, and the trace term becomes exactly $0$.

\paragraph{Step 2: The Context Space Collapses to a Constant.}
Once the target space has collapsed ($Z_t = \mathbf{0}$), the data-fit expansive force is entirely neutralized. The objective degenerates completely to the Complexity Penalty:
\begin{equation}
    \mathcal{L}_{\text{degenerated}} = 0 + \frac{d_t}{2} \log |K_{cc}|
\end{equation}
Now, analyzing the gradients flowing into the context encoder $E_\theta$, the optimizer seeks to minimize $\log |K_{cc}|$. This is equivalent to shrinking the differential volume of the context Gram matrix to zero. To achieve this, the context encoder maps every input image to the exact \textit{same constant embedding vector}. As a result, $K_{cc}$ degenerates into a singular matrix of identical values (up to the $\epsilon I$ jitter), driving the log-determinant toward $-\infty$. 

Thus, the system achieves a loss of $-\infty$ by learning a completely uninformative, degenerate latent space ($Z_c \to \text{constant}$, $Z_t \to \mathbf{0}$).

\subsection{The Root Cause: The Asymmetry of the Conditional Likelihood}
This catastrophic failure stems from the inherent directional asymmetry of Gaussian Processes. A standard GP models the conditional\footnote{See Eq.(2.29) or Eq.(2.30) in \cite{rasmussen2006gaussian}} distribution of outputs given inputs ($X \to Y$). As established in Section \ref{subsec:dual_GJE}, the GP \textit{marginal likelihood} used in Dual-GJE mathematically evaluates a strictly \textit{conditional} Negative Log-Likelihood, $p(Z_t \mid Z_c)$. 

In traditional machine learning paradigms, this asymmetry is perfectly sound because the targets $Y$ are fixed, ground-truth observations. The conditional objective natively penalizes the volume and complexity of the \textit{input} space ($\log |K_{cc}|$) to enforce smoothness, but it applies no corresponding volume penalty to the \textit{output} space because empirical outputs are inherently immutable. 

In the self-supervised representation learning setting, however, \textit{the "targets" $Z_t$ are not fixed}; they are dynamically learned parameters. Because the conditional GP formulation ($p(Z_t \mid Z_c)$) inherently lacks the target-space marginal regularizer ($-\log p(Z_t)$) that would be present in a full joint likelihood formulation, the target embeddings feel no expansive force. This fundamental absence of output regularization leads directly to the Step 1 collapse, mathematically mandating the use of artificial target anchors like EMA.

\subsection{The Structural Resolution: EMA and Stop-Gradients}
This mathematical vulnerability formally validates the necessity of the architectural heuristics popularized by BYOL \cite{grill2020byol} and JEPA \cite{assran2023ijepa}. 
To successfully optimize the Dual-GJE objective, we must artificially enforce the assumption of the Gaussian Process: the targets must be treated as fixed observations. By maintaining the target encoder as an EMA of the context encoder and applying a strict stop-gradient to $Z_t$, the target embeddings act as a diverse, static topological anchor. Because the optimizer is mathematically barred from shifting $Z_t \to \mathbf{0}$, the trace penalty cannot be trivially bypassed. Instead, the trace term correctly functions as an expansive force, compelling the context embeddings $Z_c$ to spread out and strictly match the fixed pairwise diversity of the target memory bank, safely preventing system collapse.

Importantly, this vulnerability is entirely circumvented when transitioning the framework to the symmetric \textit{primal feature space}. Because primal models (like GMJE and HSIC) evaluate the volume of the feature covariance matrices directly, and natively incorporate symmetric regularizers that penalize the collapse of both spaces, they eliminate the need for asymmetric EMA architectures entirely.

\section{Training Dual-GJE: Kernel Optimization and EMA} \label{app:training_dual_gje}

In standard Gaussian Process regression, as detailed in classic texts such as Rasmussen \& Williams \cite{rasmussen2006gaussian}, training the GP model is equivalent to hyperparameter optimization. Given a covariance function $k_\phi$ parameterized by hyperparameters $\phi$ (e.g. length-scale $\ell$, observation noise $\sigma^2$), the optimal parameters $\phi^*$ are found by maximizing the log marginal likelihood\footnote{Eq.(2.29) or Eq.(2.30) in \cite{rasmussen2006gaussian}.} of the observed targets given the inputs. 

In the Dual-GJE framework, we extend this principle to deep representation learning. The context encoder weights $\theta$ act as the ultimate "hyperparameters" of the kernel, dynamically shaping the input space $Z_c = E_\theta(X_c)$ upon which the Gram matrix is computed. Thus, training the Dual-GJE model involves taking the gradients of the conditional Negative Log-Likelihood (NLL) with respect to both the kernel parameters $\phi$ and the context encoder weights $\theta$. 

As proven in Appendix.\ref{app:dual_collapse_proof}, optimizing the target encoder symmetrically under this conditional objective causes catastrophic collapse. Therefore, the target encoder $E_{\theta'}$ is detached from the computational graph and updated exclusively via an Exponential Moving Average (EMA). The exact $\mathcal{O}(N^3)$ procedure is detailed in Algo.\ref{alg:exact_dual_gje}.

\begin{algorithm}[H]
\caption{Training Exact Dual-GJE Model}
\label{alg:exact_dual_gje}
\begin{algorithmic}[1]
\REQUIRE Dataset $\mathcal{D}$, Context Encoder $E_\theta$, Target Encoder $E_{\theta'}$. Kernel function $k_\phi$ with parameters $\phi$. Batch size $N$, Learning rate $\eta$, EMA momentum $\tau$.

\WHILE{network has not converged}
    \STATE Sample a batch of $N$ augmented view pairs $\{(x_c^{(i)}, x_t^{(i)})\}_{i=1}^N \sim \mathcal{D}$
    
    \STATE \textbf{// 1. Forward Pass}
    \STATE Compute context embeddings: $Z_c = E_\theta(X_c) \in \mathbb{R}^{N \times d_c}$
    \STATE \textit{No-grad:} Compute target anchors: $Z_t = \text{stop\_grad}(E_{\theta'}(X_t)) \in \mathbb{R}^{N \times d_t}$

    \STATE \textbf{// 2. Exact Dual Covariance Matrix \& Loss}
    \STATE Compute $N \times N$ Gram matrix $K_{cc}$ over $Z_c$ using kernel $k_\phi$.
    \STATE Add jitter for numerical stability: $K_{cc} \leftarrow K_{cc} + \epsilon I_{N}$
    \STATE Compute conditional NLL loss (Eq.\ref{eq:dual_gje_loss1}): 
    \STATE \quad $\mathcal{L}(\theta, \phi) = \frac{1}{2} \text{Tr}(Z_t^T K_{cc}^{-1} Z_t) + \frac{d_t}{2} \log |K_{cc}|$

    \STATE \textbf{// 3. Backward Pass (Asymmetric Optimization)}
    \STATE Compute gradients: $\nabla_{\theta, \phi} \mathcal{L}$
    \STATE Update context network and kernel using optimizer (e.g. Adam):
    \STATE \quad $\theta \leftarrow \theta - \eta \nabla_\theta \mathcal{L}$
    \STATE \quad $\phi \leftarrow \phi - \eta \nabla_\phi \mathcal{L}$
    \STATE \textbf{// 4. Target Network EMA Update}
    \STATE \quad $\theta' \leftarrow \tau \theta' + (1 - \tau) \theta$
\ENDWHILE
\end{algorithmic}
\end{algorithm}

\textbf{Computational Complexity.} The overall computational complexity of the exact Dual-GJE algorithm evaluates to $\mathcal{O}(N^2 d_c + N^2 d_t + N^3)$. In Step 2, computing the $N \times N$ empirical Gram matrix $K_{cc}$ across the batch requires $\mathcal{O}(N^2 d_c)$ operations. Subsequently, evaluating the conditional NLL loss necessitates computing both the inverse ($K_{cc}^{-1}$) and the log-determinant ($\log |K_{cc}|$) of this Gram matrix. These operations, typically implemented via a Cholesky decomposition, impose a severe $\mathcal{O}(N^3)$ computational bottleneck. Further, calculating the trace of the data-fit term involves matrix multiplications that scale as $\mathcal{O}(N^2 d_t)$. Because self-supervised learning inherently requires massive batch sizes ($N$) to provide a sufficiently diverse topological landscape for the contrastive and predictive forces, the cubic $\mathcal{O}(N^3)$ term overwhelmingly dominates the computational footprint. This strict intractability for large $N$ heavily motivates the $\mathcal{O}(N D^2)$ scalable Random Fourier Features (RFF) approximation detailed below, and the foundational pivot to the $\mathcal{O}(d^3)$ Primal-GJE formulation.

\subsection{Scalable Dual-GJE via Random Fourier Features (RFF)} \label{app:rff_theory}

As established above, formulating GJE in the dual sample space requires computing the inverse and log-determinant of an $N \times N$ Gram matrix. This imposes a strict $\mathcal{O}(N^3)$ computational bottleneck, rendering exact dual optimization intractable for large batch sizes. To resolve this, we utilize Random Fourier Features (RFF) \cite{rahimi2007random} to approximate the kernel mapping, reducing the complexity to $\mathcal{O}(ND^2)$. This section details the theoretical roots of Kernel methods to the explicit application of RFF for the dual GJE objective.

\subsubsection{Kernel Machines and Bochner's Theorem}
By \textit{Mercer's theorem} \cite{mercer1909functions}, any continuous, symmetric, positive-definite kernel $k(x, y)$ can be expressed as an inner product in a high-dimensional feature space: $k(x, y) = \langle \varphi(x), \varphi(y) \rangle_{\mathcal{V}}$. To circumvent the $\mathcal{O}(N^3)$ Gram matrix bottleneck associated with this kernel trick, Rahimi and Recht \cite{rahimi2007random} proposed approximating $k(x, y)$ using a randomized, low-dimensional feature map $z: \mathbb{R}^d \mapsto \mathbb{R}^D$ (where $D \ll N$), such that $k(x, y) \approx z(x)^T z(y)$.

This approximation relies on \textit{Bochner's Theorem} \cite{rudin1990fourier}, which states that a continuous, shift-invariant kernel $k(x, y) = k(x - y)$ is positive-definite if and only if $k(\Delta)$ is the Fourier transform of a non-negative probability measure $p(\omega)$. Thus, the kernel can be expressed as an expectation:
\begin{equation} 
    k(x - y) = \mathbb{E}_{\omega \sim p(\omega)} \left[ e^{i\omega^T (x - y)} \right]
\end{equation}

By discarding the imaginary component via Euler's formula and introducing a random phase shift $b \sim \mathcal{U}(0, 2\pi)$ \cite{gundersen2019rff}, we can draw $D$ samples of frequencies $\omega_1, \dots, \omega_D \sim p(\omega)$ to construct the explicit $D$-dimensional feature vector:
\begin{equation} 
    \psi(x) = \sqrt{\frac{2}{D}} \begin{bmatrix} \cos(\omega_1^T x + b_1) \\ \vdots \\ \cos(\omega_D^T x + b_D) \end{bmatrix} \in \mathbb{R}^D
\end{equation}
Taking the inner product of two such feature vectors elegantly yields the Monte Carlo approximation of the kernel expectation: $\psi(x)^T \psi(y) \approx k(x, y)$.

\subsubsection{Applying RFF to the Dual GJE Matrix}
For a batch of $N$ context embeddings, let $\Psi \in \mathbb{R}^{N \times D}$ be the matrix where each row corresponds to the evaluated Fourier features $\psi(z_{c,i})^T$. The empirical $N \times N$ Gram matrix can thus be approximated as a low-rank linear inner product: $K_{cc} \approx \Psi \Psi^T$.

To optimize the dual GJE loss, we must evaluate the inverse and log-determinant of the jitter-regularized Gram matrix $K_\epsilon = \Psi \Psi^T + \epsilon I_N$. By factoring the intractable $N \times N$ matrix into the product of these low-dimensional $\Psi$ matrices, we invoke the \textbf{Woodbury Matrix Identity} \cite{woodbury1950inverting} to compute the inverse:
\begin{equation}
    (\Psi \Psi^T + \epsilon I_N)^{-1} = \epsilon^{-1} I_N - \epsilon^{-1} \Psi (\Psi^T \Psi + \epsilon I_D)^{-1} \Psi^T
\end{equation}
Similarly, we apply the \textbf{Weinstein-Aronszajn Identity} \cite{pozrikidis2014grids} to evaluate the log-determinant:
\begin{equation}
    \log |\Psi \Psi^T + \epsilon I_N| = \log |\Psi^T \Psi + \epsilon I_D| + (N-D) \log \epsilon
\end{equation}

By structurally rewriting the objective entirely in terms of the $D \times D$ primal feature covariance matrix $C = \Psi^T \Psi + \epsilon I_D$, we reduce the computational complexity from $\mathcal{O}(N^3)$ to $\mathcal{O}(ND^2)$, enabling strictly linear scaling with respect to the batch size $N$. The implementable training algorithm is detailed in Algo.\ref{alg:GJE_RFF}.

\begin{algorithm}[H]
\caption{Training Dual-GJE with Random Fourier Features (GJE-RFF)}
\label{alg:GJE_RFF}
\begin{algorithmic}[1]
\REQUIRE Dataset $\mathcal{D}$, Context Encoder $E_\theta$, Target Encoder $E_{\theta'}$. RFF dimension $D$, Batch size $N$, Learning rate $\eta$, EMA momentum $\tau$. Kernel parameters $\phi$ determining $p(\omega)$.

\WHILE{network has not converged}
    \STATE Sample a batch of $N$ augmented view pairs $\{(x_c^{(i)}, x_t^{(i)})\}_{i=1}^N \sim \mathcal{D}$
    
    \STATE \textbf{// 1. Forward Pass (Asymmetric)}
    \STATE Compute context embeddings: $Z_c = E_\theta(X_c) \in \mathbb{R}^{N \times d_c}$
    \STATE \textit{No-grad:} Compute target anchors: $Z_t = \text{stop\_grad}(E_{\theta'}(X_t)) \in \mathbb{R}^{N \times d_t}$

    \STATE \textbf{// 2. Scalable RFF Approximation}
    \STATE Sample frequencies $\Omega \in \mathbb{R}^{d_c \times D} \sim p(\omega)$ and biases $b \in \mathbb{R}^{1 \times D} \sim \mathcal{U}(0, 2\pi)$.
    \STATE Compute Fourier features over context: $\Psi = \sqrt{\frac{2}{D}} \cos(Z_c \Omega + b) \in \mathbb{R}^{N \times D}$
    \STATE Compute $D \times D$ feature covariance: $C = \Psi^T \Psi + \epsilon I_D$
    
    \STATE \textbf{// 3. Fast Inverse and Log-Determinant Computation}
    \STATE Use Woodbury identity for inverse: $K_{cc}^{-1} \approx \epsilon^{-1} I_{N} - \epsilon^{-1} \Psi C^{-1} \Psi^T$
    \STATE Use Weinstein–Aronszajn identity for log-det: $\log |K_{cc}| \approx \log |C| + (N - D) \log \epsilon$
    \STATE Compute approximated conditional NLL loss:
    \STATE \quad $\mathcal{L}(\theta, \phi) \approx \frac{1}{2} \text{Tr}(Z_t^T K_{cc}^{-1} Z_t) + \frac{d_t}{2} \log |K_{cc}|$

    \STATE \textbf{// 4. Backward Pass}
    \STATE Compute gradients $\nabla_{\theta, \phi} \mathcal{L}$ and update context/kernel parameters.
    \STATE Update target network via EMA: $\theta' \leftarrow \tau \theta' + (1 - \tau) \theta$
\ENDWHILE
\end{algorithmic}
\end{algorithm}
\textit{Note on computational trick: by reordering the matrix multiplications inside the trace operator via cyclic permutation in Step 3, evaluating $\text{Tr}(\Psi C^{-1} \Psi^T Z_t Z_t^T)$ as $\text{Tr}((\Psi^T Z_t) (Z_t^T \Psi) C^{-1})$, we avoid ever materializing the $N \times N$ matrix in memory.}

\section{Training Primal-GJE: Symmetric Optimization and the Entropy Fix} \label{app:training_primal_gje}

In contrast to the Dual space, the Primal-GJE framework formulates the geometry in the $d \times d$ feature space using the exact joint likelihood $p(z_c, z_t)$. Because this generative formulation provides symmetric marginal regularization for both the context and target spaces, \textit{it mathematically eliminates the need for an EMA target network or stop-gradients}. The gradients flow synchronously through both encoders, providing a symmetric representation learning architecture.

However, as mentioned in Section.\ref{subsec:primal_GJE} and detailed in Appendix.\ref{app:mahalanobis_trap}, attempting to directly minimize the exact NLL using the empirically estimated batch covariance $C_{joint}$ triggers the "\textit{Mahalanobis Trace Trap}", algebraically collapsing the repulsive data-fit gradients to zero. 

To rescue the optimization dynamics while maintaining the symmetric structure, the empirical implementation of Primal-GJE structurally inverts the regularizer. Instead of minimizing the log-determinant (volume penalty) against a dead Mahalanobis term, we explicitly \textit{maximize} the differential entropy ($\frac{1}{2} \log |C_{joint}|$). By setting our loss function to minimize the negative of this entropy ($\mathcal{L} = -\frac{1}{2} \log |C_{joint}|$), this formulation actively forces the joint feature distribution to span the available geometry of the unit hypersphere, safely guarding against dimensional collapse. The complete symmetric training procedure is detailed in Algo.\ref{alg:primal_gje}.

\begin{algorithm}[H]
\caption{Training Primal-GJE Model (Symmetric Feature Covariance)}
\label{alg:primal_gje}
\begin{algorithmic}[1]
\REQUIRE Dataset $\mathcal{D}$, Context Encoder $E_\theta$, Target Encoder $E_{\theta'}$. Batch size $N$, Learning rate $\eta$.

\WHILE{network has not converged}
    \STATE Sample a batch of $N$ augmented view pairs $\{(x_c^{(i)}, x_t^{(i)})\}_{i=1}^N \sim \mathcal{D}$
    
    \STATE \textbf{// 1. Forward Pass (Symmetric)}
    \STATE Compute context embeddings: $Z_c = E_\theta(X_c) \in \mathbb{R}^{N \times d_c}$
    \STATE Compute target embeddings: $Z_t = E_{\theta'}(X_t) \in \mathbb{R}^{N \times d_t}$
    \STATE Optional: Apply $L^2$-normalization to bound the embeddings to a unit hypersphere.
    \STATE Concatenate joint embeddings (feature-wise): $Z = [Z_c, Z_t] \in \mathbb{R}^{N \times d}$

    \STATE \textbf{// 2. Feature Covariance Matrix \& Empirical Loss}
    \STATE Compute the $d \times d$ empirical covariance matrix: $C_{joint} = \frac{1}{N} Z^T Z$
    \STATE Add jitter for numerical stability: $C_{joint} \leftarrow C_{joint} + \epsilon I_{d}$
    \STATE Compute the explicit entropy maximization loss (inverted geometric regularizer): 
    \STATE \quad $\mathcal{L}(\theta, \theta') = -\frac{1}{2} \log |C_{joint}|$

    \STATE \textbf{// 3. Backward Pass (Synchronous Optimization)}
    \STATE Compute gradients: $\nabla_{\theta, \theta'} \mathcal{L}$
    \STATE Update parameters synchronously using an optimizer (e.g. Adam):
    \STATE \quad $\theta \leftarrow \theta - \eta \nabla_\theta \mathcal{L}$
    \STATE \quad $\theta' \leftarrow \theta' - \eta \nabla_{\theta'} \mathcal{L}$
    \STATE \textit{Note: No EMA target network or stop-gradients are applied.}
\ENDWHILE
\end{algorithmic}
\end{algorithm}

\textbf{Computational complexity.} The overall computational complexity of the symmetric Primal-GJE algorithm evaluates to $\mathcal{O}(N d^2 + d^3)$. In Step 2, computing the $d \times d$ empirical feature covariance matrix $C_{joint}$ via the inner product $Z^T Z$ requires $\mathcal{O}(N d^2)$ operations. Subsequently, evaluating the log-determinant geometric regularizer ($\log |C_{joint}|$) necessitates a Cholesky decomposition of this covariance matrix, imposing an $\mathcal{O}(d^3)$ computational cost. In standard self-supervised representation learning paradigms, the batch size $N$ (e.g. $4096$) is typically much larger than the embedding dimension $d$ (e.g. $512$). Under the condition $d \ll N$, the $\mathcal{O}(d^3)$ term becomes negligible, and the algorithm scales strictly linearly with respect to the batch size ($\mathcal{O}(N)$). This fundamentally resolves the intractable $\mathcal{O}(N^3)$ matrix inversion bottleneck inherent to the exact Dual-GJE (Sample Space) formulation (Algo.\ref{alg:exact_dual_gje} in Appendix.\ref{app:training_dual_gje}
), allowing the primal network to scale gracefully to massive batch sizes without requiring randomized kernel approximations.

\section{Derivations of Scalable RFF Matrix Identities} \label{app:rff_identities}

In Appendix.\ref{app:rff_theory}, we utilized \textit{the Woodbury Matrix Identity} and \textit{the Weinstein-Aronszajn identity} to project the intractable $\mathcal{O}(N^3)$ operations of the dual sample space into a computationally tractable $\mathcal{O}(D^3)$ feature space. This section provides the step-by-step algebraic derivations for these specific transformations.

\subsection{Derivation of the RFF Woodbury Identity} \label{app:woodbury_derivation}

In the GJE-RFF approximation, the exact $N \times N$ kernel matrix is approximated using Random Fourier Features as $\Sigma \approx \Psi \Psi^T + \epsilon I_N$, where $\Psi \in \mathbb{R}^{N \times D}$ and $\epsilon I_N$ is the jitter term added for numerical stability.

Directly inverting this $N \times N$ matrix is computationally prohibitive for large batch sizes. To resolve this, we apply the \textit{Woodbury Matrix Identity} \cite{woodbury1950inverting}, which is generally defined as:
\begin{equation}
    (A + U C_{wood} V)^{-1} = A^{-1} - A^{-1} U (C_{wood}^{-1} + V A^{-1} U)^{-1} V A^{-1}
\end{equation}

To apply this to our approximated covariance matrix, we make the following substitutions:
\begin{itemize}
    \item $A = \epsilon I_N \implies A^{-1} = \epsilon^{-1} I_N$
    \item $U = \Psi$
    \item $C_{wood} = I_D \implies C_{wood}^{-1} = I_D$
    \item $V = \Psi^T$
\end{itemize}

Substituting these into the Woodbury identity gives:
\begin{equation}
    (\epsilon I_N + \Psi \Psi^T)^{-1} = (\epsilon^{-1} I_N) - (\epsilon^{-1} I_N) \Psi \left( I_D + \Psi^T (\epsilon^{-1} I_N) \Psi \right)^{-1} \Psi^T (\epsilon^{-1} I_N)
\end{equation}

Since multiplying by the identity matrix leaves the matrices unchanged, we can simplify the outer terms and pull the scalar $\epsilon^{-1}$ multipliers to the front of the second term:
\begin{equation} \label{eq:woodbury_intermediate}
    \Sigma^{-1} \approx \epsilon^{-1} I_N - \epsilon^{-2} \Psi \left( I_D + \epsilon^{-1} \Psi^T \Psi \right)^{-1} \Psi^T
\end{equation}

Next, we look at the inner inverted matrix. We want to relate this to our smaller $D \times D$ feature covariance matrix, defined in Algo.\ref{alg:GJE_RFF} as $C = \Psi^T \Psi + \epsilon I_D$. We can algebraically factor out $\epsilon^{-1}$ from the inner term:
\begin{equation}
    I_D + \epsilon^{-1} \Psi^T \Psi = \epsilon^{-1} (\epsilon I_D + \Psi^T \Psi) = \epsilon^{-1} C
\end{equation}

Taking the inverse of this inner term flips the scalar:
\begin{equation}
    (\epsilon^{-1} C)^{-1} = \epsilon C^{-1}
\end{equation}

Finally, we substitute this back into Eq.\ref{eq:woodbury_intermediate}. The $\epsilon$ from the inner inverse cancels out one of the $\epsilon^{-1}$ scalars from the $\epsilon^{-2}$ multiplier in the front:
\begin{align}
    \Sigma^{-1} &\approx \epsilon^{-1} I_N - \epsilon^{-2} \Psi (\epsilon C^{-1}) \Psi^T \nonumber \\
                &= \epsilon^{-1} I_N - \epsilon^{-1} \Psi C^{-1} \Psi^T
\end{align}

This final elegant equation proves that to invert the large $N \times N$ matrix $\Sigma$, we only need to invert the much smaller $D \times D$ feature covariance matrix $C$, effectively reducing the computational bottleneck from $\mathcal{O}(N^3)$ to $\mathcal{O}(ND^2)$.

\subsection{Derivation of the Log-Determinant via Weinstein-Aronszajn identity} \label{app:Weinstein_Aronszajn_derivation}

Alongside the inverse, evaluating the joint likelihood requires computing the log-determinant $\log |\Sigma| = \log |\Psi \Psi^T + \epsilon I_N|$. Calculating the determinant of an $N \times N$ matrix is also an $\mathcal{O}(N^3)$ operation. 

To reduce this dimensionality, we first algebraically factor out the scalar $\epsilon$ from the matrix. Note that factoring a scalar out of an $N \times N$ determinant raises the scalar to the power of $N$:
\begin{equation}
    |\Sigma| = |\epsilon (I_N + \epsilon^{-1} \Psi \Psi^T)| = \epsilon^N |I_N + \epsilon^{-1} \Psi \Psi^T|
\end{equation}

We then apply the \textit{Weinstein-Aronszajn identity}\footnote{The Weinstein–Aronszajn identity is the determinant analogue of the Woodbury matrix identity for matrix inverses.} \cite{pozrikidis2014grids}, which states that for any matrices $A \in \mathbb{R}^{N \times D}$ and $B \in \mathbb{R}^{D \times N}$, the identity $|I_N + AB| = |I_D + BA|$ holds true. Letting $A = \epsilon^{-1} \Psi$ and $B = \Psi^T$, we shift the inner dimensions from $N$ to $D$:
\begin{equation}
    |I_N + \epsilon^{-1} \Psi \Psi^T| = |I_D + \epsilon^{-1} \Psi^T \Psi|
\end{equation}

Substituting this back into our determinant expression yields:
\begin{equation}
    |\Sigma| = \epsilon^N |I_D + \epsilon^{-1} \Psi^T \Psi|
\end{equation}

To reconstruct our $D \times D$ feature covariance matrix $C = \Psi^T \Psi + \epsilon I_D$, we factor the scalar $\epsilon^{-1}$ out of the $D$-dimensional determinant. This introduces an $\epsilon^{-D}$ term:
\begin{equation}
    |\Sigma| = \epsilon^N |\epsilon^{-1}(\epsilon I_D + \Psi^T \Psi)| = \epsilon^N \epsilon^{-D} |\epsilon I_D + \Psi^T \Psi| = \epsilon^{N - D} |C|
\end{equation}

Taking the natural logarithm of both sides cleanly transforms the exponentiation into linear scalar addition:
\begin{align}
    \log |\Sigma| &= \log (\epsilon^{N - D} |C|) \nonumber \\
    \log |\Sigma| &= \log |C| + (N - D) \log \epsilon
\end{align}

This derivation shows that the large $N$-dimensional log-determinant can be evaluated exactly by computing the $D$-dimensional log-determinant of $C$ and applying a simple scalar correction based on the dimensionality difference and the jitter term $\epsilon$.

\section{Primal-GJE: Alternative Optimization via the Hilbert-Schmidt Independence Criterion (HSIC) Objective} \label{app:hsic_theory}

As illustrated in in Section.\ref{alg:primal_gje}, when the Primal-GJE joint objective Eq.\ref{eq:primal_gje_loss} reduces to the differential entropy $\frac{1}{2} \log |C_{joint}|$ (“\textit{Mahalanobis Trace Trap}”), we can use an alternative, gradient-based optimization routine utilizing the Hilbert-Schmidt Independence Criterion (HSIC) \cite{Gretton2005HSIC,Mooij2009HSIC,Greenfeld2020HSIC}. Originally proposed as a non-parametric, kernel-based measure of statistical dependence, HSIC evaluates the squared Hilbert-Schmidt norm of the cross-covariance operator between two Reproducing Kernel Hilbert Spaces (RKHSs). In this section, we review the theoretical formulation of HSIC and demonstrate how it acts as a proxy for mutual information to optimize the Primal-GJE encoders.

\subsection{The Cross-Covariance Operator}
Consider two random variables $X$ and $Y$ residing in metric spaces $\mathcal{X}$ and $\mathcal{Y}$, and let $\mathcal{F}$ and $\mathcal{G}$ be separable Reproducing Kernel Hilbert Spaces on these spaces with universal kernels $k$ and $l$, respectively. Let $\phi(x) \in \mathcal{F}$ and $\psi(y) \in \mathcal{G}$ denote their corresponding feature maps. 

The cross-covariance operator $C_{xy} : \mathcal{G} \rightarrow \mathcal{F}$ associated with the joint measure $p_{xy}$ is defined as a linear operator generalizing the concept of a covariance matrix to infinite-dimensional function spaces:
\begin{equation}
    C_{xy} = \mathbb{E}_{xy} [(\phi(x) - \mu_x) \otimes (\psi(y) - \mu_y)]
\end{equation}
where $\otimes$ denotes the tensor product, and $\mu_x, \mu_y$ are the mean elements in their respective RKHSs. 

\subsection{Population and Empirical HSIC}
The population HSIC is formally defined as the squared Hilbert-Schmidt norm of the cross-covariance operator:
\begin{equation}
    \text{HSIC}(p_{xy}, \mathcal{F}, \mathcal{G}) := ||C_{xy}||_{HS}^2
\end{equation}
A fundamental theorem of HSIC establishes that if the kernels $k$ and $l$ are universal on compact domains, then $||C_{xy}||_{HS}^2 = 0$ if and only if $X$ and $Y$ are strictly statistically independent. Thus, the magnitude of the HSIC serves as a powerful non-parametric measure of dependence.

To compute this criterion practically from a finite sample without explicitly mapping to the potentially infinite-dimensional RKHS, it can be expressed entirely in terms of inner products (kernel functions). Given a mini-batch of $N$ embedding pairs generated by our dual encoders, $Z := \{(z_{c,1}, z_{t,1}), \dots, (z_{c,N}, z_{t,N})\}$, the empirical estimator of HSIC is defined as \cite{Gretton2005HSIC}:
\begin{equation} \label{eq:empirical_HSIC}
    \widehat{\text{HSIC}}(Z_c, Z_t) = \frac{1}{(N-1)^2} \text{Tr}(K_c H K_t H)
\end{equation}
In this formulation, $K_c$ and $K_t \in \mathbb{R}^{N \times N}$ are the kernel (Gram) matrices computed over the context embeddings $Z_c$ and target embeddings $Z_t$, respectively. The matrix $H \in \mathbb{R}^{N \times N}$ is the standard centering matrix defined as $H_{ij} = \delta_{ij} - \frac{1}{N}$. 

\subsection{Convergence and Approximation Rates}
A major advantage of HSIC over other kernel-based dependence tests (such as Kernel Canonical Correlation or Kernel Mutual Information) is its statistical sample efficiency and computational simplicity, as it requires no user-defined regularization terms to invert matrices. Theoretical analysis guarantees that the empirical estimate converges to the population quantity at a rapid rate of $\mathcal{O}(N^{-1/2})$. Further, the finite sample bias of the empirical estimator is bounded by $\mathcal{O}(N^{-1})$, rendering the bias negligible compared to finite sample fluctuations for sufficiently large batch sizes. 

\subsection{HSIC as a Differentiable Self-Supervised Objective}
Because the empirical HSIC estimator (Eq.\ref{eq:empirical_HSIC}) requires only basic matrix multiplications and the evaluation of differentiable kernel functions, it is entirely end-to-end differentiable. Recent machine learning literature has successfully adapted HSIC as a loss function for deep neural networks \cite{Greenfeld2020HSIC}. 

While prior machine learning literature typically \textit{minimizes} the HSIC loss to enforce strict statistical independence between model inputs and regression residuals for robust generalization or causal discovery, our self-supervised objective strictly \textit{maximizes} HSIC. Rather than modeling the joint distribution directly via NLL, we optimize the two encoder networks by forcefully maximizing the statistical dependence between the context latent representations $Z_c$ and target representations $Z_t$. 

Maximizing this trace operator directly enforces maximal non-linear correlation across the cross-covariance block, driving the network to extract shared, highly dependent features from the augmented views $x_c$ and $x_t$. Because the kernels $k(\cdot, \cdot)$ are differentiable, the gradients of this HSIC objective can be computed easily via standard automatic differentiation and backpropagated symmetrically through both encoders.

\paragraph{Combating Representation Collapse.} 
Importantly, maximizing HSIC natively penalizes representation collapse. If the context or target encoders lazily map all inputs to a trivial constant vector, their respective Gram matrix ($K_c$ or $K_t$) degenerates into a matrix of ones. Multiplying a matrix of ones by the centering matrix $H$ yields a zero matrix, driving the HSIC objective strictly to zero. Because our self-supervised objective seeks to \textit{maximize} this value, the network actively avoids collapsed states. By maximizing the dependence between the context and target spaces via this kernel geometry, GJE can maintain structurally diverse representations without necessarily relying on standard JEPA's asymmetric EMA target networks or stop-gradient heuristics. Further, standard auto-covariance regularizers (such as the marginal log-determinant penalty) can be seamlessly applied alongside the HSIC objective to ensure the individual embedding spaces maintain uniform, spherical marginal distributions.

\section{Exact Dual GJE with a Linear Kernel: The Primal-Dual Equivalence} \label{app:linear_kernel_equivalence}

As mentioned in Section \ref{sec:GJE}, with a linear kernel, Dual-GJE is exactly equal to Primal-GJE. In this appendix, we mathematically prove that, if this exact dual GJE formulation is implemented using a standard \textit{Linear Kernel}, it algebraically collapses into a single-component Gaussian Mixture Model ($K=1$ GMM) operating in the \textit{primal feature space}. 

This primal-dual equivalence proves that computing the joint likelihood using an $N \times N$ linear Gram matrix over the samples is mathematically identical to computing the likelihood using the $d \times d$ feature covariance matrix, thereby validating the unimodal ($K=1$) visualization baseline utilized in our synthetic experiments (Section \ref{subsec:exp_synthetic}).

\subsection{Setup and the Empirical Loss}
Consider a batch of joint embeddings represented by the matrix $Z \in \mathbb{R}^{N \times d}$, where $N$ is the batch size and $d = d_c + d_t$ is the combined dimensionality of the context and target features. To ensure positive-definiteness and invertibility, we introduce a standard jitter term $\epsilon > 0$.

We define the corresponding covariance matrices in the two perpendicular spaces:
\begin{itemize}
    \item \textbf{The Dual (Sample) Gram Matrix:} $K = Z Z^T + \epsilon I_N \quad \in \mathbb{R}^{N \times N}$
    \item \textbf{The Primal (Feature) Covariance Matrix:} $C = Z^T Z + \epsilon I_d \quad \in \mathbb{R}^{d \times d}$
\end{itemize}

When evaluating the exact Negative Log-Likelihood (NLL) over a multi-dimensional batch $Z$, the Mahalanobis distance sum across all $d$ independent feature channels analytically translates into the matrix trace operator. The empirical dual loss function to be minimized is (Eq.\ref{eq:primal_gje_loss}):
\begin{equation} \label{eq:app_dual_loss}
    \mathcal{L}_{\text{Dual}} = \underbrace{\frac{1}{2} \text{Tr}(Z^T K^{-1} Z)}_{\text{Data-Fit (Trace)}} + \underbrace{\frac{1}{2} \log |K|}_{\text{Regularizer (Log-Det)}}
\end{equation}

We will now mathematically project both the \textit{Data-Fit term} and the \textit{Regularizer term} from the $\mathbb{R}^{N \times N}$ sample space into the $\mathbb{R}^{d \times d}$ feature space.

\subsection{Part 1: The Data-Fit Term via the Push-Through Identity}
To project the $N \times N$ inverse Gram matrix $K^{-1}$ into the primal space, we leverage the \textit{Push-Through Identity}, a direct corollary of the Woodbury Matrix Identity, which dictates how matrices "push through" inverses: $A(I + BA)^{-1} = (I + AB)^{-1}A$.

Applying this structural property to our matrices:
\begin{equation*}
    Z^T (\epsilon I_N + Z Z^T)^{-1} = (\epsilon I_d + Z^T Z)^{-1} Z^T
\end{equation*}

We multiply both sides of this equation by $Z$ on the right:
\begin{equation}
    Z^T (\epsilon I_N + Z Z^T)^{-1} Z = (\epsilon I_d + Z^T Z)^{-1} Z^T Z
\end{equation}

Notice that the left side is exactly our dual inner term $Z^T K^{-1} Z$, and the right side is formulated entirely in terms of the primal feature matrix $C = Z^T Z + \epsilon I_d$. We can substitute $C$ into the right-hand side. Recognizing that $Z^T Z = C - \epsilon I_d$, we evaluate the matrix product:
\begin{align}
    Z^T K^{-1} Z &= C^{-1} (Z^T Z) \nonumber \\
    &= C^{-1} (C - \epsilon I_d) \nonumber \\
    &= C^{-1} C - \epsilon C^{-1} \nonumber \\
    &= I_d - \epsilon C^{-1}
\end{align}

Finally, taking the trace of both sides yields the exact primal equivalence for the data-fit term:
\begin{empheq}[box=\fbox]{equation}
    \text{Tr}(Z^T K^{-1} Z) = \text{Tr}(I_d) - \epsilon \text{Tr}(C^{-1}) = d - \epsilon \text{Tr}(C^{-1})
\end{empheq}

\subsection{Part 2: The Geometric Regularizer via Weinstein-Aronszajn}
Next, we project the $N \times N$ log-determinant volume penalty, $\log |K|$, into the primal space. We apply the \textit{Weinstein-Aronszajn Identity} (also known as Sylvester's Determinant Theorem), which states that for any matrices $A \in \mathbb{R}^{N \times d}$ and $B \in \mathbb{R}^{d \times N}$, the determinant identity $|I_N + AB| = |I_d + BA|$ holds true.

We begin by factoring the scalar jitter $\epsilon$ out of the dual Gram matrix determinant. Because $K$ is an $N \times N$ matrix, factoring out $\epsilon$ raises it to the power of $N$:
\begin{equation}
    |K| = |Z Z^T + \epsilon I_N| = \left| \epsilon \left( I_N + \frac{1}{\epsilon} Z Z^T \right) \right| = \epsilon^N \left| I_N + \left(\epsilon^{-1} Z\right) Z^T \right|
\end{equation}

We now apply the Weinstein-Aronszajn identity, setting $A = \epsilon^{-1} Z$ and $B = Z^T$, which swaps the multiplication order and cleanly reduces the identity matrix dimension from $N$ to $d$:
\begin{equation}
    |K| = \epsilon^N \left| I_d + Z^T \left(\epsilon^{-1} Z\right) \right| = \epsilon^N \left| I_d + \epsilon^{-1} Z^T Z \right|
\end{equation}

To reconstruct the primal feature covariance matrix $C = Z^T Z + \epsilon I_d$ inside the determinant, we must absorb a factor of $\epsilon$ back into the $d \times d$ matrix. Absorbing $\epsilon$ into a $d$-dimensional determinant scales the outside multiplier by $\epsilon^{-d}$:
\begin{equation}
    |K| = \epsilon^N \epsilon^{-d} \left| \epsilon (I_d + \epsilon^{-1} Z^T Z) \right| = \epsilon^{N-d} |Z^T Z + \epsilon I_d| = \epsilon^{N-d} |C|
\end{equation}

Taking the natural logarithm of both sides transforms the exponentiation into linear scalar addition:
\begin{empheq}[box=\fbox]{equation}
    \log |K| = \log |C| + (N - d) \log \epsilon
\end{empheq}

\subsection{Conclusion: The Unimodal Constraint}
By substituting our two derived primal equivalencies back into the original dual loss function (Eq.\ref{eq:app_dual_loss}), we obtain:
\begin{align} \label{eq:app_primal_loss_final}
    \mathcal{L}_{\text{Dual}} &= \frac{1}{2} \left[ d - \epsilon \text{Tr}(C^{-1}) \right] + \frac{1}{2} \left[ \log |C| + (N - d) \log \epsilon \right] \nonumber \\
    &= \frac{1}{2} \left( \log |C| - \epsilon \text{Tr}(C^{-1}) \right) + \text{Constant}
\end{align}

Equation \ref{eq:app_primal_loss_final} reveals a profound structural truth: \textit{if one utilizes a linear kernel, the exact $\mathcal{O}(N^3)$ Gaussian Process NLL is mathematically driven entirely by the $d \times d$ feature covariance matrix $C$}. It possesses absolutely no non-linear flexibility. 

Because the loss gradients are algebraically locked to the exact same $d \times d$ global covariance structure, optimizing the dual sample-space GJE yields the exact same weights as fitting a single, global multivariate Gaussian ($K=1$ GMM) in the primal feature space. This fundamentally restricts the learned representations to a strict linear geometry. To wrap around non-linear, multi-modal semantic branches (as illustrated in our synthetic datasets), one is forced to either utilize highly expensive non-linear approximations (e.g. Random Fourier Features for an RBF kernel) to bend this single density in an infinite-dimensional RKHS, or more naturally, pivot directly into the primal feature space and deploy multiple discrete prototypes via Gaussian Mixture Joint Embeddings (GMJE).

\section{Properties of GMM} \label{app:GMM_properties}

In the GMJE framework (Section.\ref{sec:GMJE}), we model the concatenated context and target embeddings $Z = [z_c^T, z_t^T]^T$ as a joint Gaussian Mixture Model (GMM) with $K$ components\footnote{$K$ in this context denotes the total number of GMM components, not a kernel generated covariance matrix.}. In this section, we mathematically derive the exact marginal distribution $p(z_t)$ and the conditional distribution $p(z_t \mid z_c)$ produced by this joint mixture, and explore how this distribution natively yields multi-modal uncertainty quantification.

Let the joint distribution be defined as (Eq.\ref{eq:GMJE_joint}):
\begin{equation} \label{eq:GMM_joint_dist}
p(z_c, z_t) = \sum_{k=1}^K \pi_k p(z_c, z_t \mid k)
\end{equation}
where $\pi_k$ is the prior probability of the $k$-th mixture component, and the component-wise joint distribution is a multivariate Gaussian:
\begin{equation} \label{eq:GMM_joint_dist_component}
    p(z_c, z_t \mid k) = \mathcal{N} \left( \begin{bmatrix} z_c \\ z_t \end{bmatrix} \middle| \begin{bmatrix} \mu_{c,k} \\ \mu_{t,k} \end{bmatrix}, \begin{bmatrix} \Sigma_{cc,k} & \Sigma_{ct,k} \\ \Sigma_{tc,k} & \Sigma_{tt,k} \end{bmatrix} \right)
\end{equation}

\subsection{Derivation of the Marginal Distribution of a Joint GMM} \label{app:GMJE_marginal_derivation}
For generative downstream tasks (such as unconditional image synthesis, Section.\ref{subsec:GMJE_generative}), we frequently need to sample directly from the representation space of a single view (e.g. the target space $z_t$). To obtain this, we must evaluate the marginal distribution $p(z_t)$.

By the sum rule of probability, the marginal distribution is obtained by integrating the joint density over the continuous context variable $z_c$:
\begin{align}
p(z_t) &= \int p(z_c, z_t) dz_c \nonumber \\
&= \int \sum_{k=1}^K \pi_k p(z_c, z_t \mid k) dz_c \nonumber \\
&= \sum_{k=1}^K \pi_k \int p(z_c, z_t \mid k) dz_c
\end{align}
By the marginalization properties of joint Gaussians (as detailed in Appendix.\ref{app:Gaussian_dist}), the integral of a joint Gaussian over one of its component blocks is simply the marginal Gaussian of the remaining block. Thus, the integral elegantly resolves to:
\begin{equation}
    \int p(z_c, z_t \mid k) dz_c = p(z_t \mid k) = \mathcal{N}(z_t \mid \mu_{t,k}, \Sigma_{tt,k})
\end{equation}
Substituting this back into the summation yields the final marginal distribution:
\begin{equation} \label{eq:GMJE_marginal_target}
p(z_t) = \sum_{k=1}^K \pi_k \mathcal{N}(z_t \mid \mu_{t,k}, \Sigma_{tt,k})
\end{equation}
This establishes a profound and convenient property: \textbf{the marginal distribution of a joint GMM is simply a new GMM} constructed using the marginalized means ($\mu_{t,k}$) and covariances ($\Sigma_{tt,k}$) of the respective block, weighted by the exact same prior mixing coefficients ($\pi_k$). 

By symmetry, the marginal distribution of the context embedding $p(z_c)$, which acts as the geometric regularizer during training, evaluates exactly to:
\begin{equation} \label{eq:GMJE_marginal_context}
p(z_c) = \sum_{k=1}^K \pi_k \mathcal{N}(z_c \mid \mu_{c,k}, \Sigma_{cc,k})
\end{equation}

\subsection{Derivation of the Conditional Distribution of a Joint GMM} \label{app:GMJE_conditional_derivation}

Here we prove that \textbf{the conditional distribution $p(z_t \mid z_c)$ derived from a joint Gaussian mixture is also exactly a new GMM}, and we identify its closed-form parameters.

By the definition of conditional probability, we divide the joint distribution by the marginal context distribution (Eq.\ref{eq:GMJE_marginal_context}):
\begin{equation} \label{eq:GMJE_cond_step1}
p(z_t \mid z_c) = \frac{p(z_c, z_t)}{p(z_c)} = \frac{\sum_{k=1}^K \pi_k p(z_c, z_t \mid k)}{p(z_c)}
\end{equation}
Using the product rule of probability \textit{within} each mixture component, we factorize the joint component distribution as $p(z_c, z_t \mid k) = p(z_t \mid z_c, k) p(z_c \mid k)$. Substituting this factorization into the numerator:
\begin{equation}
p(z_t \mid z_c) = \frac{\sum_{k=1}^K \pi_k p(z_t \mid z_c, k) p(z_c \mid k)}{p(z_c)}
\end{equation}
We can rearrange the terms inside the summation to isolate the component conditional distribution $p(z_t \mid z_c, k)$:
\begin{empheq}[box=\dashedbox]{equation}
p(z_t \mid z_c)
= \sum_{k=1}^K
\left[
\frac{\pi_k\, p(z_c \mid k)}{p(z_c)}
\right]
\, p(z_t \mid z_c, k)
= \sum_{k=1}^K
\gamma_k(z_c)
\, p(z_t \mid z_c, k)
\label{eq:GMJE_cond_step2}
\end{empheq}

Eq.\ref{eq:GMJE_cond_step2} is structurally a mixture model. Let us examine its two distinct parts.

\textbf{1. The mixing weights ($\gamma_k$):}
We define the bracketed term as our new data-dependent mixing weight, $\gamma_k(z_c)$. By substituting $p(z_c)$ from Eq.\ref{eq:GMJE_marginal_context}, we get:
\begin{empheq}[box=\dashedbox]{equation}
\gamma_k(z_c) 
= \frac{\pi_k p(z_c \mid k)}{\sum_{j=1}^K \pi_j p(z_c \mid j)}
= \frac{\pi_k \mathcal{N}(z_c \mid \mu_{c,k}, \Sigma_{cc,k})}
{\sum_{j=1}^K \pi_j \mathcal{N}(z_c \mid \mu_{c,j}, \Sigma_{cc,j})}
\label{eq:GMJE_cond_step3}
\end{empheq}
By Bayes' theorem, $\gamma_k(z_c)$ represents the exact posterior probability $p(k \mid z_c)$, that is, the probability that the $k$-th mixture component generated the observation, given the context embedding $z_c$. Because they are proper probabilities, they strictly sum to one ($\sum_{k=1}^K \gamma_k(z_c) = 1$).

\textbf{2. The component conditional distributions:}
The term $p(z_t \mid z_c, k)$ is the conditional distribution derived from a single joint Gaussian component. As proven in Appendix.\ref{app:derivation_of_conditional_distribution}, conditioning a joint Gaussian yields exactly another Gaussian. Unlike the zero-mean formulation in the base GJE, GMJE components possess learnable means ($\mu_{c,k}, \mu_{t,k}$). By applying the standard Gaussian conditioning formulas, the closed-form parameters for each component predictor $k$ are (see Eq.\ref{eq:gen_cond_mean_cov}):
\begin{empheq}[box=\dashedbox]{equation}
\begin{aligned}
\mu_{t|c, k} &= \mu_{t,k} + \Sigma_{tc,k}\Sigma_{cc,k}^{-1}(z_c - \mu_{c,k}) \\
\Sigma_{t|c, k} &= \Sigma_{tt,k} - \Sigma_{tc,k}\Sigma_{cc,k}^{-1}\Sigma_{ct,k}
\end{aligned}
\label{eq:GMJE_component_mean_cov}
\end{empheq}

Combining Eq.\ref{eq:GMJE_cond_step3} and Eq.\ref{eq:GMJE_component_mean_cov}, the full conditional distribution Eq.\ref{eq:GMJE_cond_step2} beautifully collapses into a new, \textit{dynamically weighted Gaussian Mixture Model}:
\begin{empheq}[box=\fbox]{equation} \label{eq:GMJE_conditional}
p(z_t \mid z_c) = \sum_{k=1}^K \gamma_k(z_c)\,
\mathcal{N}\!\left(z_t \mid \mu_{t|c, k}, \Sigma_{t|c, k}\right)
\end{empheq}
This mathematically demonstrates that the conditional distribution used for predictive inference is exactly a weighted average of all individual Gaussian conditional components, with weights $\gamma_k(z_c)$ acting as a soft, probabilistic routing mechanism.

\subsection{Uncertainty Quantification: The Law of Total Variance} \label{app:GMM_uncertainty}

A profound mathematical advantage of Gaussian Mixtures is their ability to model complex inverse problems. When a fixed input $z_c$ is provided, a conditional GMM does not merely output a single scalar value and a single variance; it outputs a full, continuous probability density function $p(z_t \mid z_c)$. Because it parameterizes a full distribution via $\{\gamma_k, \mu_k, \Sigma_k\}$, it quantifies uncertainty in two completely different ways simultaneously.

\textbf{Type A: Local ``Noise'' Uncertainty (Aleatoric):}
this is captured directly by the component variances ($\Sigma_k$). It represents the inherent, irreducible noise of the data manifold around a specific, valid prediction. For example, in our synthetic experiments (Section \ref{subsec:exp_synthetic}), when the model predicts $\sigma_k \approx 0.05$, it indicates that if the target belongs to this specific functional branch, it is expected to fluctuate by $\pm 0.05$ strictly due to random observation noise.

\textbf{Type B: Global ``Ambiguity'' Uncertainty (Epistemic / Multi-modal):}
this is captured by the interaction between the mixing weights ($\gamma_k$) and the spatial spread of the component means ($\mu_k$). It represents the model's structural uncertainty regarding which macroscopic outcome is actually occurring. For example, if the mixture model assigns equal mixing weights ($\gamma_1 \approx \gamma_2 \approx \gamma_3 \approx 1/3$) while placing the component means at wildly different coordinates ($+0.8, 0, -0.8$), it is explicitly stating that the given context $z_c$ is highly ambiguous, and any of the three divergent branches are equally valid possibilities. 

\textbf{The Total Variance Decomposition:}
in many engineering applications, it is desirable to reduce a full mixture distribution down to a single "average prediction" and a single "total uncertainty" metric. This is achieved via the \textit{Law of Total Variance}. 

If we force the GMM to collapse its prediction into a single overall expected value (the conditional mean), it simply evaluates to the weighted sum of the components:
\begin{equation} \label{eq:total_mean}
\mu_{\text{total}} = \mathbb{E}[z_t \mid z_c] = \sum_{k=1}^K \gamma_k \mu_k
\end{equation}
As proven in Appendix.\ref{app:mse_derivation}, this is exactly the deterministic point that classic JEPA converges to, frequently landing in empty space.

If we force the GMM to calculate a single overall variance around that mean, the mathematics elegantly splits the uncertainty into two distinct, interpretable terms:
\begin{equation} \label{eq:total_variance}
\Sigma_{\text{total}} = \underbrace{\sum_{k=1}^K \gamma_k \Sigma_k}_{\text{Average Local Noise}} + \underbrace{\sum_{k=1}^K \gamma_k (\mu_k - \mu_{\text{total}})(\mu_k - \mu_{\text{total}})^T}_{\text{Variance of the Means (Ambiguity)}}
\end{equation}

This universal equation is the ultimate key to understanding the specific architectural comparisons in our synthetic experiments:
\begin{itemize}
    \item \textit{GMJE-MDN} models this equation perfectly by keeping the components separate, allowing the neural network to dynamically distinguish between inherently noisy data (high $\Sigma_k$) and ambiguous mappings (spread out $\mu_k$ with balanced $\gamma_k$).
    \item \textit{Classic JEPA} outputs only $\mu_{\text{total}}$ (Eq.\ref{eq:total_mean}) via MSE optimization, completely ignoring the total variance and thus possessing zero notion of uncertainty.
    \item \textit{Unimodal GJE} mathematically attempts to model $\Sigma_{\text{total}}$ natively. However, because it possesses only a single covariance matrix, it is forced to absorb the massive ``Variance of the Means'' (the ambiguity term) directly into its shape. This forces the Unimodal GJE to stretch into a massive, over-smoothed ellipse that bleeds into empty space in a desperate attempt to cover the multi-modal ambiguity with a single Gaussian.
\end{itemize}

\section{MSE Minimization Yields the Conditional Mean} \label{app:mse_derivation}

In this section, we first present the general mathematical proof that minimizing the Mean Squared Error (MSE) risk for any two random variables strictly yields the conditional mean. We then explicitly apply this theorem to the classic JEPA architecture to expose its fundamental theoretical limitations, as mentioned in Section.\ref{sec:intro} and Section.\ref{sec:GJE}, which motivated our use of MDN in Section.\ref{subsec:GMJE_MDN}.

\subsection{The General Case: Two Random Variables}
Let $X$ represent our input variable and $Y$ represent our target variable. We want to find a predictor function $g(X)$ (e.g. a neural network) that minimizes the expected squared error risk:
\begin{equation}
    \mathcal{R}(g) = \mathbb{E}_{X,Y} \left[ \|Y - g(X)\|^2 \right]
\end{equation}

% \textit{Step 1: Introduce the conditional mean.} \\
Let us define the true conditional mean of the target given the context as $m(X) = \mathbb{E}[Y \mid X]$. We can introduce this term into our objective function by adding and subtracting it inside the squared norm:
\begin{equation}
    \|Y - g(X)\|^2 = \|Y - m(X) + m(X) - g(X)\|^2
\end{equation}

% \textbf{Step 2: Expand the quadratic.} \\
Expanding this gives us three distinct terms:
\begin{equation}
    \|Y - g(X)\|^2 = \|Y - m(X)\|^2 + \|m(X) - g(X)\|^2 + 2 \langle Y - m(X), m(X) - g(X) \rangle
\end{equation}

% \textbf{Step 3: Apply the Law of Iterated Expectations.} \\
Now, we take the overall expectation $\mathbb{E}_{X,Y}[\cdot]$ of this expanded equation. By the \textit{Law of Iterated Expectations}, $\mathbb{E}_{X,Y}[\cdot] = \mathbb{E}_X [\mathbb{E}_{Y \mid X}[\cdot \mid X]]$. Let's look at the expectation of the third term (the cross-term) conditioned on $X$:
\begin{equation*}
    \mathbb{E}_{Y \mid X} \left[ \langle Y - m(X), m(X) - g(X) \rangle \mid X \right]
\end{equation*}

Because we are conditioning on $X$, any function of strictly $X$ acts as a constant and can be pulled out of the expectation. Since $m(X)$ and $g(X)$ depend only on $X$, we can pull them out of the left side of the inner product by linearity:
\begin{equation*}
    = \langle \mathbb{E}_{Y \mid X}[Y - m(X) \mid X], m(X) - g(X) \rangle
\end{equation*}

% \textbf{Step 4: The cross-term vanishes.} \\
Look closely at the first part of that inner product: $\mathbb{E}_{Y \mid X}[Y - m(X) \mid X]$. Because the expectation is a linear operator, we can split it:
\begin{equation*}
    \mathbb{E}_{Y \mid X}[Y \mid X] - \mathbb{E}_{Y \mid X}[m(X) \mid X]
\end{equation*}

By our very definition in Step 1, $\mathbb{E}[Y \mid X] = m(X)$. And since $m(X)$ is a constant given $X$, its expectation is just itself. Therefore:
\begin{equation*}
    m(X) - m(X) = \mathbf{0}
\end{equation*}

Therefore, the entire cross-product term completely vanishes.

% \textbf{Step 5: The Final Conclusion.} \\
We are left with a beautifully simplified risk function:
\begin{equation}
    \mathcal{R}(g) = \mathbb{E}_{X,Y} \left[ \|Y - m(X)\|^2 \right] + \mathbb{E}_X \left[ \|m(X) - g(X)\|^2 \right]
\end{equation}

Look at these two remaining terms:
\begin{itemize}
    \item \textbf{The Irreducible Error:} $\mathbb{E}[\|Y - m(X)\|^2]$ is the inherent variance of the data (the conditional variance). It has absolutely no dependence on our neural network $g(X)$. We cannot optimize it; it is a fixed property of the universe/dataset.
    \item \textbf{The Reducible Error:} $\mathbb{E}[\|m(X) - g(X)\|^2]$ is a squared distance, which means it is strictly non-negative ($\geq 0$).
\end{itemize}

To minimize the total risk $\mathcal{R}(g)$, we can only control the second term. The absolute minimum possible value for this term is exactly $0$, which occurs if and only if:
\begin{equation}
    g(X) = m(X)
\end{equation}

That is, the best possible minimizer, using the MSE loss, is the conditional mean.

\subsection{Application to Classic JEPA}

In classic JEPA, the objective is to learn a deterministic predictor $g(z_c)$ that maps a context embedding $z_c$ to a target embedding $z_t$ by minimizing the MSE. We demonstrate here that, for any measurable function $g$, the optimal predictor under the MSE risk natively collapses to the conditional average of the target space, and fails to capture multi-modal distributions.

Let the expected risk be defined as:
\begin{equation}
    \mathcal{R}(g) = \mathbb{E}_{z_c, z_t} \left[ \| z_t - g(z_c) \|^2 \right]
\end{equation}

We define the true conditional mean of the target given the context as $m(z_c) = \mathbb{E}[z_t \mid z_c]$. Adding and subtracting this conditional mean inside the norm yields:
\begin{align}
    \mathcal{R}(g) &= \mathbb{E}_{z_c, z_t} \left[ \| z_t - m(z_c) + m(z_c) - g(z_c) \|^2 \right] \nonumber \\
    &= \mathbb{E}_{z_c, z_t} \left[ \| z_t - m(z_c) \|^2 \right] + \mathbb{E}_{z_c, z_t} \left[ \| m(z_c) - g(z_c) \|^2 \right] \nonumber \\
    &\quad + 2 \mathbb{E}_{z_c, z_t} \left[ \langle z_t - m(z_c), m(z_c) - g(z_c) \rangle \right]
\end{align}

Similarly, by the \textit{Law of Iterated Expectations}, the expectation of the cross-term can be conditioned on $z_c$. Because $m(z_c)$ and $g(z_c)$ are deterministic constants given $z_c$, they can be factored out of the inner conditional expectation:
\begin{align}
    \mathbb{E}_{z_c, z_t} \left[ \langle z_t - m(z_c), m(z_c) - g(z_c) \rangle \right] 
    &= \mathbb{E}_{z_c} \left[ \mathbb{E}_{z_t \mid z_c} \left[ \langle z_t - m(z_c), m(z_c) - g(z_c) \rangle \mid z_c \right] \right] \nonumber \\
    &= \mathbb{E}_{z_c} \left[ \langle \mathbb{E}_{z_t \mid z_c}[z_t \mid z_c] - m(z_c), m(z_c) - g(z_c) \rangle \right]
\end{align}

By our definition, $\mathbb{E}_{z_t \mid z_c}[z_t \mid z_c] = m(z_c)$. Consequently, the left side of the inner product is exactly zero ($m(z_c) - m(z_c) = \mathbf{0}$), causing the entire cross-term to vanish. 

The risk function thus simplifies to two strictly non-negative terms:
\begin{equation}
    \mathcal{R}(g) = \underbrace{\mathbb{E}_{z_c, z_t} \left[ \| z_t - m(z_c) \|^2 \right]}_{\text{Irreducible Variance}} + \underbrace{\mathbb{E}_{z_c} \left[ \| m(z_c) - g(z_c) \|^2 \right]}_{\text{Reducible Error}}
\end{equation}

The first term is the inherent conditional variance of the data distribution, which is independent of our predictor $g$. The second term is a squared distance, which is minimized to its absolute lower bound of $0$ if and only if $g(z_c) = m(z_c)$. Therefore, the optimal deterministic predictor is exactly the conditional mean:
\begin{equation}
    g^*(z_c) = \mathbb{E}[z_t \mid z_c]
\end{equation}
When the true distribution $p(z_t \mid z_c)$ is multi-modal, this conditional average frequently lies in the empty space between valid modes, yielding erroneous predictions, as observed in our first experiment shape Fig.\ref{fig:synthetic_exp_sep}.

\section{The Identity Collapse Trap in Dynamic Parameterization (MDN)} \label{app:identity_collapse}

During our first experiments, we noted that, when extending the Gaussian Mixture Joint Embeddings (GMJE) framework to utilize dynamic, instance-specific parameters via a neural network (as in the GMJE-MDN architecture), one might intuitively attempt to parameterize the joint distribution directly from the joint embedding itself, i.e. dropping all info $Z_k = [z_c^T, z_t^T]^T_k$ of instance $k$ into the neural network to obtain an amortized inference of $[\mu,\Sigma,\pi]_k$ which can be used to parameterize the subsequent GMM. In this appendix, we mathematically demonstrate why feeding the full joint vector $Z = [z_c^T, z_t^T]^T$ into the parameter network leads to a pathological failure state known as \textit{Identity Collapse} (or \textit{Information Leakage}), and why introducing a strict \textit{conditional information bottleneck} is theoretically required.

\subsection{The Naive Joint Parameterization}
Suppose we construct a parameter network $f_\phi$ that takes the entire joint embedding $Z$ as input to predict the mixture parameters for that exact same joint space: $f_\phi(Z) \to \{\pi_k(Z), \mu_k(Z), \Sigma_k(Z)\}_{k=1}^K$. 
The network is optimized to minimize the Negative Log-Likelihood (NLL) of the joint mixture:
\begin{equation*}
    \mathcal{L} = -\log \sum_{k=1}^K \pi_k(Z) \mathcal{N}\Big(Z \;\Big|\; \mu_k(Z), \Sigma_k(Z)\Big)
\end{equation*}

\subsection{The Pathological Optimum (Identity Collapse)}
Because the neural network takes $Z$ as its input, it has full deterministic access to the exact target variable it is attempting to evaluate in the Gaussian density. To maximize the Gaussian likelihood (and thus minimize the NLL loss), the network discovers a trivial "cheat code": it simply learns the \textit{Identity Function}.

By setting the predicted mean of every component exactly equal to the input, $\mu_k(Z) = Z$ for all $k \in \{1 \dots K\}$, the Mahalanobis distance term inside the Gaussian exponent evaluates to exactly zero:
\begin{equation*}
    -\frac{1}{2} (Z - \mu_k(Z))^T \Sigma_k(Z)^{-1} (Z - \mu_k(Z)) = 0
\end{equation*}
Consequently, the exponential term reaches its absolute maximum, $\exp(0) = 1$. The network can then trivially drive the loss toward $-\infty$ by shrinking the determinant of the predicted covariance matrices, $|\Sigma_k(Z)| \to 0$, causing the probability density to approach infinity.

\subsection{Empirical Symptoms}
When this identity collapse occurs, the network fails to learn the underlying semantic branches or multi-modal structure of the data manifold. Instead, it merely memorizes the exact noise profile of the specific input data points. 

Further, because every component $k$ perfectly cheats by outputting exactly $\mu_k(Z) = Z$, the network has no mathematical incentive to prefer one component over another. As a result, the dynamic mixing weights degenerate into a flat, uniform distribution ($\pi_k(Z) \approx 1/K$ everywhere). The network completely loses its ability to perform intelligent, semantic density routing based on ambiguity.

\subsection{The Resolution: The Conditional Information Bottleneck}
To successfully achieve dynamic multi-modal routing, we must enforce a strict \textit{Information Bottleneck}. As derived in the GMJE-MDN formulation (Section.\ref{subsec:GMJE_MDN}), the parameter network must operate conditionally: it is only permitted to observe the context view $z_c$, and must predict the mixture parameters for the target view $z_t$:
\begin{equation}
    \mathcal{L} = -\log \sum_{k=1}^K \gamma_k(z_c) \mathcal{N}\Big(z_t \;\Big|\; \mu_{t|c,k}(z_c), \Sigma_{t|c,k}(z_c)\Big)
\end{equation}
Under this strict conditional formulation, the network does not have access to $z_t$. When presented with an ambiguous context embedding $z_c$ (e.g. the intersection point of multiple valid branches, as in Fig.\ref{fig:synthetic_exp_int}), it cannot rely on the identity function. To survive the loss penalty, it is mathematically forced to distribute its $K$ components to cover the distinct, valid possibilities of $z_t$ that could arise from that specific $z_c$. This physically forces the network to learn the true underlying manifold and dynamically route its probability mass ($\gamma_k$) to the appropriate branches, ensuring a robust and mathematically sound generative alignment space.

\section{Generalizing SMC to the Full GMJE Objective} \label{app:general_smc_gmje}

In Section \ref{sec:SMC_Memory_Bank}, we introduced SMC-GMJE (Algo.\ref{alg:smc_gmje}) as a direct upgrade to standard contrastive learning frameworks (e.g. MoCo). However, by substituting the InfoNCE loss with the exact parametric GMJE Negative Log-Likelihood (NLL) objective (Eq.\ref{eq:GMJE_prototypical_loss}), the Sequential Monte Carlo particle filter natively generalizes to manage full-covariance Gaussian components.

In this generalized formulation, the memory bank does not merely store target embeddings $z_t$; it stores the concatenated joint embeddings $Z = [z_c^T, z_t^T]^T$, which act as the $M$ component means ($\mu^{(m)}$) of a dynamically weighted Gaussian Mixture Model. Simultaneously, the neural network learns a shared, full-rank covariance matrix $\Sigma$ via gradient descent to capture the non-linear dependencies and multi-modal shape of the manifold.

\subsection{The Generalized Weight Update}
Instead of relying on the isotropic dot-product similarity ($\exp(z_c^T z_t^{(m)} / \tau)$), the particle likelihoods are evaluated using the true Mahalanobis distance governed by $\Sigma$. For a batch of $B$ incoming joint queries $Z_{query}$, the unnormalized importance weight for particle $m$ recursively updates as:
\begin{equation}
    \tilde{W}_t^{(m)} = W_{prior}^{(m)} \times \frac{1}{B} \sum_{Z \in Z_{query}} \exp \left( - \frac{1}{2} (Z - \mu^{(m)})^T \Sigma^{-1} (Z - \mu^{(m)}) \right)
\end{equation}
Note that the constant scaling factors and the log-determinant of $\Sigma$ from the standard Gaussian density function elegantly cancel out when normalizing the weights to sum to one, rendering the weight update mathematically clean and strictly dependent on the learned metric geometry of $\Sigma^{-1}$.

To understand why this update is computationally efficient, consider the exact probability density function of the multivariate Gaussian. The true unnormalized importance weight, utilizing the exact likelihood, is:
\begin{equation}
    \hat{W}_t^{(m)} = W_{prior}^{(m)} \times \frac{1}{B} \sum_{Z \in Z_{query}} \underbrace{\frac{1}{\sqrt{(2\pi)^D |\Sigma|}}}_{C(\Sigma)} \exp \left( - \frac{1}{2} (Z - \mu^{(m)})^T \Sigma^{-1} (Z - \mu^{(m)}) \right)
\end{equation}
where $D = d_c + d_t$ is the total dimensionality of the joint embedding, and $C(\Sigma)$ is the standard Gaussian normalization constant. 

Because the GMJE formulation utilizes a \textit{shared} global covariance matrix $\Sigma$ across all components, $C(\Sigma)$ is identical for every particle $m$. Consequently, it acts as a global scalar that can be factored out of the summation:
\begin{equation}
    \hat{W}_t^{(m)} = C(\Sigma) \times \underbrace{\left[ W_{prior}^{(m)} \times \frac{1}{B} \sum_{Z \in Z_{query}} \exp \left( - \frac{1}{2} (Z - \mu^{(m)})^T \Sigma^{-1} (Z - \mu^{(m)}) \right) \right]}_{\tilde{W}_t^{(m)}}
\end{equation}

When normalizing the posterior weights across the pool of particles to ensure they sum to one, this shared constant $C(\Sigma)$ elegantly factors out of the denominator and cancels with the numerator:
\begin{equation}
    W_{pool}^{(m)} = \frac{\hat{W}_t^{(m)}}{\sum_{j=1}^{M+B} \hat{W}_t^{(j)}} = \frac{C(\Sigma) \tilde{W}_t^{(m)}}{C(\Sigma) \sum_{j=1}^{M+B} \tilde{W}_t^{(j)}} = \frac{\tilde{W}_t^{(m)}}{\sum_{j=1}^{M+B} \tilde{W}_t^{(j)}}
\end{equation}
This cancellation mathematically proves that evaluating the computationally expensive log-determinant ($|\Sigma|$) and scaling factors is strictly unnecessary for the SMC routing step. The normalized posterior weights are determined entirely by the unnormalized exponent $\tilde{W}_t^{(m)}$, rendering the dynamic particle update mathematically clean and strictly dependent on the learned metric geometry of the precision matrix $\Sigma^{-1}$.

\subsection{Algorithm: General SMC-GMJE}
The complete procedure for the generalized full-covariance SMC-GMJE is detailed in Algo.\ref{alg:general_smc_gmje}. During the forward pass, the objective acts exactly as the relaxed EM optimization formulated in Section \ref{subsec:GMJE_reduced_EM}, but it utilizes the dynamic posterior particle weights $W^{(m)}$ as the mixing priors $\pi_k$.

\begin{algorithm}[H]
\caption{General SMC-GMJE: Full-Covariance Particle Filter}
\label{alg:general_smc_gmje}
\begin{algorithmic}[1]
\REQUIRE Stream of query batches $Z_{query} = \{Z_b\}_{b=1}^B$ generated by encoders (where $Z_b = [z_c^T, z_t^T]^T_b$), memory bank size $M$. Learnable shared covariance $\Sigma$.
\STATE Initialize memory bank $\mathcal{M}$ with $M$ random joint embeddings acting as means $\mu^{(m)}$, and uniform weights $W \leftarrow 1/M$.
\FOR{each training step $t$ with incoming batch $Z_{query}$}
    
    \STATE \textbf{// Loop A: Parametric Optimization (Requires Gradients)}
    \STATE \textbf{Forward Pass:} Compute the Log-Sum-Exp objective (Eq.\ref{eq:GMJE_prototypical_loss}) for the batch $Z_{query}$ against the memory bank means $\mu^{(m)}$, utilizing the current particle weights $W^{(m)}$ as the fixed mixing priors $\pi_m$, and the learnable covariance $\Sigma$.
    \STATE \textbf{Backward Pass:} Compute gradients and update the dual encoder networks' parameters \textit{and} the shared covariance matrix $\Sigma$ via gradient descent.
    
    \STATE
    \STATE \textbf{// Loop B: General SMC Optimization (Gradient-Free)}
    \STATE \textbf{Combine:} Pool $\mathcal{M}_{pool} \leftarrow \mathcal{M} \cup Z_{query}$, resulting in size $M+B$.
    \STATE \textbf{Prior Weights:} Set $W_{prior}^{(m)} \leftarrow \frac{1}{M+B}$ for the new batch components, and scale old memory bank weights by $\frac{M}{M+B}$.
    
    \STATE \textbf{Importance Update:} For each particle $m \in \{1 \dots M+B\}$:
    \STATE \quad $\tilde{W}_t^{(m)} \leftarrow W_{prior}^{(m)} \times \frac{1}{B} \sum_{Z \in Z_{query}} \exp \left( - \frac{1}{2} (Z - \mu^{(m)})^T \Sigma^{-1} (Z - \mu^{(m)}) \right)$
    \STATE \textbf{Normalize:} $W_{pool}^{(m)} \leftarrow \tilde{W}_t^{(m)} / \sum_{j=1}^{M+B} \tilde{W}_t^{(j)}$
    
    \STATE \textit{// Execute Sequential Importance Resampling (SIR)}
    \STATE Sample $M$ particles from $\mathcal{M}_{pool}$ with replacement, proportional to their new posterior weights $W_{pool}^{(m)}$.
    \STATE Replace the memory bank $\mathcal{M}$ with these sampled particles.
    \STATE Reset all memory bank weights to uniform: $W \leftarrow 1/M$.
\ENDFOR
\end{algorithmic}
\end{algorithm}

\section{The ``Mahalanobis Trace'' Trap in Empirical Batch Optimization} \label{app:mahalanobis_trap}

The ``Mahalanobis Trace'' Trap is a mathematically elegant but dangerous linear algebra quirk that frequently catches researchers off guard when implementing probabilistic loss functions or geometric regularizers in deep learning frameworks. In our context, this trap is not limited to unimodal distributions (e.g. Primal-GJE); it infects multi-modal Gaussian Mixtures (GMJE) with equal severity. To understand this trap, one must carefully distinguish between theoretical population mathematics and empirical mini-batch operations.

In generative theory, the Mahalanobis distance $z^T \Sigma^{-1} z$ measures how far a point $z$ is from the center of a distribution, scaled by a \textit{fixed population covariance} $\Sigma$. If the representation $z$ moves during optimization, its distance relative to the fixed $\Sigma$ changes, providing a valid non-zero gradient to the neural network encoders. 

However, in empirical self-supervised learning, one typically does not have access to a fixed $\Sigma$. Instead, one computes an empirical covariance matrix $C$ dynamically from the current mini-batch $Z$, and then immediately evaluates the exact same mini-batch $Z$ against $C^{-1}$. The following mathematical proofs demonstrate why doing this collapses the entire calculation into a static constant for both unimodal and multi-modal settings, catastrophically killing the data-fit gradients.

\subsection{Case 1: The Trap in Unimodal Gaussian (Primal-GJE)}
Let $Z \in \mathbb{R}^{N \times d}$ be a batch of $N$ centered embedding vectors, where each row $z_i \in \mathbb{R}^d$ is a single image's latent representation. The empirical batch covariance matrix $C$ is defined as $C = \frac{1}{N} \sum_{i=1}^N z_i z_i^T$.

The average Mahalanobis distance across the batch, which fundamentally provides both the \textit{expansive barrier} (preventing volume compression) and the \textit{cross-view attractive force}, is formulated as:
\begin{equation} \label{eq:trap_loss}
    \mathcal{L}_{\text{Mahalanobis}} = \frac{1}{N} \sum_{i=1}^N z_i^T C^{-1} z_i
\end{equation}
Because the quadratic form $z_i^T C^{-1} z_i$ results in a single scalar value, wrapping it in a trace operator leaves its value unchanged. We apply the trace cyclic property, $\text{Tr}(ABC) = \text{Tr}(CAB)$, and push the linear summation inside the operator:
\begin{align}
    \mathcal{L}_{\text{Mahalanobis}} &= \frac{1}{N} \sum_{i=1}^N \text{Tr}(C^{-1} z_i z_i^T) \nonumber \\
    &= \text{Tr}\left( C^{-1} \left[ \frac{1}{N} \sum_{i=1}^N z_i z_i^T \right] \right)
\end{align}
The bracketed term is exactly the mathematical definition of the empirical covariance matrix $C$. By directly substituting $C$ into the equation, the variables gracefully cancel to yield the Identity matrix ($I_d$), the trace of which is strictly the constant intrinsic dimensionality $d$:
\begin{equation}
    \mathcal{L}_{\text{Mahalanobis}} = \text{Tr}(C^{-1} C) = \text{Tr}(I_d) = d
\end{equation}

\subsection{Case 2: The Trap in GMM (GMJE)}
One might assume that partitioning the space into $K$ discrete components via a Gaussian Mixture Model resolves this issue. However, standard Expectation-Maximization (EM) batch updates trigger the exact same cancellation.

For a specific mixture component $k$, let $\gamma_{ik}$ be the posterior responsibility (soft assignment) of data point $z_i$ to cluster $k$. The effective number of points in the cluster is $N_k = \sum_{i=1}^N \gamma_{ik}$, and the empirical component covariance is evaluated as:
\begin{equation}
    \Sigma_k = \frac{1}{N_k} \sum_{i=1}^N \gamma_{ik} (z_i - \mu_k)(z_i - \mu_k)^T
\end{equation}
When evaluating the Mahalanobis data-fit term for this specific cluster using the same batch, weighted by the responsibilities, we apply the exact same trace trick and cyclic permutation:
\begin{align}
    \mathcal{L}_{\text{data-fit}, k} &= \sum_{i=1}^N \gamma_{ik} (z_i - \mu_k)^T \Sigma_k^{-1} (z_i - \mu_k) \nonumber \\
    &= \text{Tr} \left( \Sigma_k^{-1} \left[ \sum_{i=1}^N \gamma_{ik} (z_i - \mu_k)(z_i - \mu_k)^T \right] \right)
\end{align}
Observe the bracketed term. It is exactly equal to $N_k \Sigma_k$. Substituting this yields:
\begin{equation}
    \mathcal{L}_{\text{data-fit}, k} = \text{Tr} \left( \Sigma_k^{-1} [N_k \Sigma_k] \right) = N_k \text{Tr}(I_d) = N_k \cdot d
\end{equation}
Thus, the data-fit gradient for every single cluster $k$ algebraically collapses to a constant. The network feels absolutely zero gradient pulling the representations toward the cluster center $\mu_k$.

\vspace{1em}

To intuitively grasp why evaluating this term as a constant is fatal, one must first examine the geometric dynamics of the factorized joint distribution objective. By the laws of probability, the joint negative log-likelihood factors elegantly into two opposing forces: $-\log p(z_c, z_t) = -\log p(z_t \mid z_c) - \log p(z_c)$. The conditional term, $-\log p(z_t \mid z_c)$, serves as the \textit{Attractive (Pulling) Force}. It acts as a mathematical rubber band connecting two augmented views, pulling the target embedding $z_t$ toward the specific semantic coordinate or cluster generated by the context $z_c$. Conversely, the marginal term, $-\log p(z_c)$, governs the \textit{Repulsive (Pushing) Force}. When structurally inverted to explicitly maximize the differential entropy (via the log-determinant, which measures the geometric volume), it acts as a universal repulsive force that pushes all independent samples in the batch apart.

When these forces fall out of balance, the latent space suffers from distinct geometric failure modes. If the repulsive force is entirely absent, the attractive force wins completely, resulting in \textit{Complete Collapse} (or \textit{Instance Collapse}), where the network lazily maps every single image in the dataset to the exact same coordinate, reducing the variance to zero. More insidiously, if the geometric repulsion is misaligned, weak, or improperly evaluated, the network suffers from \textit{Dimensional Collapse} (or \textit{Feature Collapse}). In this state, the embeddings spread out enough to distinguish instances, but they collapse onto a strictly lower-dimensional subspace (e.g. squashing a 128-dimensional hypersphere onto a 1D line or 2D plane). Mathematically, the feature covariance matrix $\Sigma$ becomes highly ill-conditioned, dominated by zero eigenvalues. Geometrically, the network wastes its representational capacity; by failing to utilize orthogonal dimensions, independent semantic concepts (e.g. "color" and "shape") become highly entangled into a single metric, destroying the expressiveness of the joint embedding.

\subsection{The Dual Consequences of the Trace Trap}
In deep learning frameworks, one might intuitively implement the exact NLL penalty as a complex batched tensor operation (e.g. evaluating the sum of dot products between the embeddings and their inverse covariance). However, because $C$ was derived directly from $Z$, linear algebra perfectly cancels the moving variables, evaluating the entire data-fit term as the constant scalar $d$. Because $\nabla_Z (d) = \mathbf{0}$, \textbf{no gradients ever flow backward through this term}. 

Because the exact data-fit term mathematically governs both the expansive barrier and the cross-view attraction, its algebraic death triggers a simultaneous, dual geometric failure:

\paragraph{Consequence 1: Loss of Repulsion (Dimensional Collapse).} 
Without the data-fit term acting as an \textit{expansive barrier}, the complementary half of the exact joint loss function, i.e. the geometric penalty $\frac{1}{2} \log|C|$, is left completely unopposed. Driven solely to minimize this unconstrained log-determinant, the optimizer aggressively shrinks the volume of the space to zero, causing the catastrophic \textit{dimensional collapse} frequently observed in naive implementations of joint covariance regularization.

\paragraph{Remedy 1: Explicit Entropy Maximization.}
To rescue the optimization dynamics, the regularizer must be inverted. By discarding the dead Mahalanobis term entirely and flipping the sign of the log-determinant (minimizing $-\frac{1}{2} \log|C|$), we explicitly instruct the optimizer to \textit{maximize} the differential entropy. This actively expands the variance of the embeddings, creating a flawless repulsive force that covers the available spherical space. 

\paragraph{Consequence 2: Eradication of the Pulling Force.} 
While explicit entropy maximization successfully pushes representations apart to fill the hypersphere, \textit{it cannot restore view alignment}. The dead data-fit term containing the inverse cross-covariance block ($C_{ct}^{-1}$) is responsible for pulling $z_c$ and $z_t$ together. Without it, the network will scatter the embeddings across the hypersphere to maximize entropy, but matched pairs $z_c$ and $z_t$ will arbitrarily drift into completely random, disconnected coordinates.

\paragraph{Remedy 2: GMJE Architectural Interventions.}
GMJE survives this trap not because Gaussian Mixtures are magically immune to linear algebra, but because our specific architectural designs explicitly decouple the parameters from the empirical batch calculations. The three structural remedies introduced in this work act as the literal "escape hatches" from the GMM trap:
\begin{enumerate}
    \item \textit{Parametric Decoupling (Global Prototypes):} as utilized in Section \ref{subsec:GMJE_reduced_EM}, we do not calculate $\Sigma_k$ empirically from the batch. Instead, $\Sigma$ is initialized as a standalone learnable parameter (or treated as a fixed constant like temperature $\tau$). Because $\Sigma_{\text{param}}$ is updated via gradient descent over many epochs, it never equals the instantaneous batch covariance ($\Sigma_{\text{param}} \neq \Sigma_{\text{batch}}$). Consequently, the algebraic cancellation fails, and the network physically feels the gradients of the Mahalanobis distance, allowing the Log-Sum-Exp objective to pull $z_c$ and $z_t$ together.
    \item \textit{Conditional Factorization (MDN):} in Section \ref{subsec:GMJE_MDN}, the covariance $\Sigma_k(z_c)$ is predicted dynamically by a neural network based solely on the context view. It is a generated function of the input, not an empirical average of the batch. Therefore, the cancellation cannot occur, and the conditional formulation natively insulates the pulling force.
    \item \textit{Dynamic Non-Parametric Density (SMC):} in the non-parametric branch (Section \ref{sec:SMC_Memory_Bank}), the "components" are historical representations sitting in a memory bank, and the covariance is fixed ($\tau I$). The current batch evaluates its distance against the historical bank, not against itself. By isolating the target representations from the current batch gradient graph, the pulling force (the numerator of the InfoNCE objective) remains perfectly intact and highly informative.
\end{enumerate}

\subsection{Beyond the Trace Trap: The Log-Determinant Cheat}
Even though the architectural interventions of Parametric GMJE successfully bypass the Mahalanobis Trace Trap by decoupling the parameters, the optimization dynamics remain vulnerable to a secondary failure mode if the exact Negative Log-Likelihood (NLL) is used. 

Because the neural network controls both the geometric parameters and the representations ($Z$) simultaneously, optimizing the exact NLL exposes a trivial shortcut. The network realizes that mapping all representations $Z$ to the origin ($\mathbf{0}$) forces the empirical variance to zero, which in turn allows it to shrink $\Sigma \to 0$. As $\Sigma$ collapses, the log-determinant regularizer drives $\log |\Sigma| \to -\infty$, netting a massively negative loss. Consequently, optimizing the exact NLL directly induces catastrophic Dimensional Collapse.

To prevent this "cheat", one could apply a hard boundary constraint to the representations. For instance, VicReg \cite{bardes2021vicreg} utilizes a heuristic hinge loss ($\max(0, 1 - \sigma)$) to explicitly constrain the variance of each dimension to be at least $1$. However, in generative representation learning, such constraints are suboptimal because they create a "huddled" latent space: the network will shrink the representations until they hit the exact floor of the constraint, and then it will stop, doing the bare minimum.

This geometric reality fundamentally motivates \textbf{Explicit Entropy Maximization} (Remedy 1). By explicitly flipping the sign of the regularizer to minimize $-\frac{1}{2} \log |\Sigma_c|$, we replace a rigid floor constraint with a continuous, outward repulsive force. The network actively pushes the representations apart, forcing them to span the entire available volume of the hypersphere uniformly. This mathematical tension seamlessly guarantees a rich, full-rank generative embedding space without requiring manually tuned constraint hyperparameters.

\section{Experimental Details} \label{app:exp_details}

To support reproducibility, this appendix summarizes the shared computing environment, data processing steps, model architectures, and hyperparameters used in the three experimental groups reported in the paper: (i) synthetic ambiguous alignment, (ii) vision benchmarks on CIFAR-10, and (iii) unconditional latent sampling on MNIST.

\subsection{Shared Computing Environment}
All experiments were executed in Google Colab on a Linux-based virtualized instance (KVM) with the following hardware configuration:
\begin{itemize}[label=-]
    \item \textit{CPU:} Intel(R) Xeon(R) Platinum 8481C CPU at 2.70\,GHz, with 13 physical cores and 26 logical threads.
    \item \textit{Memory (RAM):} 247.01\,GB.
    \item \textit{Storage:} 253.06\,GB disk space.
    \item \textit{GPU:} a single NVIDIA H100 80GB HBM3 GPU (Driver Version 580.82.07, CUDA Version 13.0).
    \item \textit{Software stack:} Python 3.x, PyTorch \cite{pytorch2019}, Torchvision \cite{torchvision2010}, Scikit-Learn \cite{sklearn2011}, NumPy \cite{harris2020array}, and Matplotlib \cite{matplot2007}. The synthetic experiments additionally used Scikit-Learn's \textit{GaussianMixture} and \textit{GaussianProcessRegressor}.
\end{itemize}

\subsection{Experiment 1: Synthetic Ambiguous Alignment} \label{app:exp1_details}

\subsubsection{Random Seeds}
For both Dataset A and Dataset B, NumPy and PyTorch were seeded with \texttt{111}.

\subsubsection{Data Generation}
Both synthetic datasets used the same basic generation protocol:
\begin{itemize}[label=-]
    \item \textit{Sample size:} $N=3000$ training pairs $(x_c, x_t)$.
    \item \textit{Context distribution:} $x_c \sim \mathcal{U}(-1,1)$.
    \item \textit{Branch probabilities:} uniform over the three branches, i.e.\ $\pi_1=\pi_2=\pi_3=1/3$.
    \item \textit{Observation noise:} additive Gaussian noise $\epsilon \sim \mathcal{N}(0,0.05^2)$ applied to $x_t$ only.
    \item \textit{Evaluation grid:} 300 evenly spaced test points over $[-1,1]$.
\end{itemize}

The two target-generation rules were:
\begin{itemize}[label=-]
    \item \textbf{Dataset A (Separated Branches):}
    \[
    f(x_c)\in \left\{x_c^2+0.5,\; -x_c^2-0.5,\; x_c^3\right\}.
    \]
    \item \textbf{Dataset B (Intersecting Branches):}
    \[
    f(x_c)\in \left\{\sin(3x_c),\; -\sin(3x_c),\; 0\right\}.
    \]
\end{itemize}

\subsubsection{Models and Hyperparameters}

\paragraph{Classic JEPA (MSE).}
A 3-layer MLP was used:
\[
\texttt{Linear}(1,64)\rightarrow \texttt{ReLU}\rightarrow
\texttt{Linear}(64,64)\rightarrow \texttt{ReLU}\rightarrow
\texttt{Linear}(64,1).
\]
The model was optimized with Adam using learning rate $10^{-2}$ for 1200 full-batch epochs under MSE loss.

\paragraph{Dual-space kernel baseline (RBF GJE).}
The dual baseline was implemented with Scikit-Learn's \texttt{GaussianProcessRegressor} using
\[
\texttt{RBF(length\_scale=0.5)} + \texttt{WhiteKernel(noise\_level=0.1)}.
\]
The internal optimizer was disabled (\texttt{optimizer=None}) to keep the kernel fixed on the strongly multi-modal tasks.

\paragraph{GMJE-EM ($K=1$) and GMJE-EM ($K=3$).}
Both were implemented using Scikit-Learn's \texttt{GaussianMixture} with:
\begin{itemize}[label=$\circ$]
    \item \texttt{covariance\_type='full'},
    \item \texttt{init\_params='kmeans'} for the $K=3$ model,
    \item standard EM fitting on the 2D joint data $Z=[x_c, x_t]^T$.
\end{itemize}

\paragraph{GMJE-GNG.}
Growing Neural Gas was initialized with two nodes sampled from the training data and then updated online with:
\begin{itemize}[label=$\circ$]
    \item maximum nodes $K_{\max}=25$,
    \item maximum edge age $a_{\max}=50$,
    \item winner learning rate $\epsilon_b=0.2$,
    \item neighbor learning rate $\epsilon_n=0.01$,
    \item insertion interval $\lambda=100$,
    \item local error decay $\alpha=0.5$,
    \item global error decay $\beta=0.995$.
\end{itemize}

\paragraph{GMJE-MDN.}
The conditional MDN used $K=3$ mixture components. Its architecture consisted of a shared 2-layer MLP backbone,
\[
\texttt{Linear}(1,64)\rightarrow \texttt{ReLU}\rightarrow
\texttt{Linear}(64,64)\rightarrow \texttt{ReLU},
\]
followed by three linear heads for the mixture weights, conditional means, and conditional standard deviations. The weights were passed through a softmax, the means were unconstrained, and the standard deviations were parameterized as
\[
\sigma(x_c)=\exp(\cdot)+10^{-5}.
\]
Training used Adam with learning rate $10^{-2}$ for 1200 full-batch epochs, minimizing the exact conditional Gaussian-mixture negative log-likelihood.

\subsection{Experiment 2: Representation Learning on Vision Benchmarks (CIFAR-10)} \label{app:exp2_details}

\subsubsection{Random Seeds}
The CIFAR-10 script set NumPy and PyTorch seeds to \texttt{42}.

\subsubsection{Data Pipeline and Augmentations}
All CIFAR-10 experiments used the standard SSL augmentation pipeline:
\begin{itemize}[label=-]
    \item \texttt{RandomResizedCrop(32)},
    \item \texttt{RandomHorizontalFlip()},
    \item \texttt{ColorJitter(0.4, 0.4, 0.4, 0.1)} applied with probability $0.8$,
    \item \texttt{RandomGrayscale(p=0.2)},
    \item channel normalization with CIFAR-10 statistics.
\end{itemize}
For linear probing, the supervised augmentation pipeline used random crop with padding 4 and random horizontal flip.

\subsubsection{Shared Backbone}
All CIFAR-10 models used a modified ResNet-18 backbone adapted to $32\times 32$ images:
\begin{itemize}[label=-]
    \item the original $7\times 7$ stride-2 convolution was replaced by a $3\times 3$ stride-1 convolution with padding 1,
    \item the initial max-pooling layer was removed,
    \item global average pooling produced a 512-dimensional feature vector,
    \item a projection head mapped $512\rightarrow 512 \rightarrow 128$ via
    \[
    \texttt{Linear}(512,512)\rightarrow \texttt{BatchNorm1d}\rightarrow \texttt{ReLU}\rightarrow \texttt{Linear}(512,128).
    \]
\end{itemize}

\subsubsection{Experiment 2a: SMC Memory Banks vs.\ FIFO}
This long-horizon experiment compared MoCo v2 and SMC-GMJE under a severely constrained bank:
\begin{itemize}[label=-]
    \item \textit{Training duration:} 1000 epochs.
    \item \textit{Batch size:} 256.
    \item \textit{Memory-bank size:} $M=256$.
    \item \textit{EMA momentum:} $m=0.999$.
    \item \textit{Temperature:} $\tau=0.1$.
    \item \textit{Backbone and augmentations:} identical between MoCo v2 and SMC-GMJE.
\end{itemize}
The goal of this experiment was to isolate the effect of the bank update rule. SMC-GMJE replaced FIFO replacement with likelihood-based particle reweighting and multinomial resampling.

\subsubsection{Experiment 2b: 200-Epoch Benchmark Comparison}
The shorter benchmark comparison used a common 200-epoch pre-training budget, followed by 100 epochs of linear probing. Shared settings were:
\begin{itemize}[label=-]
    \item \textit{Batch size:} 256.
    \item \textit{Pre-training epochs:} 200.
    \item \textit{Linear probing epochs:} 100.
\end{itemize}

\paragraph{Baseline-specific pre-training hyperparameters.}
\begin{itemize}[label=$\circ$]
    \item \textbf{SimCLR:} SGD, learning rate $0.03$, momentum $0.9$, weight decay $10^{-4}$, temperature $\tau=0.1$.
    \item \textbf{MoCo v2:} queue size $M=4096$, EMA momentum $m=0.999$, temperature $\tau=0.1$, with the same SGD optimizer settings as above.
    \item \textbf{BYOL:} EMA momentum $m=0.99$. The predictor head was
    \[
    \texttt{Linear}(128,256)\rightarrow \texttt{BatchNorm1d}\rightarrow \texttt{ReLU}\rightarrow \texttt{Linear}(256,128).
    \]
    \item \textbf{SwAV:} $K=1000$ prototypes, temperature $\tau=0.1$, three Sinkhorn iterations.
    \item \textbf{VICReg:} variance, invariance, and covariance coefficients $(25.0, 25.0, 1.0)$. Unlike the contrastive-style models, VICReg operated on raw Euclidean features without $L^2$ normalization. The appendix draft indicates LARS with learning rate $0.3$ where available, or SGD with learning rate $0.1$ as fallback.
    \item \textbf{Parametric GMJE:} $K=1000$ global prototypes, temperature $\tau=0.1$, and an EMA-tracked covariance buffer with momentum $0.99$ for the explicit marginal regularizer.
    \item \textbf{GMJE-MDN:} $K=5$ dynamic components. The parameter network used
    \[
    \texttt{Linear}(128,256)\rightarrow \texttt{BatchNorm1d}\rightarrow \texttt{ReLU}\rightarrow \texttt{Linear}(256, K\times 128 + K + K).
    \]
    The covariance outputs were constrained with \texttt{Softplus} $+\,0.05$.
    \item \textbf{SMC-GMJE:} queue size $M=4096$, EMA momentum $m=0.999$, temperature $\tau=0.1$, with likelihood-based particle reweighting and multinomial resampling at each update.
\end{itemize}

\paragraph{Linear probing.}
After pre-training, the encoders were frozen and a linear classifier
\[
\texttt{Linear}(512,10)
\]
was trained for 100 epochs using SGD with learning rate $10.0$, momentum $0.9$, and a cosine annealing schedule.

\subsection{Experiment 3: Generative GMJE via Unconditional Latent Sampling on MNIST} \label{app:exp3_details}

\subsubsection{Random Seeds}
The combined vision/generative script explicitly sets global NumPy and PyTorch seeds to \texttt{42}. In addition, the repeated unconditional sampling phase resets both seeds to \texttt{1000+i} for the $i$-th generated sample grid, for $i=0,\dots,9$.

\subsubsection{Data Pipeline}
MNIST images were padded from $28\times 28$ to $32\times 32$. Two data pipelines were used:
\begin{itemize}[label=-]
    \item \textit{SSL manifold-learning pipeline:} \texttt{RandomResizedCrop} with scale range $[0.8,1.0]$, \texttt{RandomRotation} up to $15^\circ$, conversion to tensor, and normalization with mean $0.5$ and standard deviation $0.5$.
    \item \textit{Plain reconstruction pipeline:} deterministic padding, conversion to tensor, and the same normalization, used for decoder training and latent extraction.
\end{itemize}

\subsubsection{Encoder and Decoder Architectures}

\paragraph{CNN Encoder.}
All three encoder families (SimCLR, Unimodal GJE, Parametric GMJE) used the same CNN backbone:
\begin{itemize}[label=$\circ$]
    \item Conv2d$(1,32,3,\text{ stride }2,\text{ padding }1)$ + BatchNorm2d + ReLU,
    \item Conv2d$(32,64,3,\text{ stride }2,\text{ padding }1)$ + BatchNorm2d + ReLU,
    \item Conv2d$(64,128,3,\text{ stride }2,\text{ padding }1)$ + BatchNorm2d + ReLU,
    \item Conv2d$(128,256,3,\text{ stride }2,\text{ padding }1)$ + BatchNorm2d + ReLU,
    \item flatten,
    \item \texttt{Linear}$(256\times 2\times 2,512)$ + BatchNorm1d + ReLU,
    \item \texttt{Linear}$(512,128)$.
\end{itemize}

\paragraph{CNN Decoder.}
The decoder inverted the latent code back to image space:
\begin{itemize}[label=$\circ$]
    \item \texttt{Linear}$(128,512)$ + ReLU,
    \item \texttt{Linear}$(512,1024)$ + ReLU,
    \item reshape to $256\times 2\times 2$,
    \item four transposed convolutions:
    \[
    256\rightarrow 128\rightarrow 64\rightarrow 32\rightarrow 1
    \]
    each using kernel size $3$, stride $2$, padding $1$, output padding $1$,
    \item BatchNorm2d + ReLU after each intermediate layer,
    \item final \texttt{Tanh} activation.
\end{itemize}

\subsubsection{Training and Sampling Protocol}

\paragraph{Phase 1: encoder pre-training.}
SimCLR, Unimodal GJE, and Parametric GMJE ($K=50$) were each trained for 50 epochs with batch size 256 using Adam with learning rate $10^{-3}$.

\paragraph{Phase 2: decoder training.}
The encoders were frozen, and one decoder per encoder family was trained for 50 epochs using MSE reconstruction loss and Adam with learning rate $10^{-3}$. During decoder training, latent codes were $L^2$-normalized before decoding.

\paragraph{Phase 3: latent density extraction.}
\begin{itemize}[label=-]
    \item \textit{SimCLR:} a post-hoc full-covariance GMM with $K=50$ components was fit to 10{,}000 training embeddings using EM with a maximum of 20 iterations.
    \item \textit{Unimodal GJE:} the empirical mean and covariance were estimated from the full latent distribution.
    \item \textit{Parametric GMJE:} the mixture means were taken directly from the learned prototype matrix, normalized to the hypersphere, and the covariance was extracted from the EMA-tracked marginal-regularizer buffer. The component weights were set uniformly as $\pi_k = 1/50$ in the released script.
\end{itemize}

\paragraph{Phase 4: unconditional latent sampling.}
Synthetic embeddings were sampled directly from the corresponding latent densities, projected back onto the unit hypersphere by $L^2$ normalization, and decoded into image space. The script generated 10 repeated sample grids (Sample 2 was used in Fig.\ref{fig:generative_samples} for demonstration purpose), each using 16 synthetic latent points per method, as well as a separate t-SNE diagnostic using 500 synthetic embeddings and 2000 real test embeddings.

\paragraph{Code availability} All codes can be found at: \href{Github}{https://github.com/YongchaoHuang/GMJE}

\end{document}